\documentclass[letterpaper]{article}
\usepackage[utf8]{inputenc}
\usepackage[preprint]{aaai2027}  

\usepackage[table]{xcolor}
\usepackage[hyphens]{url}  
\usepackage{graphicx} 
\usepackage{natbib}  
\usepackage{caption} 
\usepackage{amsmath}
\usepackage{amssymb}
\usepackage{newunicodechar}
\newunicodechar{（}{(}
\newunicodechar{）}{)}
\usepackage{multirow}
\usepackage{booktabs}
\usepackage{algorithm}
\usepackage{algpseudocode}
\usepackage{booktabs}
\usepackage{threeparttable}

\usepackage{tabularx}
\usepackage{amsthm}
\usepackage[most]{tcolorbox}
\usepackage{enumitem}

\theoremstyle{remark}
\newtheorem{remark}{Remark}
\newtheorem{definition}{Definition}
\newtheorem{theorem}{Theorem}
\newtheorem{corollary}{Corollary}
\newtheorem{lemma}{Lemma}
\newtheorem{proposition}{Proposition}

\definecolor{PromptBlue}{RGB}{25,70,180}
\definecolor{PromptPurple}{RGB}{120,35,145}
\definecolor{PromptRed}{RGB}{190,35,35}
\definecolor{PromptGreen}{RGB}{0,110,70}
\definecolor{PromptFrame}{RGB}{75,75,75}
\newtcolorbox{promptbox}[1][]{
  enhanced,
  breakable,
  colback=black!2,
  colframe=PromptFrame,
  boxrule=0.8pt,
  arc=0pt,
  left=5mm,
  right=5mm,
  top=3mm,
  bottom=3mm,
  fontupper=\small,
  #1
}
\newcommand{\promptrole}[1]{\textcolor{PromptBlue}{#1}}
\newcommand{\promptinput}[1]{\textcolor{PromptPurple}{\texttt{#1}}}
\newcommand{\promptconstraint}[1]{\textcolor{PromptRed}{#1}}
\newcommand{\promptoutput}[1]{\textcolor{PromptGreen}{#1}}

\newcommand{\promptfield}[2]{\textbf{#1}\newline\noindent#2\par\smallskip}

\makeatletter
\renewcommand\subsubsection{%
  \@startsection{subsubsection}{3}{\z@}%
  {-2.0ex plus -0.5ex minus -.2ex}%
  {3pt plus 2pt minus 1pt}%
  {\normalsize\bfseries\raggedright}%
}
\makeatother

\usepackage{newfloat}
\usepackage{listings}
\usepackage{soul}
\DeclareCaptionStyle{ruled}{labelfont=normalfont,labelsep=colon,strut=off} 
\floatstyle{ruled}
\newfloat{listing}{tb}{lst}{}
\floatname{listing}{Listing}

\title{DGA$_2$D: Directed Graph-Guided Automated Algorithm Design with Large Language Models}
\author {
    Jiale Zhao\equalcontrib\textsuperscript{\rm 1},
    Zimu Chen\equalcontrib\textsuperscript{\rm 1},
    Sirui Mao\textsuperscript{\rm 2},
    Wentao Yang\textsuperscript{\rm 2},
    Yuxiang Bai\textsuperscript{\rm 1}.
    Liyuanjun Lai\textsuperscript{\rm 1}\corresponding
}
\affiliations {
    \textsuperscript{\rm 1}School of Automation Science and Electrical Engineering, Beihang University, Beijing, China\\
    \textsuperscript{\rm 2}School of Computer Science and Engineering, Beihang University, Beijing, China\\
    \{Zhaojl27, 24373298, 24371337, 24371256, lailiyuanjun\}@buaa.edu.cn, \
}

\begin{document}

\maketitle

\begin{abstract}
The rapid development of Large Language Models (LLMs) has opened new avenues for Automated Heuristic Design (AHD) for solving NP-hard combinatorial optimization problems (COPs). 
However, existing LLM-driven AHD methods are largely confined to rigid solver templates, relegating the search process to isolated module tuning. 
Transitioning to fully autonomous, system-level algorithm design is essential but fraught with low reliability of generated operators, extremely large search spaces, and ineffective credit assignment. 
To overcome these drawbacks, this paper proposes a Directed Graph-Guided Automated Algorithm Design framework, termed $\text{DGA}_2\text{D}$. It structures the open-ended program space as a directed graph, where each node represents a functional operator that can be instantiated using one of multiple candidate code implementations, while directed walks constitute complete algorithmic pipelines. 
A first-order path-dependent credit assignment mechanism is introduced to evaluate code variations strictly based on their topological context. 
Extensive experiments across 12 distinct COPs—ranging from complex scheduling to routing—demonstrate the consistent empirical advantages of $\text{DGA}_2\text{D}$. 
It reduces the average normalized gap by up to 10.96 percentage points compared to state-of-the-art LLM baselines.
\end{abstract}


\section{1 Introduction}

Numerous practical combinatorial optimization problems (COPs), such as flexible job-shop scheduling and open vehicle routing, are NP-hard. Unless $\mathrm{P}=\mathrm{NP}$, no polynomial-time exact algorithms are known for these problems. 
As these problems are key determinants of the efficiency of many complex systems, different kinds of algorithms, such as mathematical programming algorithms, evolutionary algorithms, and reinforcement learning algorithms are designed to find high-quality feasible solutions in a limited time.

The no-free-lunch theorems suggest that no optimization algorithm is universally superior across all possible problems, motivating the development of methods that exploit problem-specific structure~\cite{wolpert2002no}. Despite the development of numerous problem-specific optimization algorithms, designing effective and extensible algorithms remains
difficult and labor-intensive. Moreover, the optima of many classical benchmark problems remain unknown.

\begin{figure}[t]
    \centering 
    \includegraphics[width=1.0\columnwidth]{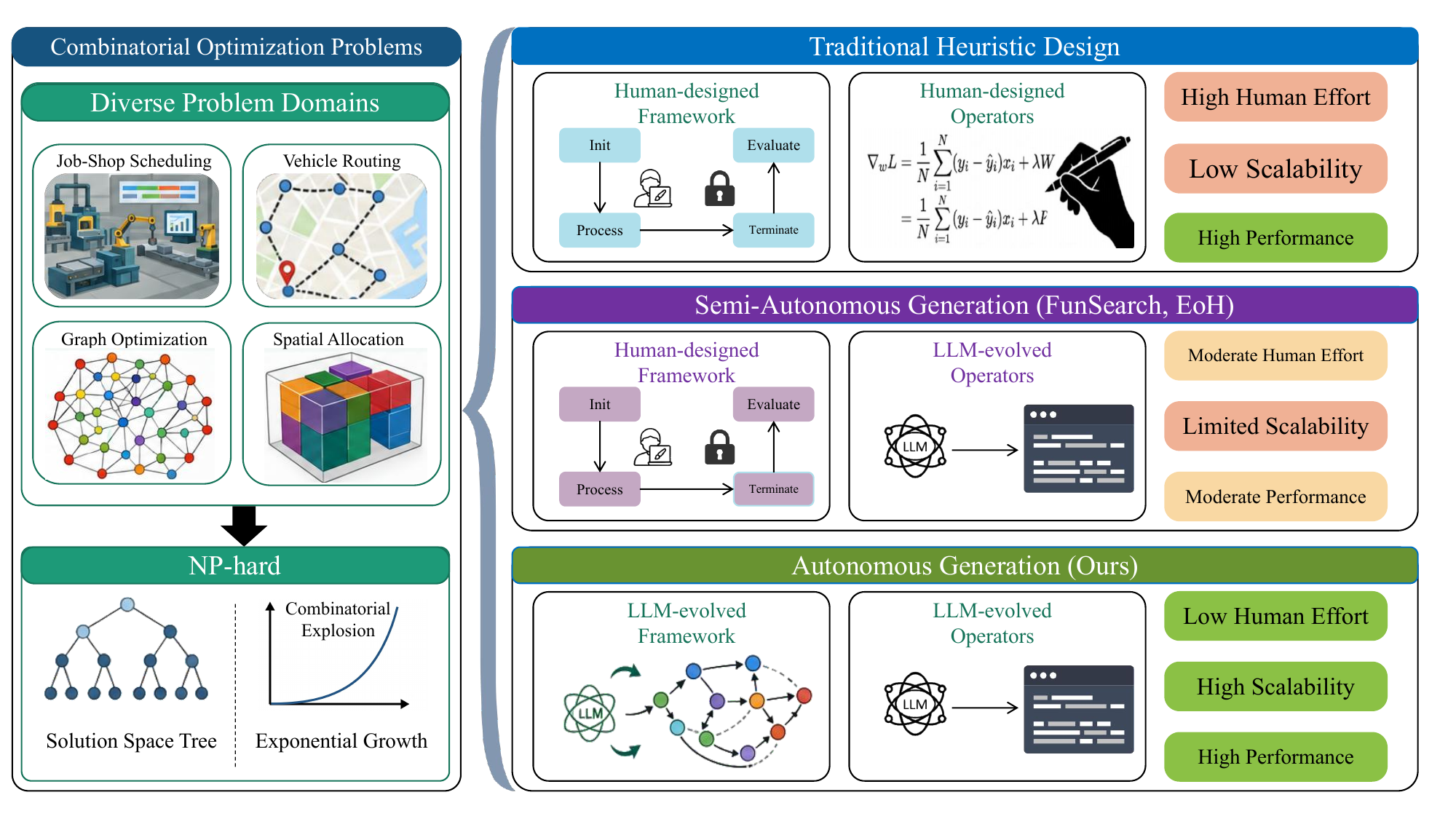}
    \caption{Evolution of Automated Heuristic Design (AHD). \textbf{Left:} NP-hard problem domains and exponentially growing solution spaces. \textbf{Right:} Comparison of AHD paradigms. Unlike \textbf{Traditional Heuristic Design} and \textbf{Semi-Autonomous Generation} approaches, our \textbf{Autonomous Generation} evolves both the complete algorithmic structure and operators.}
\label{fig:framework_comparison}
\end{figure}

To achieve a more efficient and convenient heuristic algorithm design process, researchers proposed the concept of AHD, which soon became an active research area \cite{burke2013hyper}. 
The efforts devoted to AHD can mainly be classified into three routes: heuristic selection, heuristic configuration, and heuristic composition. 
Despite substantial progress, traditional heuristic-design approaches still rely heavily on code modules crafted by human experts \cite{drake2020recent}. As summarized in Figure 1, they typically require \textbf{high human effort} and exhibit \textbf{low scalability}, although they can achieve \textbf{high performance} on specific problems.

The rapid development of Large Language Models (LLMs) presents a new opportunity for AHD \cite{liu2026systematic}. 
LLMs have enabled fully automated generation and refinement of heuristics, along with their associated code implementations. Representative methods include FunSearch \cite{romera2024mathematical} and EoH \cite{liu2024evolution}.
Attempts have also been made to couple LLMs with Evolutionary Computation (EC) to produce heuristics in a more automated manner \cite{liu2024evolution, dat2025hsevo, chen2026a2dept}. However, existing LLM-driven AHD methods operate primarily within rigid solver templates. As illustrated in Figure 1, such semi-autonomous approaches generally require \textbf{moderate human effort}, offer \textbf{limited scalability}, and achieve \textbf{moderate performance}.

To overcome the above structural bottlenecks, a natural progression is to use LLMs as both the framework architect and the operator designer. 
As illustrated in Figure 1, allowing the LLM to redesign both the algorithmic framework and its operators opens a richer design space, with the goal of achieving \textbf{low human effort}, \textbf{high scalability}, and \textbf{high performance}.

However, expanding the evolution target from isolated heuristics to complex algorithmic architectures composed of multiple operators introduces three fundamental challenges. 

\begin{enumerate}
\item \textbf{Low reliability of generated algorithms.}
Complete algorithms consist of long, interdependent operator
sequences and are therefore prone to compilation errors,
interface mismatches, and logical failures.

\item \textbf{Extremely large search spaces.}
Valid and high-performing algorithms are sparse in open-ended
program spaces, making efficient exploration difficult.

\item \textbf{Ineffective credit assignment.}
Because performance is observed only after evaluating complete
algorithms, determining which local modifications are responsible
for an improvement is difficult.
\end{enumerate}

To overcome the above challenges, this paper proposes a directed graph-guided automated algorithm design framework, termed DGA\textsubscript{2}D, for open-ended algorithm design that optimizes complete algorithms with multiple interconnected operators rather than isolated heuristics. 
The main contributions of this paper are as follows.

\noindent 1) It structures the algorithm as a Directed Graph (DG), where individual nodes represent operators, each associated with multiple candidate implementations, while directed walks constitute complete algorithmic pipelines. 
The LLM drives this evolution through three straightforward actions. 
First, it edits the actual code inside each step. 
Second, it rearranges how different steps connect to form new topological paths. 
Third, it picks the most suitable code variation for the specific path being taken. 

\noindent 2) A first-order path-dependent credit assignment mechanism is integrated into the algorithm generation process. 
Credit is assigned to each code implementation according to  its context within the evolutionary process, enabling a quantitative assessment of each code implementation’s contribution to the final algorithm. 

\noindent 3) Comprehensive experiments are conducted across 12 representative COPs, with comparisons against classical solvers, problem-specific learning-based methods, and state-of-the-art LLM-driven AHD frameworks. The results show that DGA\textsubscript{2}D consistently outperforms the LLM-driven baselines across four representative benchmarks, reducing the average normalized gap by 9.67 and 10.96 percentage points under DeepSeek-V4-Flash and GPT-5.6 sol, respectively.

\section{2 Related Work}
\subsection{2.1 Heuristics for Combinatorial Optimization}
Existing heuristic approaches for COPs can be broadly summarized into three research lines. Evolutionary algorithms use population-based random search to explore large solution spaces \cite{huang2023evolutionary,wang2023sustainable}. Reinforcement-learning-based methods learn adaptive policies through interaction with optimization environments \cite{ye2023deepaco,zhang2024deep}. 
Machine-learning and deep-learning-based methods use learned models, such as Transformers and diffusion models, to construct or refine solutions \cite{sun2023difusco,li2024fast}. These approaches depend heavily on manually designed structures, operators, and learning objectives, which limits their applicability to specific problem classes \cite{bengio2021machine}.

\subsection{2.2 Automated Heuristic Design}
Driven by the demand for more efficient algorithm design, AHD has emerged as an active research area \cite{burke2013hyper}. 
Classical AHD is largely studied through hyper-heuristics, which generally follow three paradigms: heuristic configuration, heuristic selection, and heuristic composition. 
Heuristic configuration tunes parameters or control policies within a predefined algorithmic framework \cite{brandt2023ac,guo2025configx}. Heuristic selection adaptively chooses low-level heuristics from a fixed portfolio, increasingly using reinforcement learning and combined offline--online experience \cite{zhao2023selection,pei2024learning}. Heuristic composition constructs new procedures with data-driven component selection and modular metaheuristic generation \cite{dokeroglu2024hyper,xue2025automated,contreras2025automated}. Despite reducing manual design effort, these AHD approaches remain constrained by human-defined operators and search spaces \cite{drake2020recent}.

\subsection{2.3 Automated Heuristic Design Using LLMs}
Recently, LLMs have been applied to AHD, leveraging their zero-shot capabilities in code generation to synthesize algorithmic heuristics from natural language \cite{liu2026systematic}. 
Early studies explored LLMs for automated heuristic and algorithm generation \cite{liu2023algorithm}. Related work also demonstrated LLM-driven optimization in reward design and black-box optimization \cite{ma2024eureka,yang2024large}. Standalone prompt-driven approaches often suffer from limited long-context reasoning and lack systematic exploration mechanisms to escape local optima.

To overcome these limitations, recent work has coupled LLM-based generation with iterative search, including population-based evolution \cite{romera2024mathematical,liu2024evolution,van2024llamea}, reflection-guided refinement \cite{ye2024reevo}, multi-objective population management \cite{dat2025hsevo,yao2025multi}, and tree-structured exploration \cite{zheng2025monte}.

In summary, existing methods optimize heuristics within predefined frameworks, which improves reliability but limits their performance. To move beyond these rigid frameworks, recent work explores LLMs as system-level algorithm architects \cite{chen2026a2dept}; however, reliably generating complete and consistently competitive algorithms remains challenging.

\section{3 Methodology}
\subsection{3.1 Overview of the DGA\textsubscript{2}D Framework}
The overall framework of DGA\textsubscript{2}D is illustrated in Figure 2. It represents the algorithm search space as a DG in which both the topology and the operator implementations are iteratively refined through a closed-loop LLM-driven process.

The algorithm generation process consists of four parts: LLM-based problem initialization, graph-guided candidate algorithm construction, evaluation and credit assignment, and credit-guided dual-level evolution, as detailed in the following subsections.

\subsection{3.2 LLM-based Initialization}
\begin{figure*}[t]
    \centering
    \includegraphics[width=1.0\textwidth]{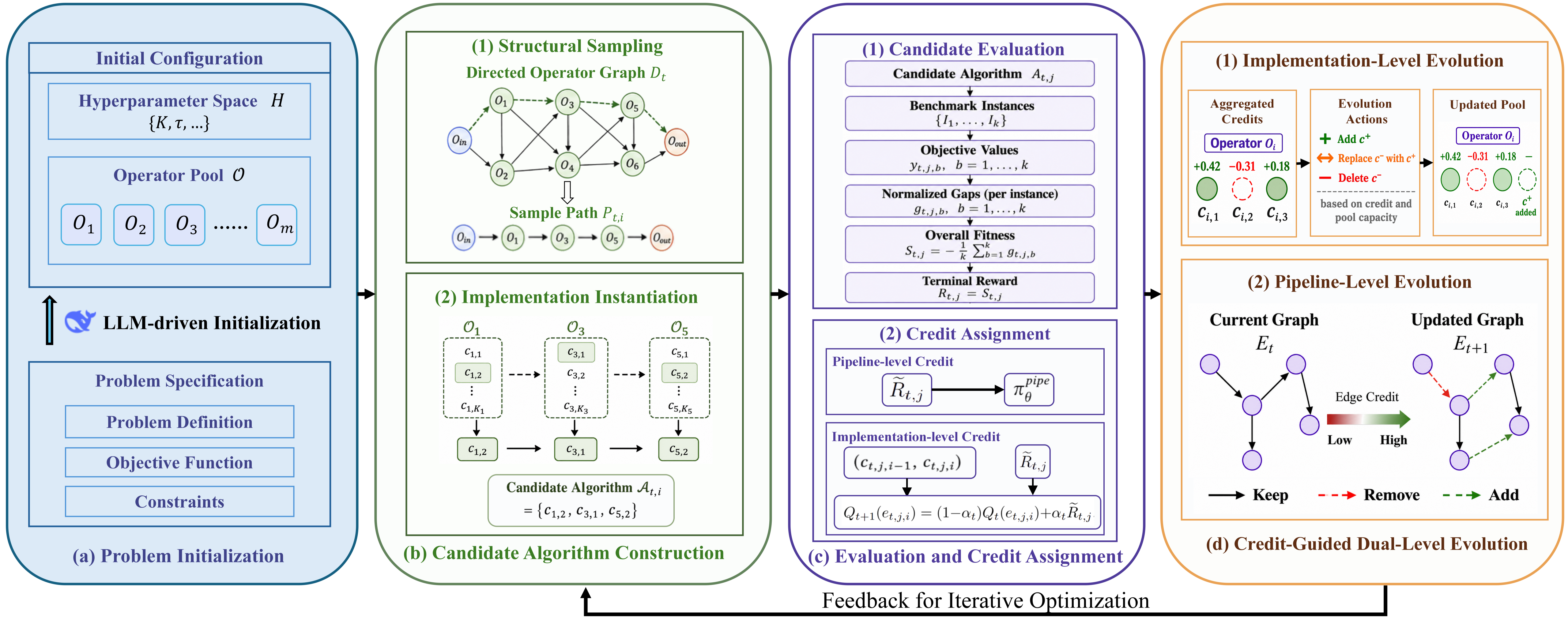}
    \caption{Overview of the DGA$_2$D framework.
(a) LLM-driven initialization of problem configuration and operator pool.
(b) Candidate algorithm construction by sampling operator structures and implementations.
(c) Hierarchical credit assignment based on candidate algorithm performance.
(d) Dual-level evolution of operator implementations and algorithmic pipelines guided by aggregated credits.}
    \label{fig:framework}
\end{figure*}

\label{sec:problem_init}
As many algorithms are highly sensitive to specific problem domains, this paper introduces an LLM-driven initialization phase to take the natural language description of target problem as input, setting up the initial hyperparameter space and the foundational operator pool for algorithm evolution, as demonstrated in Figure~2a.

The initial hyperparameter space is defined as $\mathcal{H}=\{K,\tau,\dots\}$, where $K\in\mathbb{N}^+$ specifies the maximum number of functional operators maintained in the operator pool, and $\tau\in(0,1]$ is the sampling temperature used to balance exploration and exploitation.

The foundational operator pool $\mathcal{O}=\{O_1,\dots,O_m\}$ consists of reusable operators selected by the LLM from a fixed repository constructed from existing algorithms and domain-specific literature. Each operator $O_i\in\mathcal{O}$ is associated with multiple code implementations that realize fine-grained heuristic actions, such as local-search moves and greedy assignment rules. Together, $\mathcal{H}$ and $\mathcal{O}$ initialize the evolutionary search in a problem-relevant region of the algorithm space.

\subsection{3.3 Graph-Guided Algorithm Construction}
\label{sec:workflow}

As shown in Figure~2b, the proposed DGA\textsubscript{2}D evolves both operator
implementations and operator connectivity. The resulting DG defines a
shared search space in which a fixed number of candidate
algorithms are generated and subsequently evaluated according to the procedure defined in
Section~3.4.

\subsubsection{Directed-Graph Search-Space Representation.} 

Suppose the maximum generation of the DGA\textsubscript{2}D process be $t_{max}$. 
At each generation $t\in[1, t_{max}]$, the search space is modeled as a directed graph $\mathcal{D}_t=(\mathcal{O}_t,\mathcal{E}_t)$. 
Each node ${O}_i\in\mathcal{O}_t$ represents a functional operator. A directed edge $(O_i,O_j)\in\mathcal{E}_t$ indicates that
$O_j$ is executed after $O_i$.

An operator \(O_i\) is associated with a pool of alternative code implementations,
denoted by
\[C_i^{(t)}=\{c_{i,1},c_{i,2},\ldots,c_{i,|C_i^{(t)}|}\}.\]

Each operator can be formulated by one implementation in \(C_i^{(t)}\).
The number of candidate implementations maintained by each operator is bounded by \(M\), i.e.,\[|C_i^{(t)}|\leq M.\]

Each code implementation belongs to one and only one operator, so the implementation pools are disjoint:
$$
\mathcal{C}_i^{(t)} \cap \mathcal{C}_{i'}^{(t)} = \varnothing, \quad i \neq i'.
$$

\subsubsection{Candidate Algorithm Construction.} 
For each generation $t\in [1,t_{\max}]$, a candidate algorithm is obtained in two stages: pipeline sampling and algorithm instantiation. First, the algorithmic structure is sampled by a bounded directed walk, which is defined as a pipeline $P_{t,j}$.

\begin{equation}
P_{t,j}
=
\left(
O_{\mathrm{in}},
O_{t,j,1},
\ldots,
O_{t,j,L_{t,j}},
O_{\mathrm{out}}
\right),
\label{eq:pipeline}
\end{equation}
where $O_{\mathrm{in}}$ and $O_{\mathrm{out}}$ are sentinel nodes
representing the input and output of the pipeline, respectively. They do
not correspond to functional operators and therefore require no
code implementations. A valid pipeline satisfies
\begin{equation}
\begin{aligned}
&\left(O_{\mathrm{in}},O_{t,j,1}\right)\in\mathcal{E}_t,\\
&\left(O_{t,j,i},O_{t,j,i+1}\right)\in\mathcal{E}_t,
\quad i=1,\ldots,L_{t,j}-1,\\
&\left(O_{t,j,L_{t,j}},O_{\mathrm{out}}\right)\in\mathcal{E}_t,\\
&1\leq L_{t,j}\leq L_{\max}.
\end{aligned}
\label{eq:pipeline_constraints}
\end{equation}

It is noted that repeated operators are permitted in a pipeline to enable iterative algorithmic structure. 
At each generation, in total of $N$ pipelines $\mathcal{P}_t=\{P_{t,1},\ldots,P_{t,N}\}$ are sampled according to a credit-guided policy introduced in Section 3.4. 

Each pipeline is instantiated as a candidate algorithm by selecting a code implementation for each operator involved in the pipeline.
Specifically, for each operator $O_{t,j,i}$, one implementation \(\hat{c}_{t,j,i}\) is sampled according to an implementation-selection probability \(p_t\), which is defined in Section~3.4, too.

Then, a candidate algorithm is represented as a sequence of code implementations $A_{t,j}=\left( \hat{c}_{t,j,1}, \ldots, \hat{c}_{t,j,L_{t,j}} \right)$,
where $L_{t,j}$ denotes the length of the sampled pipeline.
Repeating this procedure for $j=1,\ldots,N$ yields a population of algorithms
$\mathcal{A}_t=\left\{ A_{t,1},\ldots,A_{t,N} \right\}$.

\subsection{3.4 Problem Evaluation and Credit Assignment}

To support the pipeline sampling and algorithm instantiation, a credit assignment mechanism is defined for algorithm-level, pipeline-level, and code-implementation-level evaluation.

\paragraph{Algorithm-level evaluation.}

An algorithm $A_{t,j}$ is evaluated on $k$ benchmark instances. For an instance $b\in[1, k]$, let $y_{t,j,b}$ denote the objective value of the solution obtained by $A_{t,j}$, and $y_b^{\mathrm{ref}}$ denote the reference value, which can be either the optimum or the best known objective value of the instance. 
A normalization scale is defined as $s_b = \max\left\{\left|y_b^\mathrm{ref}\right|,1\right\}$ to prevent numerical instability when the reference value is
zero or close to zero.

Then, the direction-aware normalized gap is defined as
\begin{equation}
g_{t,j,b}
=
\begin{cases}
\displaystyle
\frac{y_{t,j,b}-y_b^{\mathrm{ref}}}{s_b},
& \text{for minimization},\\[9pt]
\displaystyle
\frac{y_b^{\mathrm{ref}}-y_{t,j,b}}{s_b},
& \text{for maximization}.
\end{cases}
\label{eq:directional_gap}
\end{equation}
where \(g_{t,j,b}>0\) indicates a worse performance than the reference, whereas \(g_{t,j,b}<0\) indicates an improvement upon the reference.

Suppose that all benchmark instances are equally important, the
overall fitness of an algorithm is defined as
\begin{equation}
S_{t,j}
=
-\frac{1}{k}
\sum_{b=1}^{k}
g_{t,j,b}.
\label{eq:candidate_fitness}
\end{equation}

A larger $S_{t,j}$ indicates better overall performance.

\paragraph{Terminal reward.}
As \(S_{t,j}\) is a normalized value and follows a larger-is-better
convention, it is applied as the terminal reward of an algorithm:
\begin{equation}
\widetilde{R}_{t,j}=\frac{S_{t,j}-\mu_t}{\sigma_t+\varepsilon},\label{eq:terminal_reward}
\end{equation}
where \(\mu_t\) and \(\sigma_t\) are the mean and standard deviation of
the rewards obtained in generation \(t\), and $\varepsilon > 0$ is a small constant introduced for numerical stability. Thus, \(\widetilde{R}_{t,j}>0\) indicates the algorithm performs above-average, whereas
\(\widetilde{R}_{t,j}<0\) indicates it performs below-average.

\paragraph{Pipeline-level credit assignment.}
The standardized terminal reward
$\widetilde{R}_{t,j}$ is then assigned to the sampled pipeline as its credit and used for updating the pipeline sampling policy (defined as $\pi_{\theta}^{\mathrm{pipe}}$):

\begin{equation}
\mathcal{L}_{\mathrm{pipe}}(\theta)
=
-\frac{1}{|\mathcal{P}_t|}
\sum_{j\in[1,N]}
\widetilde{R}_{t,j}
\log
\pi_{\theta}^{\mathrm{pipe}}
\left(P_{t,j}\mid \mathcal{D}_t\right).
\end{equation}

Minimizing this loss encourages
higher sampling probabilities for pipelines associated with positive
pipeline-credit values. The factorization and optimization details of
$\pi_{\theta}^{\mathrm{pipe}}$ are provided in Appendix~D.1.

\paragraph{Code-implementation-level credit assignment.}

The utility of an algorithm $A_{t,j} = (c_{t,j,1}, \ldots, c_{t,j,L_{t,j}})$ largely depends on its code implementation. 
Suppose $c_{t,j,0}=c_{\mathrm{start}}$ and the \(i\)-th activated
transition is $e_{t,j,i} = \left( c_{t,j,i-1}, c_{t,j,i} \right)$. We define the credit of a first-order implementation transition $e$ as
$Q_t(e)$, which estimates the expected standardized terminal reward
associated with that transition. Initially, $Q_0(e)=0$ for unseen transitions. 

For every transition activated in the sampled code implementations,
its credit is updated using the standardized terminal reward $\widetilde{R}_{t,j}$:
\begin{equation}
Q_{t+1}(e_{t,j,i})
=
(1-\alpha_t)Q_t(e_{t,j,i})
+
\alpha_t\widetilde{R}_{t,j},
\quad
i=1,\ldots,L_{t,j},
\end{equation}
where \(\alpha_t\in(0,1)\) is the learning rate.

Therefore, we introduce a first-order Markov approximation in defining the code implementation sampling policy as \[
\pi_t^{\mathrm{code}}\!\left(c_{t,j,i} \mid c_{t,j,i-1}\right)
=
\operatorname{Softmax}\!\left(
\frac{Q_t\!\left(e_{t,j,i}\right)}{\tau}
\right).
\] The sampling of $c_{t,j,i}$ is only determined according to its immediate predecessor $c_{t,j,i-1}$,  $\tau>0$ is the temperature parameter which controls the
sharpness of the code-implementation sampling distribution.


\subsection{3.5 Credit-Guided Dual-Level Evolution}

Suppose all of the activated
transitions that involve a code implementation $c$ selected in generation $t$ formulates a transition set $\mathcal{E}$, the empirical credit of $c$ is defined as
\begin{equation}
\bar{Q}_{t}(c)=
\frac{\sum_{e\in\mathcal{E}} n_t(e)Q_{t}(e)}
{\sum_{e\in\mathcal{E}} n_t(e)+\epsilon},
\end{equation}
where $e$ is a transition in $\mathcal{E}$ and $n_t(e)$ is the activation count of this transition and $\epsilon>0$ is a small constant used to avoid division by zero.
Similarly, the empirical credit of operator $O_i$ is defined as
\begin{equation}
\bar{Q}_{t}(O_i)=
\frac{\sum_{c\in\mathcal{C}_i^{(t)}}
\bar{Q}_{t}(c)}
{|\mathcal{C}_i^{(t)}|},
\end{equation}
where $c$ denotes all possible code implementation that can be selected for $O_i$.

The target operator is selected as
\begin{equation}
O^\star=\arg\min_{O_i\in\mathcal{O}}\bar{Q}_{t}(O_i),
\end{equation}

Defines the lowest-credit implementation of $O^\star$ as $c^-$, which can be calculated by
\begin{equation}
c^-=\arg\min_{c\in\mathcal{C}_{O^\star}^{(t)}}
\bar{Q}_{t}(c).
\end{equation}

If $\bar{Q}_{t}(O^{\star}) \geq 0$ and $|\mathcal{C}_{O^\star}^{(t)}| > 1$, $c^-$ is deleted.
If $\bar{Q}_{t}(O^{\star})<0$ and $|\mathcal{C}_{O^\star}^{(t)}| < M$, a newly generated implementation
$c^{+}$ is added to the candidate pool of $O^{\star}$.
If $\bar{Q}_{t}(O^{\star})<0$ and $|\mathcal{C}_{O^\star}^{(t)}| = M$, $c^{-}$ is replaced by $c^{+}$.

Similarly, suppose a transition between a candidate code $u\in\mathcal{C}_i^{(t)}$ for $O_i$ and a candidate code $c\in\mathcal{C}_j^{(t)}$ for $O_j$ as $e=(u,c)$, the transition credit of $O_i$ and $O_j$ can be 
\begin{equation}
\bar{Q}_{t}(O_i,O_j)
=
\frac{
\sum_{u\in\mathcal{C}_i^{(t)}}
\sum_{c\in\mathcal{C}_j^{(t)}}
n_t(e)Q_{t}(e)
}{
\sum_{u\in\mathcal{C}_i^{(t)}}
\sum_{c\in\mathcal{C}_j^{(t)}}
n_t(e)+\epsilon}.
\end{equation}

At each generation, we identify the removable edge with the
lowest aggregated credit:
\begin{equation}
(O_i^-,O_j^-)
=
\arg\min_{(O_i,O_j)\in\mathcal{E}_t}
\bar{Q}_t(O_i,O_j).
\end{equation}

If $\bar{Q}_{t}(O_i^-,O_j^-)\geq 0$, all existing edges are retained. Otherwise, the edge $(O_i^-,O_j^-)$ is removed and replaced by a new
edge, which is randomly generated by the LLM. The new edge should be currently absent and interface-compatible, so as to ensure a valid pipeline.
If no valid edge exists, the DG remains unchanged.

In summary, contextual credits guide implementation sampling, while their aggregated statistics drive both implementation-level and pipeline-level evolution.

\section{4 Experiments}
\label{sec:experiments}

\subsection{4.1 Experimental Setup}
\label{subsec:setup}
\begin{table*}[t!]
\centering
\caption{Performance of $\text{DGA}_2\text{D}$ compared with baselines across combinatorial optimization benchmarks, grouped by LLM backbones. Obj.: Objective value ($\downarrow$ lower is better, $\uparrow$ higher is better). Gap (\%): normalized gap to the corresponding reference solution. Results are reported as mean $\pm$ standard deviation (10 trials). \textbf{Bold} signifies the best overall result and the best LLM-based result under each backbone.\colorbox{gray!20}{Gray} indicates the best performance under the same LLM API.
}
\label{tab:main_results}
\small
\setlength{\tabcolsep}{4pt}
\begin{tabular}{lcccccccc}
\toprule
\multirow{2}{*}{\textbf{Method}} & \multicolumn{2}{c}{\textbf{TSP}} & \multicolumn{2}{c}{\textbf{FJSP}} & \multicolumn{2}{c}{\textbf{MIS}} & \multicolumn{2}{c}{\textbf{3D-CLP}} \\
\cmidrule(lr){2-3} \cmidrule(lr){4-5} \cmidrule(lr){6-7} \cmidrule(lr){8-9}
 & Obj. ($10^6$) $\downarrow$ & Gap (\%) ± std $\downarrow$ & Obj.($10^5$) $\downarrow$ & Gap (\%) ± std $\downarrow$ & Obj.($10^3$) $\uparrow$  & Gap (\%) ± std $\downarrow$  & Obj.($10^1$) $\uparrow$ & Gap (\%) ± std $\downarrow$\\
\midrule
\textit{\textbf{Baselines}} & & & & & & & & \\
Reference & 1.237 & - & 1.009 & - & 3.244 & - & 2.696 & - \\
ILS & \textbf{1.240} & \textbf{0.48} & 1.123 & 11.31 & 2.997 & 8.41 & 2.577 & 5.51 \\
Neural* & 1.301 & 5.13 & 1.935 & 71.47 & 2.695 & 17.97 & 2.571 & 5.82 \\
OR-Tools & 1.247 & 0.76 & 1.076 & 8.82 & 3.044 & 7.45 & 2.563 & 6.10 \\
\midrule
\multicolumn{9}{l}{\textit{\textbf{LLM: DeepSeek-V4-Flash}}} \\
EoH & 1.496 & 19.73$\pm$0.49 & 1.352 & 28.76$\pm$4.45 & 2.759 & 16.31$\pm$0.53 & 2.670 & 0.88$\pm$0.48 \\
MCTS-AHD & 1.527 & 21.22$\pm$2.24 & 1.210 & 24.59$\pm$5.25 & 2.733 & 18.16$\pm$1.59 & 2.671 & 0.90$\pm$0.28 \\
MEoH & 1.484 & 19.17$\pm$0.91 & 1.330 & 26.37$\pm$2.33 & 2.787 & 16.48$\pm$0.49 & 2.672 & 0.73$\pm$0.02 \\
ReEvo & 1.467 & 18.20$\pm$3.91 & 1.311 & 24.91$\pm$5.47 & 2.614 & 21.89$\pm$3.15 & 2.668 & 1.07$\pm$0.26 \\
\rowcolor{gray!20}$\text{DGA}_2\text{D}$ (Ours) & 1.307 & 5.24$\pm$2.79 & 1.107 & 9.94$\pm$1.85 & 2.870 & 10.40$\pm$1.80 & \textbf{2.687} & \textbf{0.51$\pm$0.76} \\
\midrule
\multicolumn{9}{l}{\textit{\textbf{LLM: GPT-5.6 Sol}}} \\
EoH & 1.465 & 20.10$\pm$1.08 & 1.327 & 26.80$\pm$1.30 & 2.769 & 16.68$\pm$0.29 & 2.677 & 0.71$\pm$0.02 \\
MCTS-AHD & 1.558 & 18.50$\pm$4.66 & 1.297 & 23.32$\pm$6.58 & 2.758 & 15.16$\pm$1.16 & 2.668 & 1.05$\pm$0.54 \\
MEoH & 1.461 & 20.73$\pm$3.34 & 1.309 & 26.63$\pm$0.60 & 2.787 & 16.41$\pm$0.41 & 2.669 & 1.06$\pm$0.53 \\
ReEvo & 1.482 & 22.60$\pm$2.20 & 1.358 & 21.60$\pm$7.06 & 2.783 & 16.78$\pm$0.33 & 2.662 & 1.26$\pm$0.01 \\
\rowcolor{gray!20}$\text{DGA}_2\text{D}$ (Ours) & 1.298 & 4.70$\pm$1.71 & \textbf{1.075} & \textbf{6.54$\pm$2.52} & \textbf{3.057} & \textbf{6.61$\pm$1.52} & 2.686 & 0.53$\pm$0.37 \\
\bottomrule
\end{tabular}
\begin{tablenotes}
    \small
    \item * “Neural” denotes learning-based methods.
\end{tablenotes}
\end{table*}

\paragraph{Benchmarks.}
This paper evaluates $\text{DGA}_2\text{D}$ on a comprehensive benchmark
suite covering 12 NP-hard combinatorial optimization domains:
\textbf{resource scheduling} (Job-Shop Scheduling (JSP),
Flexible Job-Shop Scheduling (FJSP), Flow-Shop Scheduling (FSSP),
Open-Shop Scheduling (OSSP), Resource-Constrained Project Scheduling
(RCPSP), and Simple Assembly Line Balancing (S-ALBP));
\textbf{vehicle routing} (the Traveling Salesman Problem (TSP)
and the Capacitated Vehicle Routing Problem (CVRP));
\textbf{spatial allocation} (the 3D Bin Packing Problem (3D-BPP)
and the Three-Dimensional Container Loading Problem (3D-CLP));
and \textbf{graph optimization} (the Maximum Independent Set (MIS)
and Maximum Cut (Max-Cut) problems).

Detailed definitions of all 12 problems are provided in
Appendix~A.2, while the dataset sources, numbers of instances,
and problem-size ranges are reported in Appendix~E.
For detailed analysis in the main text, we select one representative
domain from each category: \textbf{FJSP, TSP, 3D-CLP, and MIS}.
Results on the remaining eight domains are reported in Appendix~F.2, which gives full evaluation on the cross-domain generalizability of
$\text{DGA}_2\text{D}$.

For each domain, the instances are divided into an evolution set and a held-out test set.
For each domain, the evolution and test sets are mutually exclusive. 
The test set is used only for final evaluation after the algorithm generation process is completed; detailed split sizes are provided in Appendix~E.

\paragraph{Baselines.} 
$\text{DGA}_2\text{D}$ is compared against three categories of baselines. 
(1) Classical solvers: OR-Tools CP-SAT solver is adopted as a general-purpose optimization baseline, together with an Iterated Local Search (ILS) baseline \cite{lourencco2018iterated}. (2) Learning-based methods: Representative neural solvers are involved, including POMO \cite{kwon2020pomo} for TSP, ReSched \cite{xiao2026resched} for FJSP, SDDS \cite{sanokowski2025scalable} for MIS and BPP-DRL \cite{hu2017solving} for 3D-CLP.(3) LLM-based AHD methods: Recent LLM-based AHD frameworks, including EoH \cite{liu2024evolution}, MCTS-AHD \cite{zheng2025monte}, MEoH \cite{yao2025multi}, and ReEvo \cite{ye2024reevo}, are also introduced for comparison.

\paragraph{Implementation and Evaluation Protocol.}
$\text{DGA}_2\text{D}$ is implemented in Python~3.13, and all
experiments are conducted on a workstation equipped with an Intel
Core i7-14650HX CPU, 16\,GB of RAM, and an NVIDIA RTX~4090 GPU. 

The sampling temperature $\tau$ of the initialization process is set to 0.7. 
For each problem domain,
the iterative algorithm evolution workflow is limited to 500 LLM calls. 
During the algorithm generation process, all generated algorithms are evaluated on the same benchmark instances under an identical runtime protocol. Each of them is given a wall-clock budget of 90 seconds per instance. 
The ones that fail to compile, raise runtime errors, produce infeasible solutions, or exceed the time limit are assigned the lowest fitness. 

Unless otherwise specified, baseline implementations and hyperparameters follow the official code and the recommended settings reported in the corresponding publications.

To account for the stochasticity of LLM generation and evolutionary search, we perform 10 independent runs with different random seeds and report the mean and standard deviation across these runs.

\paragraph{Evaluation Metrics.}
We report the objective value (Obj.) and the normalized gap relative to the reference solution.
The gap is computed for each test instance after normalizing the optimization direction, so that smaller values consistently indicate better performance. 
Hence, the reported gap is the average over all test instances.
It is also noted that the EP+FF solution is used as the reference for 3D-CLP, while the best-known solutions (BKSs) are used as the references for the rest benchmarks. 

\subsection{4.2 Comparative Studies}
\label{subsec:Comparison_results}

\subsubsection{4.2.1 Convergence Analysis}

Figure \ref{fig:convergence} illustrates the convergence trajectories of $\text{DGA}_2\text{D}$ compared to the LLM-based AHD frameworks. 
First, it can be seen that $\text{DGA}_2\text{D}$ exhibits a pattern of frequent, multi-step descents, ultimately converging to the smallest normalized gap over all instances. 
Second, the continued improvements observed over a long search horizon suggest that increased structural flexibility contributes to escaping local optima and identifying better solutions in later generations.

\begin{figure}[htbp]
    \centering
    \includegraphics[width=0.95\columnwidth]{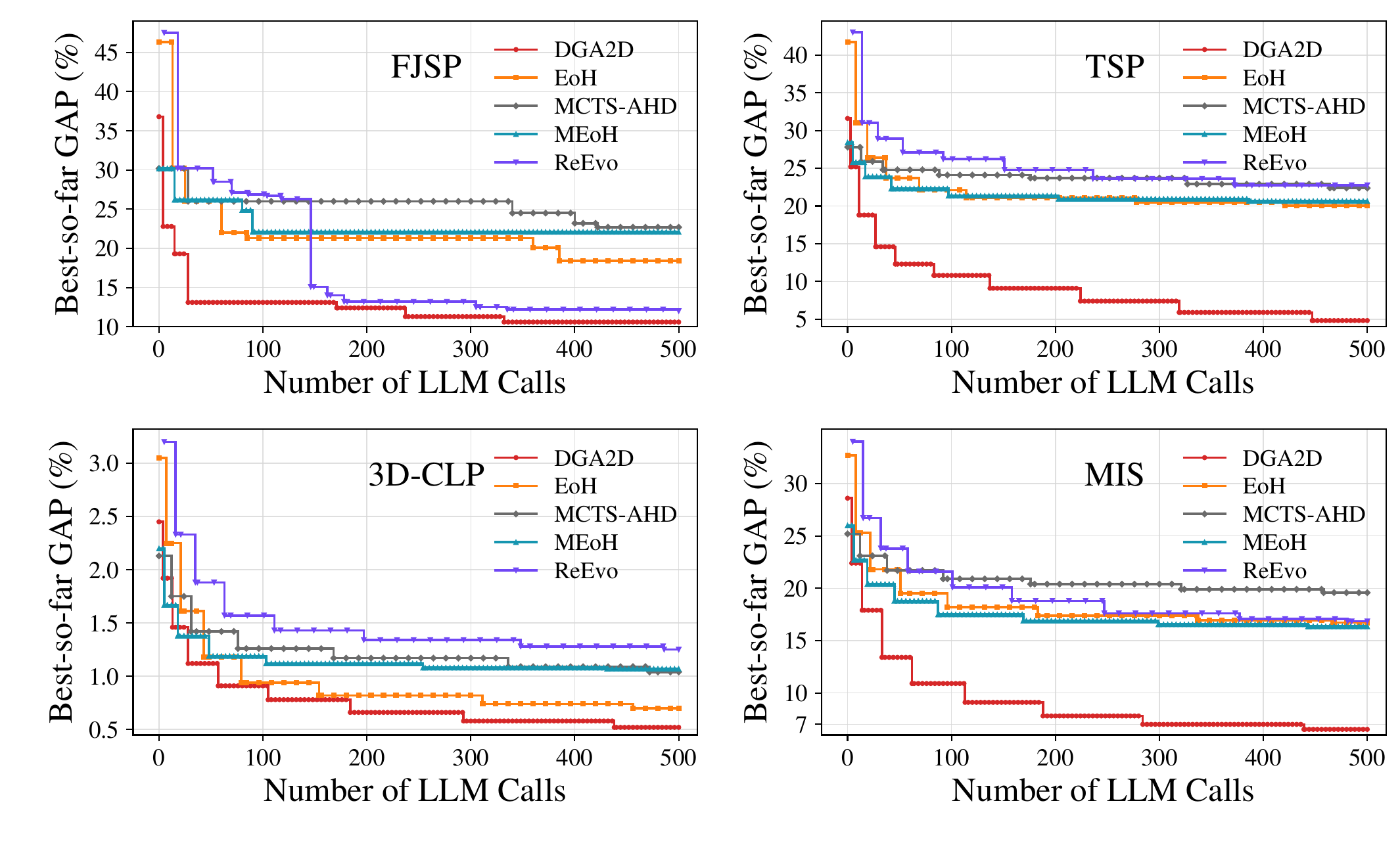}
    
    \caption{Convergence analysis of $\text{DGA}_2\text{D}$ compared with baseline frameworks.}
    \label{fig:convergence}
\end{figure}

\subsubsection{4.2.2 Experiments on Benchmarks}
As shown in Table 1, $\mathrm{DGA}_{2}\mathrm{D}$ outperforms all LLM-based AHD baselines across the four benchmarks under both DeepSeek-V4-Flash and GPT-5.6 Sol, using the same prompt template and evaluation budget. 
Averaged across the four tasks, $\mathrm{DGA}_{2}\mathrm{D}$ reduces the normalized gap from 15.69\% to 6.02\% under DeepSeek-V4-Flash, corresponding to an absolute improvement of 9.67 percentage points over the strongest baseline. 
Under GPT-5.6 Sol, the average gap decreases from 15.56\% to 4.60\%, corresponding to a reduction of 10.96 percentage points. 
The consistent gains across both backbones suggest that $\mathrm{DGA}_{2}\mathrm{D}$ is not tied to a specific LLM. 

Compared to conventional baselines, it achieves the smallest gaps on FJSP and 3D-CLP and, with GPT-5.6 Sol, also outperforms ILS and OR-Tools on MIS. Although specialized solvers remain stronger on TSP, $\mathrm{DGA}_{2}\mathrm{D}$ demonstrates robust performance across diverse combinatorial optimization problems without relying on manually designed, problem-specific search strategies.

\subsection{4.3 Ablation Studies}

\label{subsec:ablation}

\subsubsection{4.3.1 Structural Flexibility}
\label{sec:ablation_structure}

To examine the structural flexibility, we compare four
structures: Fixed, Linear, Directed Acyclic Graph (DAG), and
DG. These variants progressively relax the
constraints on operator composition, with DG additionally allowing
cyclic transitions. All variants are evaluated under the same
search and evaluation budgets.

\begin{table}[htbp]
\centering
\caption{Effect of structural representations on algorithmic convergence. Results are reported as mean $\pm$ standard deviation (10 trials); lower values are better. }
\label{tab:ablation_structure}
\small
\renewcommand{\arraystretch}{1.3} 
\begin{tabular*}{\linewidth}{@{\extracolsep{\fill}} l c c c c @{}}
\toprule
\multirow{2}{*}{\textbf{Structure}} & \multicolumn{4}{c}{\textbf{Gap (\%) }} \\
\cmidrule(lr){2-5}
 & \textbf{CVRP} & \textbf{FJSP} & \textbf{MIS} & \textbf{3D-CLP} \\
\midrule
Fixed  & 8.62$\pm$1.34 & 12.07$\pm$3.73 & 9.06$\pm$1.83 & 1.32$\pm$0.22 \\
Linear & 0.99$\pm$0.44 & 8.45$\pm$2.61  & 8.86$\pm$1.79 & 0.98$\pm$0.17 \\
DAG    & 1.09$\pm$0.49  & 5.34$\pm$0.65  & 6.95$\pm$0.41 & 0.85$\pm$0.19 \\
DG     & \textbf{0.89$\pm$0.40} & \textbf{5.07$\pm$0.20}  & \textbf{6.46$\pm$0.30} & \textbf{0.71$\pm$0.14} \\
\bottomrule
\end{tabular*}
\end{table}

Table~\ref{tab:ablation_structure} shows that increasing structural
flexibility generally improves algorithm-generation performance.
Across all four benchmarks, the DG representation achieves the
lowest test gap, suggesting that cyclic operator transitions
provide useful flexibility for expressing iterative refinement,
repeated local search, and other complex algorithmic behaviors.
Overall, the results support DG as an effective representation for
system-level automated algorithm design.

\subsubsection{4.3.2 Effect of Credit Assignment Mechanisms}

To evaluate the proposed credit assignment mechanism, we experimentally compare different levels of path dependence in credit assignment, including zero-order, first-order, second-order, and full-path settings.

\begin{table}[htbp]
\centering
\caption{Effect of credit assignment mechanisms on algorithmic
convergence. Results are reported as mean $\pm$ standard deviation (10 trials).}
\label{tab:ablation_scoring}
\small
\renewcommand{\arraystretch}{1.3}

\begin{tabular*}{\linewidth}
{@{\extracolsep{\fill}} l c c c c @{}}
\toprule
\multirow{2}{*}{\textbf{Context}}
& \multicolumn{4}{c}{\textbf{Gap (\%)}} \\
\cmidrule(lr){2-5}
& \textbf{TSP}
& \textbf{FJSP}
& \textbf{MIS}
& \textbf{3D-CLP} \\
\midrule
Zero
& 10.73$\pm$3.82
& 15.63$\pm$7.56
& 19.25$\pm$0.53
& 0.82$\pm$0.12 \\

First
& \textbf{7.07$\pm$0.75}
& \textbf{9.20$\pm$0.96}
& \textbf{16.31$\pm$4.07}
& \textbf{0.65$\pm$0.11} \\

Second
& 8.50$\pm$1.59
& 11.44$\pm$2.21
& 17.69$\pm$3.43
& 2.07$\pm$0.63 \\

Full
& 19.80$\pm$5.21
& 19.11$\pm$7.02
& 23.74$\pm$7.36
& 9.72$\pm$3.06 \\
\bottomrule
\end{tabular*}

\vspace{2pt}
\begin{minipage}{\linewidth}
\footnotesize
\textit{Note.} Contexts zero, first, and second denote zero-order,
first-order, and second-order Markov credit assignment,
respectively; Full denotes full-path credit assignment.

\end{minipage}
\end{table}

As shown in Table~\ref{tab:ablation_scoring}, two observations can be made. 
First, path-dependent credit is more effective than path-independent credit. In particular, first-order credit consistently outperforms zero-order credit across all four tasks, indicating that transitions between consecutive code implementations provides a more informative basis for implementation selection.

Second, incorporating longer path histories does not necessarily improve performance. Both second-order and full-path credits perform worse than first-order credit, with full-path credit producing particularly large gaps. This suggests that excessive path dependence may introduce noisy or sparse scoring signals, leading to unstable search and poor convergence.

Overall, first-order path dependence provides an effective balance between informative implementation credit and search stability.

\subsubsection{4.3.3 Sensitivity to Operator Pool Capacity} 
We evaluate how the heuristic operator pool size ($K\in\{4,8,15,20\}$) affects performance across different problem domains. As shown in Table~\ref{tab:sensitivity_vlevel}, $K$ controls the degree of flexibility in operator selection. A small $K$ restricts search diversity and may limit the achievable performance, as observed for $K=4$ across all tasks. Increasing $K$ provides more candidate operators and improves the potential solution quality.

However, an excessively large pool also enlarges the search space and may introduce redundant or competing operators. Consequently, different problems favor different Operator-pool capacities: FJSP performs best at $K=15$, whereas CVRP, MIS, and 3D-CLP obtain their best results at $K=20$. Overall, $K$ should balance search flexibility and optimization stability according to the characteristics of each problem.

\begin{table}[htbp]
\centering
\caption{Sensitivity analysis of the heuristic operator pool size ($K$-Level). Results are reported as mean $\pm$ standard deviation (10 trials).}
\label{tab:sensitivity_vlevel}
\small
\renewcommand{\arraystretch}{1.3} 
\begin{tabular*}{\linewidth}{@{\extracolsep{\fill}} l c c c c @{}}
\toprule
\multirow{2}{*}{\textit{K}\textbf{-Level}} & \multicolumn{4}{c}{\textbf{Gap (\%)}} \\
\cmidrule(lr){2-5}
 & \textbf{CVRP} & \textbf{FJSP} & \textbf{MIS} & \textbf{3D-CLP} \\
\midrule
$K = 4$ & 16.19$\pm$3.42 & 27.01$\pm$4.01 & 8.93$\pm$1.12 & 1.61$\pm$0.27 \\
$K = 8$ & 2.07$\pm$0.51 & 17.79$\pm$2.86 & 7.54$\pm$0.73 & 1.63$\pm$0.19 \\
$K = 15$ & 5.31$\pm$1.38 & \textbf{12.29$\pm$1.71} & 5.50$\pm$0.91 & 1.50$\pm$0.12 \\
$K = 20$ & \textbf{1.07$\pm$0.39} & 12.34$\pm$1.51 & \textbf{4.19$\pm$0.59} & \textbf{1.41$\pm$0.18} \\
\bottomrule
\end{tabular*}
\end{table}

\section {5 Discussion and Conclusion} \label{sec:discussions}
DGA$_2$D extends LLM-driven AHD from isolated heuristic optimization to system-level algorithm evolution. Its directed-graph representation jointly evolves operator implementations and connectivity, while first-order context-dependent credit assignment enables feedback sharing across pipelines without the sparsity of full-path conditioning. Experiments across 12 combinatorial optimization problems demonstrate broad applicability, and results on four representative benchmarks show consistent improvements over LLM-based AHD baselines. Ablation studies further confirm the benefits of directed-graph representations and first-order implementation-transition scoring.

DGA$_2$D remains limited by the computational cost of some generated algorithms, sensitivity to the initial operator pool, and potential cross-module inconsistencies. Future work will focus on more efficient evaluation, adaptive operator-pool construction, and stronger program-invariant
checking.

\section{Acknowledgments}
This paper is supported by National Key Research and Development Program of China (Grant No. 2024YFB3311902), Beijing Natural Science Foundation (Grant No. L241018) and Beijing Nova Program (Grant No. 2024085).

\bibliography{aaai2027}

\onecolumn
\setcounter{secnumdepth}{3}
\appendix
\begin{center}
  {\LARGE\bfseries Supplementary Material}
\end{center}
\vspace{1em}

\section{Baseline Frameworks and Selection Rationale}
This section reviews four representative LLM-based automated heuristic
design baselines and summarizes the mathematical formulations of the
twelve combinatorial optimization problems considered in our experiments.
\subsection{Review of LLM-based AHD Frameworks}

We consider four representative LLM-based automated heuristic design frameworks: EoH~(Liu et al.2024),
MCTS-AHD~(Zheng et al.2025),
MEoH~(Yao et al.2025), and
ReEvo~(Ye et al.2024).
As summarized in Table~\ref{tab:baseline_comparison}, these methods
represent four complementary directions in heuristic evolution:
thought code co-evolution, tree-structured search, multi-objective
diversity-aware optimization, and reflection-guided evolution.

\begin{table}[htbp]
    \centering
    \caption{Characteristics of the selected LLM-based AHD baselines.}
    \label{tab:baseline_comparison}
    \small
    \renewcommand{\arraystretch}{1.15}
    \setlength{\tabcolsep}{5pt}

    \begin{tabularx}{\linewidth}{
        @{}
        >{\raggedright\arraybackslash}p{2.0cm}
        >{\raggedright\arraybackslash}p{2.4cm}
        >{\raggedright\arraybackslash}X
        >{\raggedright\arraybackslash}X
        @{}
    }
        \toprule
        Method &
        Search structure &
        Guidance mechanism &
        Key characteristic \\
        \midrule

        EoH &
        Evolutionary search &
        Fitness-guided variation &
        Thought--code co-evolution \\

        MCTS-AHD &
        Tree-structured search &
        Visit and value statistics &
        Preservation and revisiting of search branches \\

        MEoH &
        Multi-objective search &
        Pareto dominance and AST diversity &
        Diverse non-dominated heuristic set \\

        ReEvo &
        Evolutionary search &
        Short- and long-term reflection &
        Reflection-guided crossover and mutation \\

        \bottomrule
    \end{tabularx}
\end{table}

We select these methods based on methodological representativeness,
experimental comparability, and recency. They generate executable
heuristics and can be evaluated under a common benchmark set, objective
function, and computational budget. Together, they cover complementary
search structures, feedback mechanisms, representations, and
diversity-management strategies, providing a broad basis for evaluating
the proposed method.

Concurrent with our work, $\mathrm{A}_2\mathrm{DEPT}$ also targets
system-level algorithm design through tree-structured evolution of
complete solver programs. In contrast, $\mathrm{DGA}_{2}\mathrm{D}$
represents the solver space as a directed operator graph with
operator-specific implementation pools, constructs complete algorithms
as directed walks, and assigns context-dependent credit to local
implementation transitions. This design supports component reuse and
localized evolution of both graph connectivity and operator
implementations. We do not include $\mathrm{A}_{2}\mathrm{DEPT}$ in the
empirical comparison because, to the best of our knowledge, no official
implementation was publicly available at the time of submission; therefore,
we restrict the unified comparison to methods with publicly available
implementations.

\subsection{Mathematical Formulations of the 12 COPs}

Throughout this subsection, let $\mathbb{B}:=\{0,1\}$.
For the disjunctive scheduling constraints,
$M^{\mathrm{big}}>0$ denotes a sufficiently large constant.

\subsubsection{Scheduling Problem}
\begin{definition}[JSP]
Given a set of jobs $J=\{1,2,\ldots,n\}$ and a set of machines
$\mathcal{M}=\{1,2,\ldots,m\}$, each job $i\in J$ consists of a fixed
sequence of $m$ operations. Let $\sigma_{i,r}\in\mathcal{M}$ denote the
machine required by the $r$-th operation of job $i$, and let
$p_{ij}$ denote the processing time of job $i$ on machine $j$.
Let $S_{ij}\geq 0$ denote the start time of job $i$'s operation on
machine $j$.

For each pair of jobs $i,k\in J$ with $i<k$ and each machine
$j\in\mathcal{M}$, let $x_{ikj}\in\mathbb{B}$ equal 1 if job $i$ is
processed before job $k$ on machine $j$, and 0 otherwise. The JSP is
formulated as
\begin{align}
\min \quad & C_{\max} \\
\text{s.t.}\quad
& S_{i,\sigma_{i,r}} + p_{i,\sigma_{i,r}}
  \leq S_{i,\sigma_{i,r+1}},
&& \forall i\in J,\; r=1,\ldots,m-1, \\
& S_{ij}+p_{ij}
  \leq S_{kj}+M^{\mathrm{big}}(1-x_{ikj}),
&& \forall i,k\in J,\; i<k,\; j\in\mathcal{M}, \\
& S_{kj}+p_{kj}
  \leq S_{ij}+M^{\mathrm{big}}x_{ikj},
&& \forall i,k\in J,\; i<k,\; j\in\mathcal{M}, \\
& S_{i,\sigma_{i,m}}+p_{i,\sigma_{i,m}}
  \leq C_{\max},
&& \forall i\in J, \\
& S_{ij}\geq 0,
&& \forall i\in J,\; j\in\mathcal{M}, \\
& x_{ikj}\in\mathbb{B},
&& \forall i,k\in J,\; i<k,\; j\in\mathcal{M}.
\end{align}
\end{definition}
\begin{definition}[FSSP]
Given a set of jobs $J=\{1,2,\ldots,n\}$ and a set of machines
$\mathcal{M}=\{1,2,\ldots,m\}$, every job must be processed on all
machines in the common order $1,2,\ldots,m$. Let $p_{ij}$ denote the
processing time of job $i$ on machine $j$, and let $S_{ij}\geq 0$
denote its start time.

For each pair of jobs $i,k\in J$ with $i<k$ and each machine
$j\in\mathcal{M}$, let $x_{ikj}\in\mathbb{B}$ equal 1 if job $i$
precedes job $k$ on machine $j$, and 0 otherwise. The FSSP is
formulated as
\begin{align}
\min \quad & C_{\max} \\
\text{s.t.}\quad
& S_{i,j+1}\geq S_{ij}+p_{ij},
&& \forall i\in J,\; j=1,\ldots,m-1, \\
& S_{ij}+p_{ij}
  \leq S_{kj}+M^{\mathrm{big}}(1-x_{ikj}),
&& \forall i,k\in J,\; i<k,\; j\in\mathcal{M}, \\
& S_{kj}+p_{kj}
  \leq S_{ij}+M^{\mathrm{big}}x_{ikj},
&& \forall i,k\in J,\; i<k,\; j\in\mathcal{M}, \\
& S_{i,m}+p_{i,m}\leq C_{\max},
&& \forall i\in J, \\
& S_{ij}\geq 0,
&& \forall i\in J,\; j\in\mathcal{M}, \\
& x_{ikj}\in\mathbb{B},
&& \forall i,k\in J,\; i<k,\; j\in\mathcal{M}.
\end{align}
\end{definition}
\begin{definition}[FJSP]
  Given a set of jobs $J=\{1,2,\ldots,n\}$ and a set of machines
$\mathcal{M}=\{1,2,\ldots,m\}$, each job $i\in J$ consists of an
ordered sequence of operations
$O_{i1},O_{i2},\ldots,O_{i n_i}$.
Each operation $O_{ij}$ can be processed on any machine in its eligible
machine set $\mathcal{M}_{ij}\subseteq\mathcal{M}$, with processing time
$p_{ijk}$ if assigned to machine $k\in\mathcal{M}_{ij}$.
Let $S_{ij}\geq 0$ denote the start time of operation $O_{ij}$.

For every $k\in\mathcal{M}_{ij}$, let $y_{ijk}\in\mathbb{B}$ equal 1
if operation $O_{ij}$ is assigned to machine $k$, and 0 otherwise.
For every $i,h\in J$ with $i<h$, operations $O_{ij}$ and $O_{h\ell}$,
and machine
$k\in\mathcal{M}_{ij}\cap\mathcal{M}_{h\ell}$, let
$x_{ijh\ell k}\in\mathbb{B}$ equal 1 if $O_{ij}$ precedes
$O_{h\ell}$ on machine $k$, and 0 otherwise.
The FJSP is formulated as
\begin{align}
\min \quad
& C_{\max}                                                     \\
\text{s.t.}\quad
& \sum_{k\in\mathcal{M}_{ij}}y_{ijk}=1,
&& \forall i\in J,\; j=1,\ldots,n_i,                           \\
& S_{i,j+1}
  \geq S_{ij}
  +\sum_{k\in\mathcal{M}_{ij}}p_{ijk}y_{ijk},
&& \forall i\in J,\; j=1,\ldots,n_i-1,                         \\
& S_{ij}+p_{ijk}
  \leq S_{h\ell}
  +M^{\mathrm{big}}
   (3-y_{ijk}-y_{h\ell k}-x_{ijh\ell k}),
&& \substack{\forall i,h\in J,\; i<h,\\
              j=1,\ldots,n_i,\;
              \ell=1,\ldots,n_h,\\
              k\in\mathcal{M}_{ij}\cap\mathcal{M}_{h\ell}},     \\
& S_{h\ell}+p_{h\ell k}
  \leq S_{ij}
  +M^{\mathrm{big}}
   (2-y_{ijk}-y_{h\ell k}+x_{ijh\ell k}),
&& \substack{\forall i,h\in J,\; i<h,\\
              j=1,\ldots,n_i,\;
              \ell=1,\ldots,n_h,\\
              k\in\mathcal{M}_{ij}\cap\mathcal{M}_{h\ell}},     \\
& S_{ij}
  +\sum_{k\in\mathcal{M}_{ij}}p_{ijk}y_{ijk}
  \leq C_{\max},
&& \forall i\in J,\; j=1,\ldots,n_i,                           \\
& S_{ij}\geq 0,
&& \forall i\in J,\; j=1,\ldots,n_i,                           \\
& y_{ijk}\in\mathbb{B},
&& \forall i\in J,\; j=1,\ldots,n_i,\;
   k\in\mathcal{M}_{ij},                                       \\
& x_{ijh\ell k}\in\mathbb{B},
&& \substack{\forall i,h\in J,\; i<h,\\
              j=1,\ldots,n_i,\;
              \ell=1,\ldots,n_h,\\
              k\in\mathcal{M}_{ij}\cap\mathcal{M}_{h\ell}}.
\end{align}
\end{definition}

\begin{definition}[OSSP]
Given a set of jobs $J=\{1,2,\ldots,n\}$ and a set of machines
$\mathcal{M}=\{1,2,\ldots,m\}$, each job $i\in J$ has one operation
on every machine $j\in\mathcal{M}$, with processing time $p_{ij}$.
Unlike JSP and FSSP, the machine-processing order of each job is not
predefined. Let $S_{ij}\geq 0$ denote the start time of job $i$'s
operation on machine $j$.

For every $i,k\in J$ with $i<k$ and $j\in\mathcal{M}$, let
$x_{ikj}\in\mathbb{B}$ equal 1 if job $i$ precedes job $k$ on machine
$j$. For every $i\in J$ and $j,\ell\in\mathcal{M}$ with $j<\ell$,
let $y_{ij\ell}\in\mathbb{B}$ equal 1 if job $i$'s operation on
machine $j$ precedes its operation on machine $\ell$. The OSSP is
formulated as
\begin{align}
\min \quad & C_{\max} \\
\text{s.t.}\quad
& S_{ij}+p_{ij}\leq C_{\max},
&& \forall i\in J,\; j\in\mathcal{M}, \\
& S_{ij}+p_{ij}
  \leq S_{kj}+M^{\mathrm{big}}(1-x_{ikj}),
&& \forall i,k\in J,\; i<k,\; j\in\mathcal{M}, \\
& S_{kj}+p_{kj}
  \leq S_{ij}+M^{\mathrm{big}}x_{ikj},
&& \forall i,k\in J,\; i<k,\; j\in\mathcal{M}, \\
& S_{ij}+p_{ij}
  \leq S_{i\ell}+M^{\mathrm{big}}(1-y_{ij\ell}),
&& \forall i\in J,\; j,\ell\in\mathcal{M},\; j<\ell, \\
& S_{i\ell}+p_{i\ell}
  \leq S_{ij}+M^{\mathrm{big}}y_{ij\ell},
&& \forall i\in J,\; j,\ell\in\mathcal{M},\; j<\ell, \\
& S_{ij}\geq 0,
&& \forall i\in J,\; j\in\mathcal{M}, \\
& x_{ikj}\in\mathbb{B},
&& \forall i,k\in J,\; i<k,\; j\in\mathcal{M}, \\
& y_{ij\ell}\in\mathbb{B},
&& \forall i\in J,\; j,\ell\in\mathcal{M},\; j<\ell.
\end{align}
\end{definition}
\begin{definition}[SALBP-1]
    Given a set of tasks $V = \{1, 2, \dots, n\}$, where each task $i \in V$ has a deterministic task time $t_i$. There is a precedence graph $G = (V, E)$, where a directed edge $(i, j) \in E$ indicates that task $i$ must be completed before task $j$ can start. The tasks are to be assigned to a sequence of ordered workstations $k \in \{1, 2, \dots, m_{\max}\}$. Let \(m_{\max}\) be an upper bound on the number of workstations.
Let \(x_{ik}\in\mathbb{B}\) equal 1 if task \(i\) is assigned to
workstation \(k\), and 0 otherwise.
Let \(y_k\in\mathbb{B}\) equal 1 if workstation \(k\) is used, and
0 otherwise.
The number of utilized workstations is
\(m=\sum_{k=1}^{m_{\max}}y_k\).Given a fixed cycle time $C$ (the maximum available time at each station), the Simple Assembly Line Balancing Problem of type 1 (SALBP-1) aims to find an assignment of tasks to workstations that minimizes the total number of utilized workstations $m$. The mathematical model is:$$ \min \sum_{k=1}^{m_{\max}} y_k $$$$ s.t. \quad \sum_{k=1}^{m_{\max}} x_{ik} = 1, \quad \forall i \in V $$$$ \sum_{i \in V} t_i \cdot x_{ik} \leq C \cdot y_k, \quad \forall k \in \{1, 2, \dots, m_{\max}\} $$$$ \sum_{k=1}^{m_{\max}} k \cdot x_{ik} \leq \sum_{k=1}^{m_{\max}} k \cdot x_{jk}, \quad \forall (i, j) \in E $$$$ y_{k+1} \leq y_k, \quad \forall k \in {1, 2, \dots, m_{\max}-1} $$$$ x_{ik} \in \{0, 1\}, \quad y_k \in \{0, 1\} $$
\end{definition}
\begin{definition}[RCPSP]
    Given a project consisting of a set of activities $V = \{0, 1, \dots, n, n+1\}$, where $0$ and $n+1$ represent dummy start and end activities with zero duration. Each actual activity $i \in \{1, \dots, n\}$ has a deterministic processing time $d_i$. The precedence constraints among activities are defined by a set of directed edges $E$, where $(i, j) \in E$ indicates that activity $j$ cannot start until activity $i$ is fully completed. The project requires a set of renewable resources $R$, where each resource $k \in R$ has a constant maximum capacity $K_k$ per time period. Processing activity $i$ requires $r_{ik}$ units of resource $k$ at every time instance during its execution. Let $\mathcal{T}:=\{0,1,\ldots,T_{\max}\}$ denote the discrete
time horizon.
Let \(x_{it}\in\mathbb{B}\) equal 1 if activity \(i\) starts at
time \(t\), and 0 otherwise.
For the dummy activities, set
\(d_0=d_{n+1}=0\) and
\(r_{0k}=r_{n+1,k}=0\) for every \(k\in R\). Given a planning horizon $T_{max}$ (an upper bound of the makespan), the Resource-Constrained Project Scheduling Problem (RCPSP) aims to determine the start time for each activity to minimize the total project duration while satisfying both precedence and resource constraints. The mathematical model is:$$ \min \sum_{t=0}^{T} t \cdot x_{n+1, t} $$$$ s.t. \quad \sum_{t=0}^{T} x_{it} = 1, \quad \forall i \in V $$$$ \sum_{t=0}^{T} t \cdot x_{jt} \geq \sum_{t=0}^{T} (t + d_i) \cdot x_{it}, \quad \forall (i, j) \in E $$$$ \sum_{i \in V} \sum_{\tau = \max\{0, t - d_i + 1\}}^{t} r_{ik} \cdot x_{i\tau} \leq K_k, \quad \forall k \in R, t \in \{0, 1, \dots, T_{max}\} $$$$ x_{it} \in \{0, 1\}, \quad \forall i \in V, t \in \{0, 1, \dots, T_{max}\} $$
\end{definition}
\subsubsection{Vehicle Routing}
\begin{definition}[CVRP]
 Given a complete graph $G = (V, E)$, where $V = \{0, 1, \dots, n\}$ is the set of nodes. Node $0$ represents the central depot, and $C = \{1, 2, \dots, n\}$ represents a set of customers. Each customer $i \in C$ has a deterministic non-negative demand $q_i$. A fleet of $K$ homogeneous vehicles, each with a maximum capacity $Q$, is stationed at the depot. The travel cost (or distance) between node $i$ and node $j$ is given as $c_{ij}$. The Capacitated Vehicle Routing Problem (CVRP) aims to find a set of vehicle routes that minimizes the total travel cost, such that every customer is visited exactly once by exactly one vehicle, all routes start and end at the depot, and the total demand on any route does not exceed the vehicle capacity $Q$. The mathematical model is:$$\min \sum_{i \in V} \sum_{j \in V} c_{ij} \cdot x_{ij}$$$$s.t. \quad \sum_{i \in V, i \neq j} x_{ij} = 1, \quad \forall j \in C$$$$\sum_{j \in V, j \neq i} x_{ij} = 1, \quad \forall i \in C$$$$\sum_{j \in C} x_{0j} \leq K$$$$\sum_{i \in C} x_{i0} \leq K$$$$u_i - u_j + Q \cdot x_{ij} \leq Q - q_j, \quad \forall i, j \in C (i \neq j)$$$$q_i \leq u_i \leq Q, \quad \forall i \in C$$$$x_{ij} \in \{0, 1\}, \quad \forall i, j \in V$$
    
\end{definition}
\begin{definition}[TSP]
     Given a complete graph $G = (V, E)$, where $V = \{1, 2, \dots, n\}$ is a set of nodes (or cities) and $E$ is the set of directed edges connecting them. A travel cost or distance $c_{ij}$ is associated with each edge $(i, j) \in E$. The Traveling Salesman Problem (TSP) aims to find a closed tour of minimal total cost that visits every node exactly once and returns to the starting node. The mathematical model (MTZ formulation) is:$$ \min \sum_{i \in V} \sum_{j \in V, j \neq i} c_{ij} \cdot x_{ij} $$$$ s.t. \quad \sum_{j \in V, j \neq i} x_{ij} = 1, \quad \forall i \in V $$$$ \sum_{i \in V, i \neq j} x_{ij} = 1, \quad \forall j \in V $$$$ u_i - u_j + n \cdot x_{ij} \leq n - 1, \quad \forall i, j \in {2, 3, \dots, n} (i \neq j) $$$$ x_{ij} \in \{0, 1\}, \quad \forall i, j \in V $$$$ 1 \leq u_i \leq n - 1, \quad \forall i \in {2, 3, \dots, n} $$
     
\end{definition}

\subsubsection{Bin Packing}
\begin{definition}[3D-BPP]
Given a set of rectangular items $I=\{1,2,\ldots,n\}$ and a set of
candidate bins $\mathcal{K}=\{1,2,\ldots,m\}$, each item $i\in I$ has
fixed length $l_i$, width $w_i$, height $h_i$, and volume
$v_i=l_iw_ih_i$. All bins are identical, with length $L$, width $W$,
and height $H$. The Three-Dimensional Bin Packing Problem (3D-BPP)
aims to orthogonally pack all items into the minimum number of bins.

Let $s_{ik}\in\mathbb{B}$ equal 1 if item $i$ is assigned to bin $k$,
and 0 otherwise, and let $u_k\in\mathbb{B}$ equal 1 if bin $k$ is used.
Let $(X_i,Y_i,Z_i)$ denote the coordinates of the lower-left-back
corner of item $i$ within its assigned bin. For every pair of items
$i,j\in I$ with $i<j$, let
$a_{ij},b_{ij},c_{ij},d_{ij},e_{ij},f_{ij}\in\mathbb{B}$ represent
the six possible relative positions of the two items along the three
coordinate axes. The 3D-BPP is formulated as
\begin{align}
\min \quad
& \sum_{k\in\mathcal{K}}u_k                                  \\
\text{s.t.}\quad
& \sum_{k\in\mathcal{K}}s_{ik}=1,
&& \forall i\in I,                                            \\
& s_{ik}\leq u_k,
&& \forall i\in I,\; k\in\mathcal{K},                         \\
& u_{k+1}\leq u_k,
&& k=1,\ldots,m-1,                                            \\
& X_i+l_i\leq L,
&& \forall i\in I,                                            \\
& Y_i+w_i\leq W,
&& \forall i\in I,                                            \\
& Z_i+h_i\leq H,
&& \forall i\in I,                                            \\
& X_i+l_i
  \leq X_j+M^{\mathrm{big}}(1-a_{ij}),
&& \forall i,j\in I,\; i<j,                                  \\
& X_j+l_j
  \leq X_i+M^{\mathrm{big}}(1-b_{ij}),
&& \forall i,j\in I,\; i<j,                                  \\
& Y_i+w_i
  \leq Y_j+M^{\mathrm{big}}(1-c_{ij}),
&& \forall i,j\in I,\; i<j,                                  \\
& Y_j+w_j
  \leq Y_i+M^{\mathrm{big}}(1-d_{ij}),
&& \forall i,j\in I,\; i<j,                                  \\
& Z_i+h_i
  \leq Z_j+M^{\mathrm{big}}(1-e_{ij}),
&& \forall i,j\in I,\; i<j,                                  \\
& Z_j+h_j
  \leq Z_i+M^{\mathrm{big}}(1-f_{ij}),
&& \forall i,j\in I,\; i<j,                                  \\
& a_{ij}+b_{ij}+c_{ij}+d_{ij}+e_{ij}+f_{ij}
  \geq s_{ik}+s_{jk}-1,
&& \forall i,j\in I,\; i<j,\; k\in\mathcal{K},                \\
& X_i,Y_i,Z_i\geq 0,
&& \forall i\in I,                                            \\
& u_k\in\mathbb{B},
&& \forall k\in\mathcal{K},                                   \\
& s_{ik}\in\mathbb{B},
&& \forall i\in I,\; k\in\mathcal{K},                         \\
& a_{ij},b_{ij},c_{ij},d_{ij},e_{ij},f_{ij}\in\mathbb{B},
&& \forall i,j\in I,\; i<j.
\end{align}
\end{definition}
\paragraph{Evaluator convention.}
Let
\begin{equation}
B:=\sum_{k\in\mathcal{K}}u_k
\end{equation}
denote the number of used bins. Because the symmetry-breaking
constraints enforce consecutive bin usage, the last used bin is
well-defined as
\begin{equation}
k^{\star}:=\max\{k\in\mathcal{K}\mid u_k=1\}.
\end{equation}
Its volume-utilization ratio is
\begin{equation}
\rho_{\mathrm{last}}
=
\frac{\sum_{i\in I}v_i s_{ik^{\star}}}{LWH}.
\end{equation}
The executable evaluator minimizes
\begin{equation}
F_{\mathrm{3D\text{-}BPP}}
=
B+\lambda\left(1-\rho_{\mathrm{last}}\right),
\qquad
\lambda=0.1.
\end{equation}
The first term is the canonical 3D-BPP objective, whereas the second
term distinguishes solutions using the same number of bins by preferring
a more highly utilized last bin. Since
$0\leq\lambda(1-\rho_{\mathrm{last}})\leq 0.1<1$, reducing the number
of used bins by one always dominates any difference in the secondary
term. Therefore, the evaluator is lexicographically consistent with
first minimizing the number of bins and then maximizing the utilization
of the last bin.
\begin{definition}[3D-CLP]
    Given a set of rectangular items $I=\{1,2,\ldots,n\}$ and a single
rectangular container with length $L$, width $W$, and height $H$, each
item $i\in I$ has length $l_i$, width $w_i$, height $h_i$, and volume
$v_i=l_iw_ih_i$. The Three-Dimensional Container Loading Problem
(3D-CLP) aims to pack a feasible subset of the items into the container
so as to maximize the total packed volume.

Let $s_i\in\mathbb{B}$ equal 1 if item $i$ is packed into the container,
and 0 otherwise. Let $(X_i,Y_i,Z_i)$ denote the coordinates of the
lower-left-back corner of item $i$. For every pair of items $i,j\in I$
with $i<j$, let
$a_{ij},b_{ij},c_{ij},d_{ij},e_{ij},f_{ij}\in\mathbb{B}$ represent the
six possible relative positions of the two items along the three
coordinate axes. The 3D-CLP is formulated as
\begin{align}
\max \quad
& \sum_{i\in I} v_i s_i                                      \\
\text{s.t.}\quad
& X_i+l_i
  \leq L+M^{\mathrm{big}}(1-s_i),
&& \forall i\in I,                                            \\
& Y_i+w_i
  \leq W+M^{\mathrm{big}}(1-s_i),
&& \forall i\in I,                                            \\
& Z_i+h_i
  \leq H+M^{\mathrm{big}}(1-s_i),
&& \forall i\in I,                                            \\
& X_i+l_i
  \leq X_j+M^{\mathrm{big}}(1-a_{ij}),
&& \forall i,j\in I,\; i<j,                                  \\
& X_j+l_j
  \leq X_i+M^{\mathrm{big}}(1-b_{ij}),
&& \forall i,j\in I,\; i<j,                                  \\
& Y_i+w_i
  \leq Y_j+M^{\mathrm{big}}(1-c_{ij}),
&& \forall i,j\in I,\; i<j,                                  \\
& Y_j+w_j
  \leq Y_i+M^{\mathrm{big}}(1-d_{ij}),
&& \forall i,j\in I,\; i<j,                                  \\
& Z_i+h_i
  \leq Z_j+M^{\mathrm{big}}(1-e_{ij}),
&& \forall i,j\in I,\; i<j,                                  \\
& Z_j+h_j
  \leq Z_i+M^{\mathrm{big}}(1-f_{ij}),
&& \forall i,j\in I,\; i<j,                                  \\
& a_{ij}+b_{ij}+c_{ij}+d_{ij}+e_{ij}+f_{ij}
  \geq s_i+s_j-1,
&& \forall i,j\in I,\; i<j,                                  \\
& X_i,Y_i,Z_i\geq 0,
&& \forall i\in I,                                            \\
& s_i\in\mathbb{B},
&& \forall i\in I,                                            \\
& a_{ij},b_{ij},c_{ij},d_{ij},e_{ij},f_{ij}\in\mathbb{B},
&& \forall i,j\in I,\; i<j.
\end{align}
\end{definition}
\paragraph{Evaluator convention.}
The packed-volume utilization is defined as
\begin{equation}
\rho_{\mathrm{fill}}
=
\frac{\sum_{i\in I}v_i s_i}{LWH}.
\end{equation}
Since the executable evaluation engine follows a minimization
convention, it evaluates each solution using
\begin{equation}
F_{\mathrm{3D\text{-}CLP}}
=
100\left(1-\rho_{\mathrm{fill}}\right)
=
100-
100\frac{\sum_{i\in I}v_i s_i}{LWH}.
\end{equation}
Thus, $F_{\mathrm{3D\text{-}CLP}}=0$ corresponds to a completely
filled container, and lower values indicate better performance.
Since the container volume $LWH$ is fixed, minimizing
$F_{\mathrm{3D\text{-}CLP}}$ is equivalent to maximizing the total
packed volume.

For compactness, the formulation above assumes fixed item
orientations. The executable evaluator additionally considers the
item-specific axis-aligned orientations permitted by the
\texttt{orientation\_allowed} flags; this extension changes the
feasibility set but not the packed-volume objective.
\subsubsection{Graph Optimization}

\begin{definition}[MIS]
Given an undirected graph $G=(V,E)$, the Maximum Independent Set (MIS)
problem aims to find a vertex subset $S \subseteq V$ with maximum
cardinality such that no two vertices in $S$ are adjacent.
The mathematical model is:

\begin{equation*}
\max f(S)=|S| \end{equation*}
\begin{equation*}
\text{s.t.}\ \{u,v\}\notin E,
\qquad
\forall u,v\in S,\; u\neq v.
\end{equation*}

\end{definition}
\begin{definition}[Max-Cut]
Given an undirected graph $G=(V, E)$ with edge weights $W=\{w_{uv}\}_{\{u,v\} \in E}$, the Maximum Cut (Max-Cut) problem aims to find a vertex subset $S \subseteq V$ that maximizes the total weight of the edges crossing the partition between $S$ and $V \setminus S$. The mathematical model is:
\begin{equation*}
\max_{S\subseteq V}
\quad
f(S)=
\sum_{\substack{\{u,v\}\in E\\
u\in S,\;v\in V\setminus S}}
w_{uv}.
\end{equation*}
\end{definition}

\section{Implementation Details of DGA$_2$D}

\begin{algorithm}[htbp]
\caption{DGA$_2$D Training}
\label{alg:dga2d_training}
\footnotesize
\begin{algorithmic}[1]

\Require Domain evaluator $f$;
evolution instances $\mathcal{X}_{\mathrm{evo}}$;
operator repository $\mathcal{O}_{\mathrm{lib}}$
\Require Generations $t_{max}$; algorithm candidate per generation $N$;
shared instances $k$; temperature $\tau$; maximum implementation-pool size $M$; 
credit rate $\alpha$

\Ensure Best discovered program $\mathcal A^\star$, Evolved graph $\mathcal{D}_{t_{max}+1}$ and
implementation pools $\mathcal{C}_{t_{max}+1}$

\State $(\mathcal{O},\mathcal{D}_1,\mathcal{C}^{(1)})
\gets
\Call{LLMInitialization}
{f,\mathcal{O}_{\mathrm{lib}}}$

\State Initialize pipeline policy
$\pi_{\theta}^{\mathrm{pipe}}$
, transition credits $Q_1\gets 0$ and transition counts
$n_1(e)\gets0$
\State $\mathcal A^\star\gets\varnothing,\quad S^\star\gets-\infty$
\State $\mathcal{I}
    \gets
    \Call{RandomSample}
    {\mathcal{X}_{\mathrm{evo}},k}$
\For{$t=1,\ldots,t_{max}$}

    \For{$j=1,\ldots,N$}

        \State $P_{t,j}
        =
        (O_{\mathrm{in}},
        O_{t,j,1},\ldots,
        O_{t,j,L_{t,j}},
        O_{\mathrm{out}})
        \sim
        \pi_{\theta}^{\mathrm{pipe}}
        (\cdot\mid\mathcal{D}_t)$
        \Comment{bounded directed walk}

        \State $c_{t,j,0}\gets c_{\mathrm{start}}$

        \For{$i=1,\ldots,L_{t,j}$}
\State $c_{t,j,i}\sim
    \operatorname{Softmax}_{c\in\mathcal C^{(t)}(O_{t,j,i})}
    \left(
    \frac{Q_t(e(c_{t,j,i-1},c))}{\tau}
    \right)$
    \State $e_{t,j,i}\gets
    e(c_{t,j,i-1},c_{t,j,i})$

        \EndFor

        \State $\mathcal{A}_{t,j}
        \gets{(c_{t,j,1},\ldots,c_{t,j,L_{t,j}})}$

    \EndFor

    \State $\{y_{t,j,x}\}
    \gets
    \Call{EvaluateCandidates}
    {\{\mathcal{A}_{t,j}\}_{j=1}^{N},\mathcal{I}}$

    \For{$j=1,\ldots,N$}

        \State $S_{t,j}
        \gets
        -\dfrac{1}{k}
        \sum_{x\in\mathcal{I}}
        g(y_{t,j,x};x)$
\If{$S_{t,j}>S^\star$}
        \State $S^\star\gets S_{t,j}$
        \State $\mathcal A^\star\gets\mathcal A_{t,j}$
    \EndIf
    \EndFor

    \State $\widetilde{R}_{t,j}
    \gets
    \Call{Standardize}
    {\{S_{t,j}\}_{j=1}^{N}}$,
    \quad $j=1,\ldots,N$

    \State $\mathcal{L}_{\mathrm{pipe}}
    \gets
    -\dfrac{1}{N}
    \sum_{j=1}^{N}
    \widetilde{R}_{t,j}
    \log
    \pi_{\theta}^{\mathrm{pipe}}
    (P_{t,j}\mid\mathcal{D}_t)$

    \State Update $\theta$ by minimizing
    $\mathcal{L}_{\mathrm{pipe}}$

 \State $Q_{t+1}\gets Q_t,\quad n_{t+1}\gets n_t$

\For{$j=1,\ldots,N$}
    \For{$i=1,\ldots,L_{t,j}$}

        \State $Q_{t+1}(e_{t,j,i})
        \gets
        (1-\alpha)Q_{t+1}(e_{t,j,i})
        +\alpha\widetilde{R}_{t,j}$

        \State $n_{t+1}(e_{t,j,i})
        \gets
        n_{t+1}(e_{t,j,i})+1$
    \EndFor
\EndFor
\State
$(\mathcal D_{t+1},\mathcal C^{(t+1)})
\gets
\Call{DualLevelEvolution}
{\mathcal D_t,\mathcal C^{(t)},Q_{t+1},n_{t+1},M}$
\EndFor
\State \Return
$\mathcal D_{t_{\max}+1}, \mathcal C^{(t_{\max}+1)}$, $\mathcal A^\star$

\end{algorithmic}
\end{algorithm}
This appendix provides the implementation-level details of DGA$_2$D.
Algorithm~\ref{alg:dga2d_training} describes the overall training and
candidate-evaluation procedure, while
Algorithm~\ref{alg:dual_evolution} details the credit-guided evolution
of the operator graph and implementation pools.
Algorithm~\ref{alg:dga2d_training} summarizes the complete training procedure
of DGA$_2$D. Before training, we randomly sample a fixed batch
$\mathcal I$ of $k$ evolution instances from
$\mathcal X_{\mathrm{evo}}$. The same batch is reused across all
generations, ensuring that candidate scores remain directly comparable
when maintaining the best discovered program $\mathcal A^\star$. The pipeline policy
samples a length-bounded directed walk
$P_{t,j}$ from $O_{\mathrm{in}}$ to $O_{\mathrm{out}}$ in the current graph
$\mathcal D_t$. For every operator in the sampled pipeline, a concrete
implementation is selected according to the transition credits:
\[
c_{t,j,i}
\sim
\operatorname{Softmax}_{c\in\mathcal C^{(t)}(O_{t,j,i})}
\left(
\frac{Q_t(e(c_{t,j,i-1},c))}{\tau}
\right),
\]
where $e(c,c')$ denotes the implementation-level transition from $c$ to
$c'$. The temperature $\tau$ controls the balance between exploiting
high-credit transitions and exploring alternative implementations.

Each instantiated candidate $\mathcal A_{t,j}$ is evaluated on the shared
instance batch, and its score is computed as
\[
S_{t,j}
=
-\frac{1}{k}
\sum_{x\in\mathcal I}
g(y_{t,j,x};x).
\]
The best-performing candidate observed throughout training is retained as
$\mathcal A^\star$. The candidate scores are standardized into
$\widetilde R_{t,j}$ and used to optimize the pipeline policy through
\[
\mathcal L_{\mathrm{pipe}}
=
-\frac{1}{N}
\sum_{j=1}^{N}
\widetilde R_{t,j}
\log
\pi_\theta^{\mathrm{pipe}}
(P_{t,j}\mid\mathcal D_t).
\]
Here, $\pi_{\theta}^{\mathrm{pipe}}(P_{t,j}\mid D_t)$ denotes the
probability of sampling pipeline $P_{t,j}$ under the current pipeline
policy. Its autoregressive factorization and the corresponding masked
operator-selection distributions are detailed in Appendix~D.1.

Thus, pipelines with positive standardized rewards become more likely to be
sampled, whereas poorly performing pipelines are suppressed. The same reward
is also assigned to every implementation transition activated by the
candidate:
\[
Q_{t+1}(e_{t,j,i})
=
(1-\alpha)Q_{t+1}(e_{t,j,i})
+
\alpha\widetilde R_{t,j},
\]
while $n_{t+1}(e_{t,j,i})$ records its cumulative activation count. The
updated credits are subsequently used by
Algorithm~\ref{alg:dual_evolution} to evolve the implementation pools
and operator graph.
\begin{algorithm}[ht]
\caption{Credit-Guided Dual-Level Evolution}
\label{alg:dual_evolution}
\footnotesize
\begin{algorithmic}[1]

\Require Current graph
$\mathcal{D}_t=(\mathcal{O}_{t},\mathcal{E}_t)$;
implementation pools $\mathcal{C}^{(t)}$;
transition credits $Q_{t}(e)$; transition counts $n_{t}(e)$; maximum pool size $M$

\Ensure Updated graph $\mathcal{D}_{t+1}$ and
implementation pools $\mathcal{C}^{(t+1)}$

\State $\bar{Q}_t(c),\bar{Q}_t(O_i)
\gets
\Call{AggregateImplementationCredits}
{Q_{t}(e),n_t(e)}$

\State $O^\star
\gets
\arg\min_{O_i\in\mathcal{O}}
\bar{Q}_t(O_i)$

\State $c^{-}
\gets
\arg\min_{c\in\mathcal{C}_{O^\star}^{(t)}}
\bar{Q}_t(c)$

\State $\mathcal{C}^{(t+1)}\gets\mathcal{C}^{(t)}$
\If{$\bar{Q}_t(O^\star)\geq 0$
    \textbf{and}
    $|\mathcal{C}_{O^\star}^{(t)}|>1$}

    \State $\mathcal{C}^{(t+1)}(O^\star)
\leftarrow
\mathcal{C}^{(t)}(O^\star)\setminus\{c^-\}$

\ElsIf{$\bar{Q}_t(O^\star)<0$}

    \State $c^{+}
    \gets
    \Call{LLMGenerateImplementation}
    {O^\star,\mathcal{C}^{(t)}(O^\star)}$

    \If{$|\mathcal{C}^{(t)}(O^\star)| < M$}
    \State $\mathcal{C}^{(t+1)}(O^\star)
    \gets
    \mathcal{C}^{(t)}(O^\star)\cup\{c^+\}$
\Else
    \State $\mathcal{C}^{(t+1)}(O^\star)
    \gets
    \bigl(\mathcal{C}^{(t)}(O^\star)\setminus\{c^-\}\bigr)
    \cup\{c^+\}$
\EndIf
    \EndIf

\State $\bar{Q}_t(O_i,O_j)
\gets
\Call{AggregateEdgeCredits}
{Q_{t}(e),n_t(e)}$

\State $(O_i^{-},O_j^{-})
\gets
\arg\min_{(O_i,O_j)\in\mathcal{E}_t}
\bar{Q}_t(O_i,O_j)$

\State $\mathcal{E}_{t+1}\gets\mathcal{E}_t$

\If{$\bar{Q}_t(O_i^{-},O_j^{-})<0$}

    \State $(O_i^{+},O_j^{+})
    \gets
    \Call{LLMGenerateCompatibleEdge}
    {\mathcal{D}_t}$

    \If{the new edge is valid and preserves an
        input--output route}

        \State $\mathcal{E}_{t+1}
        \gets
        \mathcal{E}_t
        \setminus\{(O_i^{-},O_j^{-})\}
        \cup\{(O_i^{+},O_j^{+})\}$

    \EndIf

\EndIf

\State $\mathcal{D}_{t+1}
\gets
(\mathcal{O}_{t},\mathcal{E}_{t+1})$

\State \Return
$\mathcal{D}_{t+1},\mathcal{C}^{(t+1)}$

\end{algorithmic}
\end{algorithm}
\paragraph{Implementation- and Operator-Level Credit Aggregation.}
The procedure
\textsc{AggregateImplementationCredits} first aggregates transition-level
credits into an empirical credit for each implementation $c$:
\begin{equation}
\bar Q_t(c)
=
\frac{
\sum_{e\in\mathcal T_t(c)} n_t(e)Q_t(e)
}{
\sum_{e\in\mathcal T_t(c)} n_t(e)+\epsilon
},
\label{eq:implementation_credit}
\end{equation}
where $\mathcal T_t(c)$ contains all activated transitions involving $c$,
and $\epsilon>0$ prevents division by zero. The corresponding operator credit
is the average credit of its current implementations:
\begin{equation}
\bar Q_t(O_i)
=
\frac{
\sum_{c\in\mathcal C^{(t)}(O_i)}
\bar Q_t(c)
}{
|\mathcal C^{(t)}(O_i)|
}.
\label{eq:operator_credit}
\end{equation}
The lowest-credit operator $O^\star$ and its lowest-credit implementation
$c^-$ are then selected for pruning or local repair.

\paragraph{Operator-Edge Credit Aggregation.}
The procedure \textsc{AggregateEdgeCredits} aggregates all
implementation level transitions associated with an operator edge
$(O_i,O_j)$:
\begin{equation}
\bar Q_t^{\mathrm{edge}}(O_i,O_j)
=
\frac{
\sum_{e\in\mathcal T_t(O_i,O_j)}
n_t(e)Q_t(e)
}{
\sum_{e\in\mathcal T_t(O_i,O_j)}
n_t(e)+\epsilon
},
\label{eq:edge_credit}
\end{equation}
where $\mathcal T_t(O_i,O_j)$ contains transitions from implementations of
$O_i$ to implementations of $O_j$. The edge with the lowest aggregated
credit is considered for replacement. A proposed replacement is accepted
only if it is valid and the resulting graph preserves at least one route
from $O_{\mathrm{in}}$ to $O_{\mathrm{out}}$.
\paragraph{Credit-Guided Evolution.}
If the lowest operator credit is non-negative, DGA$_2$D removes its
lowest-credit implementation when more than one implementation remains,
thereby controlling pool complexity. If the operator credit is negative, a
new implementation is generated and either added to the pool or used to
replace $c^-$ when the maximum pool size $M$ has been reached. At the graph
level, only negatively credited edges are considered for replacement. \textsc{LLMGenerateImplementation} returns a candidate only after it
passes AST, signature, isolated-runtime, and composed smoke validation.
If validation fails, the implementation pool remains unchanged
\section{Prompt Engineering and Templates}

The implementation begins from a fixed, versioned operator library.  The
external proposer handles constrained graph initialization, scheduled edge
edits, and code generation after a rule-based version decision.  The pipeline
policy itself is a locally loaded Qwen model.  In the
templates below, \promptrole{blue} specifies the role, \promptinput{purple
monospace} marks injected fields, \promptconstraint{red} marks hard constraints,
and \promptoutput{green} gives the required output.

\subsection{Directed-Graph Initialization}
\begin{promptbox}[title={\textbf{Template I: Constrained DG Initialization}}]
\promptrole{Initialize a directed operator graph for an optimization RL system.}\\
\promptinput{Domain contract: \{domain\_prompt\}}\\
\promptinput{Hard YAML controls: \{bounded\_H\_controls\}}\\
\promptinput{Candidate operator IDs: \{operator\_ids\}}\\
\promptinput{Safe baseline: \{initial\_pipeline\}}

\medskip
Return JSON with exactly the keys \texttt{H}, \texttt{operators},
\texttt{entry\_nodes}, \texttt{exit\_nodes}, and \texttt{edges}.
\begin{enumerate}[leftmargin=6mm,itemsep=1pt,topsep=2pt]
  \item Select only listed operator IDs and obey every enabled YAML bound.
  \item Entry nodes must be initialization operators and cannot be revisited.
  \item Exit nodes must terminate a valid solution; at least one
        \texttt{START}-to-\texttt{END} route must exist.
  \item Directed cycles are allowed only among non-initialization nodes.
\end{enumerate}
\promptoutput{Return JSON only.  \texttt{START} and \texttt{END} are implicit.}
\end{promptbox}

\subsection{Local Pipeline-Policy Context}
\begin{promptbox}[title={\textbf{Template II: Constrained Pipeline Sampling}}]
\promptrole{Select a bounded walk through the current directed operator graph.}\\
\promptinput{Domain summary: \{domain\_summary\}}\\
\promptinput{Instance summary: \{scalar\_sizes\_and\_array\_shapes\}}\\
\promptinput{Action aliases: \{fixed\_width\_ASCII\_mapping\}}

\medskip
At each step, generation is restricted by a token trie to aliases of outgoing
edges that can still reach \texttt{END} within the remaining length.
\promptconstraint{The initialization node cannot be revisited.}  Log-probability
includes only generated operator and \texttt{END} tokens; prompt and padding
tokens are excluded.
\promptoutput{Generate one legal alias sequence ending in \texttt{END}.}
\end{promptbox}

\subsection{Transactional Graph Evolution}
\begin{promptbox}[title={\textbf{Template III: GraphDelta Proposal}}]
\promptrole{Propose a small edge-only edit to the current directed graph.}\\
\promptinput{Current nodes and edges: \{graph\_json\}}\\
\promptinput{Recent generation summary: \{performance\_summary\}}\\
\promptinput{Domain contract: \{domain\_prompt\}}

\medskip
Return JSON with \texttt{add\_edges} and \texttt{delete\_edges}, containing at
most two edges each.  Do not add nodes.  Every endpoint must already exist.
The edit is applied to a graph copy and committed only if interface checks,
DG constraints, reachability, and smoke execution all pass.
\promptoutput{Return JSON only; an empty delta is allowed.}
\end{promptbox}

\subsection{Rule-Guided Operator Evolution}
\begin{promptbox}[title={\textbf{Template IV: ADD/REPLACE Code Generation}}]
\promptrole{Generate one implementation for a selected operator.}\\
\promptinput{Domain contract: \{domain\_prompt\}}\\
\promptinput{Target operator: \{category\}|\{name\}}\\
\promptinput{Parent implementation: \{parent\_code\}}\\
\promptinput{Failure context, if any: \{diagnostic\}}

\medskip
The trainer has already selected ADD or REPLACE using fault, dominance, trial
count, and version-cap rules.  Preserve exactly
\texttt{def run(env\_data, state, calc\_makespan\_fn):}, use only documented
environment fields, and update the returned state consistently.  Generated
code is activated only after AST, signature, isolated-runtime, and composed
smoke validation.  DELETE never calls the proposer.
\promptoutput{Return one Python code block and no prose.}
\end{promptbox}

\subsection{Condensed Domain Contracts}

The following problem cards are shared by graph initialization, graph
evolution, and operator-code generation.  They match the executable evaluator
contracts.

\subsubsection{Travelling Salesman Problem (TSP)}
\begin{promptbox}[unbreakable]
\promptfield{Objective:}{\promptrole{Minimize the total length of a Hamiltonian cycle that visits every one of the $N$ cities exactly once and returns to its starting city.}}
\promptfield{Encoding:}{\texttt{state.sequence} is an integer permutation of all city indices $[0,\ldots,N-1]$.  The permutation is interpreted cyclically, so its rotations describe the same tour; city 0 may be fixed as the conventional start.}
\promptfield{Decoder:}{Visit cities in sequence order, close the tour with the edge from the last city back to the first, and sum the corresponding entries of \texttt{distance\_matrix}.  Distances use TSPLIB \texttt{EUC\_2D}: $d_{ij}=\lfloor\sqrt{(x_i-x_j)^2+(y_i-y_j)^2}+0.5\rfloor$.}
\promptfield{Environment~keys:}{\promptinput{num\_nodes, coords, distance\_matrix, upper\_bound}.}
\promptfield{Constraints:}{\promptconstraint{The sequence must have length \texttt{num\_nodes}, contain every index in $[0,\texttt{num\_nodes}-1]$ exactly once, and remain a valid permutation after every operator.}}
\end{promptbox}

\subsubsection{3D Bin Packing Problem (3D-BPP)}
\begin{promptbox}[unbreakable]
\promptfield{Objective:}{\promptrole{Pack all $N$ fixed-orientation 3D items into the fewest fixed-size bins.  Minimize $\textit{bins}+0.1(1-\textit{last-bin utilization})$, where the fractional term prefers greater utilization of the last bin among equal-bin solutions.}}
\promptfield{Encoding:}{\texttt{state.sequence} is an \texttt{int32} priority permutation of item indices $[0,\ldots,N-1]$; items are presented to the packing decoder in this order.}
\promptfield{Decoder:}{Apply Extreme-Points First-Fit.  For each item, scan existing bins and candidate points consisting of the origin and the positive $x$, $y$, and $z$ faces of placed items; sort points by $z$, then $y$, then $x$ (Bottom--Left--Front), place at the first feasible point, or open a new bin at the origin.}
\promptfield{Environment~keys:}{\promptinput{num\_activities, bin\_size, item\_sizes, item\_volumes, upper\_bound}.}
\promptfield{Constraints:}{\promptconstraint{The sequence must contain every item exactly once.  Items cannot rotate or overlap and each item must lie entirely within one bin; all indices and array accesses must remain in range.}}
\end{promptbox}

\subsubsection{3D Container Loading Problem (3D-CLP)}
\begin{promptbox}[unbreakable]
\promptfield{Objective:}{\promptrole{Maximize the volume utilization of a single fixed-size container by packing a feasible subset of the $N$ items.  The minimization engine evaluates $100-\textit{fill-rate percentage}$, so 0 denotes a completely filled container and lower values are better.}}
\promptfield{Encoding:}{\texttt{state.sequence} is an \texttt{int32} priority permutation of all item indices $[0,\ldots,N-1]$; items are presented to the packing decoder in this order, while items that cannot be placed are skipped.}
\promptfield{Decoder:}{Apply Extreme-Points First-Fit in the single container.  For each item, collect the origin and the positive $x$, $y$, and $z$ faces of placed items, sort these points by $z$, then $y$, then $x$ (Bottom--Left--Front), and try only the axis-aligned orientations permitted by that item's \texttt{orientation\_allowed} flags.  Place the item at the first feasible position or skip it if none exists.}
\promptfield{Environment~keys:}{\promptinput{num\_activities, bin\_size, item\_sizes, item\_volumes, orientation\_allowed, fill\_upper\_bound, upper\_bound, lower\_bound, name}.}
\promptfield{Constraints:}{\promptconstraint{The sequence must contain every item index exactly once and all accesses must remain in range.  Only one container is available; every packed orientation must be permitted, lie fully within the container, and not overlap another packed item.}}
\end{promptbox}

\subsubsection{Capacitated Vehicle Routing Problem (CVRP)}
\begin{promptbox}[unbreakable]
\promptfield{Objective:}{\promptrole{Minimize the total distance of all depot-to-depot vehicle routes while serving every customer demand with vehicles of fixed capacity.}}
\promptfield{Encoding:}{\texttt{state.sequence} is a flat list of 1-indexed customer IDs, separated by the depot marker 0; for example, \texttt{[1,2,0,3,4]} represents routes $(1,2)$ and $(3,4)$.  The first and last depot visits are implicit.}
\promptfield{Decoder:}{Split the sequence at depot markers.  For each nonempty route, sum the edge from \texttt{depot\_idx} to its first customer, all consecutive customer edges, and the final edge back to the depot; add these route costs to obtain total distance.}
\promptfield{Environment~keys:}{\promptinput{dist\_matrix, demands, capacity, num\_nodes, depot\_idx}.}
\promptfield{Constraints:}{\promptconstraint{Every customer in $[1,\texttt{num\_nodes}-1]$ must appear exactly once; every route demand must not exceed \texttt{capacity}; and consecutive zeros, empty routes, duplicates, omissions, and out-of-range IDs are invalid.  With $r$ routes, the sequence length is the number of customers plus $r-1$.}}
\end{promptbox}

\subsubsection{Flexible Job Shop Scheduling Problem (FJSP)}
\begin{promptbox}[unbreakable]
\promptfield{Objective:}{\promptrole{Minimize makespan while scheduling every operation of every job on one of its eligible alternative machines, preserving each job's operation order.}}
\promptfield{Encoding:}{A dual chromosome is used.  The OS chromosome \texttt{state.sequence} has length \texttt{total\_ops} and contains job $j$ exactly \texttt{op\_counts[j]} times.  The MS chromosome stored in \texttt{state.metadata["machine\_alloc"]} has shape \texttt{(num\_jobs, max\_ops)}; entry $(j,o)$ stores the actual machine ID selected for operation $o$ of job $j$.}
\promptfield{Decoder:}{Scan OS left to right; the occurrence count of a job identifies its next operation.  Read the actual machine ID from MS and its duration from \texttt{processing\_matrix}, then start it at the maximum of that job's ready time and the machine's available time.  The largest completion time is the makespan.}
\promptfield{Environment~keys:}{\promptinput{num\_jobs, num\_machines, op\_counts, eligible\_counts, machines\_matrix, times\_matrix, processing\_matrix, max\_ops, total\_ops}.}
\promptfield{Constraints:}{\promptconstraint{OS length and job multiplicities must be exact.  Every stored machine ID must occur among the operation's nonnegative eligible IDs, and its processing time must be finite.  Job precedence and single-machine capacity must hold.  An ineligible assignment receives the invalid makespan 999999999.}}
\end{promptbox}

\subsubsection{Flow Shop Scheduling Problem (FSSP)}
\begin{promptbox}[unbreakable]
\promptfield{Objective:}{\promptrole{Minimize makespan while processing every job once on each machine in the common fixed machine order.}}
\promptfield{Encoding:}{\texttt{state.sequence} is an operation-based integer sequence of length \texttt{total\_ops}.  Each job ID appears exactly \texttt{num\_machines} times, and its $k$th occurrence denotes that job's operation on machine $k$.}
\promptfield{Decoder:}{Scan the sequence left to right, infer the current machine from the job's prior occurrence count, and schedule the operation at the maximum of that job's ready time and the machine's available time using \texttt{processing\_times[machine,job]}.  The maximum completion time is $C_{\max}$.}
\promptfield{Environment~keys:}{\promptinput{num\_jobs, num\_machines, processing\_times, total\_ops}.}
\promptfield{Constraints:}{\promptconstraint{The sequence length must equal \texttt{num\_jobs * num\_machines}; each job in $[0,\texttt{num\_jobs}-1]$ must occur exactly \texttt{num\_machines} times.  Extra occurrences, invalid IDs, job-order violations, or machine conflicts are invalid.}}
\end{promptbox}

\subsubsection{Job Shop Scheduling Problem (JSP)}
\begin{promptbox}[unbreakable]
\promptfield{Objective:}{\promptrole{Minimize makespan while scheduling all job operations in their fixed technological order on their prescribed machines.}}
\promptfield{Encoding:}{\texttt{state.sequence} is an operation-based sequence of length \texttt{total\_ops}.  Each job ID appears exactly \texttt{num\_machines} times; its $k$th occurrence represents operation $k$ of that job.}
\promptfield{Decoder:}{Scan the sequence from left to right.  Use each job's occurrence count as its operation index, read the prescribed machine and duration from \texttt{machines\_matrix} and \texttt{times\_matrix}, and start at the maximum of job and machine availability.  Return the largest machine completion time.}
\promptfield{Environment~keys:}{\promptinput{num\_jobs, num\_machines, machines\_matrix, times\_matrix, total\_ops}.}
\promptfield{Constraints:}{\promptconstraint{The sequence must have length \texttt{num\_jobs * num\_machines}, contain every job exactly \texttt{num\_machines} times, and use only IDs in $[0,\texttt{num\_jobs}-1]$.  Job precedence and single-machine capacity must be preserved; violations receive makespan 999999999.}}
\end{promptbox}

\subsubsection{Maximum Independent Set (MIS)}
\begin{promptbox}[unbreakable]
\promptfield{Objective:}{\promptrole{Find a maximum-cardinality set of mutually non-adjacent vertices.  The minimization engine uses score $-|S|$, so a larger feasible independent set has a smaller, better score.}}
\promptfield{Encoding:}{\texttt{state.sequence} is a binary \texttt{int32} vector of length \texttt{num\_vertices}; entry $i=1$ selects vertex $i$, and $i=0$ excludes it.}
\promptfield{Decoder:}{Collect all vertices whose bit is 1, verify that no selected vertex has a selected neighbor in \texttt{adj}, and return the negative selected-vertex count.  An infeasible or malformed vector returns the domain's large positive \texttt{INVALID\_SCORE}.}
\promptfield{Environment~keys:}{\promptinput{num\_vertices (also available as total\_vertices), adj, name}.}
\promptfield{Constraints:}{\promptconstraint{The vector must have the declared length and contain only binary values with valid vertex indices.  For every edge $(u,v)$, at most one of \texttt{sequence[u]} and \texttt{sequence[v]} may equal 1.}}
\end{promptbox}

\subsubsection{Maximum Cut (Max-Cut)}
\begin{promptbox}[unbreakable]
\promptfield{Objective:}{\promptrole{Partition the vertices of a weighted undirected graph into two disjoint sets so that the total weight of edges crossing between them is maximized.  The minimization engine returns the negative cut weight, so a lower value represents a heavier cut.}}
\promptfield{Encoding:}{\texttt{state.sequence} is a binary partition-assignment vector of length \texttt{num\_nodes}; \texttt{sequence[i] = 0} or 1 identifies the partition containing vertex $i$.}
\promptfield{Decoder:}{Traverse the graph stored in CSR form through \texttt{offsets}, \texttt{neighbors}, and \texttt{edge\_weights}; accumulate the weight of every edge whose endpoints have different assignments, and return the negative total cut weight.}
\promptfield{Environment~keys:}{\promptinput{num\_nodes, num\_edges, neighbors, edge\_weights, offsets}.}
\promptfield{Constraints:}{\promptconstraint{The assignment vector must have length \texttt{num\_nodes}, contain only 0 or 1, and assign every vertex to exactly one partition.  Vertex IDs, CSR offsets, neighbor indices, and edge-weight accesses must remain within their valid ranges.}}
\end{promptbox}

\subsubsection{Open Shop Scheduling Problem (OSSP)}
\begin{promptbox}[unbreakable]
\promptfield{Objective:}{\promptrole{Minimize makespan while processing every job exactly once on every machine; machine order is free, but operations sharing a job or machine cannot overlap.}}
\promptfield{Encoding:}{\texttt{state.sequence} is a permutation of all operation IDs $[0,\ldots,NM-1]$, where \texttt{job\_id = op // num\_machines} and \texttt{mach\_id = op \% num\_machines}.}
\promptfield{Decoder:}{Use a greedy list scheduler in permutation order.  Start each operation at the maximum of its job's and machine's availability, add \texttt{ossp\_times[job,machine]}, update both availability values, and return the latest completion time.}
\promptfield{Environment~keys:}{\promptinput{num\_jobs, num\_machines, total\_ops, ossp\_times, upper\_bound, lower\_bound}.}
\promptfield{Constraints:}{\promptconstraint{The sequence must contain each operation ID in $[0,\texttt{total\_ops}-1]$ exactly once.  Operations are non-preemptive, and neither two operations of one job nor two operations on one machine may overlap.}}
\end{promptbox}

\subsubsection{Resource-Constrained Project Scheduling Problem (RCPSP)}
\begin{promptbox}[unbreakable]
\promptfield{Objective:}{\promptrole{Minimize project makespan while scheduling all activities subject to finish--start precedence and per-period renewable-resource capacities.  Activities 0 and $N-1$ are zero-duration, zero-demand dummy start and end nodes.}}
\promptfield{Encoding:}{\texttt{state.sequence} is an \texttt{int32} topological permutation of all activity IDs $[0,\ldots,N-1]$; for every precedence edge $(u,v)$, $u$ appears before $v$.}
\promptfield{Decoder:}{Apply the Serial Schedule Generation Scheme.  In sequence order, compute the latest predecessor finish, then find the earliest contiguous interval of the activity's duration during which every resource has sufficient residual capacity; allocate its requests there.  The latest finish time is the makespan.}
\promptfield{Environment~keys:}{\promptinput{num\_activities, num\_resources, durations, requests, capacities, adj\_matrix, edges, upper\_bound}.}
\promptfield{Constraints:}{\promptconstraint{The sequence must be a complete topological permutation.  Every predecessor must finish before its successor starts, and at every time unit the summed request for each renewable resource must not exceed its capacity.}}
\end{promptbox}

\subsubsection{Simple Assembly Line Balancing Problem Type 1 (SALBP-1)}
\begin{promptbox}[unbreakable]
\promptfield{Objective:}{\promptrole{For fixed cycle time $C$, assign every precedence-constrained task to the minimum number of sequential workstations.  The number of utilized workstations is the objective to be minimized,
while \texttt{upper\_bound} stores the theoretical lower bound
$\lceil \text{total time}/C\rceil$.}}
\promptfield{Encoding:}{\texttt{state.sequence} is an \texttt{int32} topological permutation of all task IDs $[0,\ldots,N-1]$; every predecessor appears before its successor.}
\promptfield{Decoder:}{Apply greedy First-Fit Station Assignment in sequence order.  For each task, set its earliest allowed station to the latest station of its predecessors, place it in the first station at or after that point whose remaining cycle time is sufficient, or open a new station.}
\promptfield{Environment~keys:}{\promptinput{num\_tasks, cycle\_time, times, adj\_matrix, edges, upper\_bound, total\_time}.}
\promptfield{Constraints:}{\promptconstraint{The sequence must be a complete topological permutation, each task must be assigned exactly once, every station load must not exceed \texttt{cycle\_time}, and each predecessor's station index must be no greater than its successor's.}}
\end{promptbox}
\section{Reinforcement-Learning Training and Credit Assignment}

\subsection{Autoregressive Factorization and Local Credit-Induced Optimization}
\label{sec:pipeline_policy_analysis}

This section derives the autoregressive factorization of the pipeline
policy and analyzes the optimization behavior of the credit-weighted
pipeline loss in the main text. We first show that the probability of
a complete pipeline factorizes into the conditional probabilities of
its operator-selection decisions. We then derive the corresponding
gradient update and establish its local effect on both the complete
pipeline and its individual actions.

\paragraph{Pipeline representation and masked action distribution.}
Recall that the $j$-th pipeline sampled at generation $t$ is
\begin{equation}
    P_{t,j}
    =
    \left(
        O_{\mathrm{in}},
        O_{t,j,1},
        \ldots,
        O_{t,j,L_{t,j}},
        O_{\mathrm{out}}
    \right).
    \label{eq:appendix_pipeline_definition}
\end{equation}
For notational convenience, define
\begin{equation}
    O_{t,j,0}:=O_{\mathrm{in}},
    \qquad
    O_{t,j,L_{t,j}+1}:=O_{\mathrm{out}},
    \label{eq:appendix_pipeline_sentinels}
\end{equation}
and let
\begin{equation}
    h_{t,j,i}
    :=
    \left(
        O_{t,j,0},
        O_{t,j,1},
        \ldots,
        O_{t,j,i-1}
    \right)
    \label{eq:appendix_pipeline_history}
\end{equation}
denote the complete decoding history before the $i$-th
operator-selection decision.

Let $\mathcal{M}_{t,j,i}$ denote the feasible action set at position
$i$. It contains the outgoing operators that preserve at least one
continuation to $O_{\mathrm{out}}$ within the remaining pipeline-length
budget. Given the pre-softmax policy logit
\(z_{\theta}(O \mid h_{t,j,i}, D_t)\), the masked conditional distribution is: 
\begin{equation}
    \pi_{\theta}
    \left(
        O\mid h_{t,j,i},D_t
    \right)
    =
    \begin{cases}
    \displaystyle
    \frac{
        \exp
        \left(
            z_{\theta}
            \left(
                O\mid h_{t,j,i},D_t
            \right)
        \right)
    }{
        \displaystyle
        \sum_{O'\in\mathcal{M}_{t,j,i}}
        \exp
        \left(
            z_{\theta}
            \left(
                O'\mid h_{t,j,i},D_t
            \right)
        \right)
    },
    &
    O\in\mathcal{M}_{t,j,i},
    \\[3ex]
    0,
    &
    O\notin\mathcal{M}_{t,j,i}.
    \end{cases}
    \label{eq:masked_pipeline_policy}
\end{equation}

\paragraph{Autoregressive factorization of the pipeline policy.}
Because the pipeline is generated sequentially, the probability of
the complete pipeline follows from the probability chain rule:
\begin{align}
    \pi_{\theta}^{\mathrm{pipe}}
    \left(
        P_{t,j}\mid D_t
    \right)
    &=
    \pi_{\theta}
    \left(
        O_{t,j,1}
        \mid
        h_{t,j,1},D_t
    \right)
    \nonumber
    \\
    &\quad\times
    \pi_{\theta}
    \left(
        O_{t,j,2}
        \mid
        h_{t,j,2},D_t
    \right)
    \times\cdots
    \nonumber
    \\
    &\quad\times
    \pi_{\theta}
    \left(
        O_{t,j,L_{t,j}+1}
        \mid
        h_{t,j,L_{t,j}+1},D_t
    \right).
    \label{eq:pipeline_policy_expanded}
\end{align}
Here, $\pi_\theta(O\mid h_{t,j,i},D_t)$ denotes the probability
induced by the constrained token-level decoder of generating the
complete alias associated with operator $O$. The quantity
$z_\theta(O\mid h_{t,j,i},D_t)$ is the corresponding operator-level
log-score. Equivalently,
\begin{equation}
    \pi_{\theta}^{\mathrm{pipe}}
    \left(
        P_{t,j}\mid D_t
    \right)
    =
    \prod_{i=1}^{L_{t,j}+1}
    \pi_{\theta}
    \left(
        O_{t,j,i}
        \mid
        h_{t,j,i},D_t
    \right).
    \label{eq:pipeline_policy_factorization}
\end{equation}
The product starts at $i=1$ because $O_{\mathrm{in}}$ is a fixed
sentinel rather than a sampled action. It ends at $L_{t,j}+1$ because
selecting $O_{\mathrm{out}}$ is also part of the generated pipeline.

Taking the logarithm of
Eq.~\eqref{eq:pipeline_policy_factorization} gives
\begin{equation}
    \log
    \pi_{\theta}^{\mathrm{pipe}}
    \left(
        P_{t,j}\mid D_t
    \right)
    =
    \sum_{i=1}^{L_{t,j}+1}
    \log
    \pi_{\theta}
    \left(
        O_{t,j,i}
        \mid
        h_{t,j,i},D_t
    \right).
    \label{eq:pipeline_log_probability_factorization}
\end{equation}
Thus, the log-probability of a complete pipeline is the sum of the
log-probabilities of all operator-selection decisions appearing in
that pipeline.

\paragraph{Credit-weighted surrogate optimization.}
During policy optimization, the evaluated pipeline credit is treated
as constant with respect to $\theta$. Define
\begin{equation}
    C_{t,j}
    :=
    \operatorname{sg}
    \left(
        \widetilde{R}_{t,j}
    \right),
    \label{eq:stop_gradient_pipeline_credit}
\end{equation}
where $\operatorname{sg}(\cdot)$ denotes the stop-gradient operator.
The credit-weighted pipeline loss is
\begin{equation}
    \mathcal{L}_{\mathrm{pipe}}(\theta)
    =
    -\frac{1}{N}
    \sum_{j=1}^{N}
    C_{t,j}
    \log
    \pi_{\theta}^{\mathrm{pipe}}
    \left(
        P_{t,j}\mid D_t
    \right),
    \label{eq:appendix_pipeline_loss}
\end{equation}
where \(N\) denotes the number of candidate pipelines sampled and
evaluated at generation \(t\), \(\theta\) denotes the trainable parameters of the pipeline policy. Substituting
Eq.~\eqref{eq:pipeline_log_probability_factorization}
into Eq.~\eqref{eq:appendix_pipeline_loss} yields
\begin{equation}
    \mathcal{L}_{\mathrm{pipe}}(\theta)
    =
    -\frac{1}{N}
    \sum_{j=1}^{N}
    \sum_{i=1}^{L_{t,j}+1}
    C_{t,j}
    \log
    \pi_{\theta}
    \left(
        O_{t,j,i}
        \mid
        h_{t,j,i},D_t
    \right).
    \label{eq:action_decomposed_pipeline_loss}
\end{equation}
Therefore, the terminal credit of a complete candidate algorithm is
propagated to every operator-selection decision in its sampled
pipeline.

Since $C_{t,j}$ is detached from the computation graph, differentiating
Eq.~\eqref{eq:appendix_pipeline_loss} gives
\begin{equation}
    \nabla_{\theta}
    \mathcal{L}_{\mathrm{pipe}}(\theta)
    =
    -\frac{1}{N}
    \sum_{j=1}^{N}
    C_{t,j}
    \nabla_{\theta}
    \log
    \pi_{\theta}^{\mathrm{pipe}}
    \left(
        P_{t,j}\mid D_t
    \right).
    \label{eq:pipeline_loss_gradient}
\end{equation}
A gradient-descent update with learning rate $\eta>0$ is consequently
\begin{align}
    \theta^{+}
    &=
    \theta
    -
    \eta
    \nabla_{\theta}
    \mathcal{L}_{\mathrm{pipe}}(\theta)
    \nonumber
    \\
    &=
    \theta
    +
    \frac{\eta}{N}
    \sum_{j=1}^{N}
    C_{t,j}
    \nabla_{\theta}
    \log
    \pi_{\theta}^{\mathrm{pipe}}
    \left(
        P_{t,j}\mid D_t
    \right).
    \label{eq:pipeline_parameter_update}
\end{align}

The following theorem formalizes the local contribution of an
individual sampled pipeline to this update.
\begin{theorem}[Local reinforcement effect of pipeline credit]
\label{thm:local_pipeline_credit}
Fix a sampled pipeline $P_{t,j}$ and define its log-probability as
\begin{equation}
    f_{t,j}(\theta)
    :=
    \log
    \pi_{\theta}^{\mathrm{pipe}}
    \left(
        P_{t,j}\mid D_t
    \right).
    \label{eq:pipeline_log_probability_function}
\end{equation}
 Ignoring the positive batch-scaling factor $1/N$, the
gradient-descent direction contributed by this pipeline is
\begin{equation}
    d_{t,j}
    :=
    C_{t,j}
    \nabla_{\theta}
    f_{t,j}(\theta).
    \label{eq:individual_pipeline_direction}
\end{equation}
Then the directional derivative of the pipeline log-probability
satisfies
\begin{equation}
    \left.
    \frac{\mathrm{d}}{\mathrm{d}\eta}
    f_{t,j}
    \left(
        \theta+\eta d_{t,j}
    \right)
    \right|_{\eta=0}
    =
    C_{t,j}
    \left\|
        \nabla_{\theta}
        f_{t,j}(\theta)
    \right\|_2^2.
    \label{eq:pipeline_directional_derivative}
\end{equation}
Consequently, whenever
$\nabla_{\theta}f_{t,j}(\theta)\neq 0$,
a positive credit $C_{t,j}>0$ gives a locally increasing direction
for the sampled pipeline's log-probability, whereas a negative credit
$C_{t,j}<0$ gives a locally decreasing direction.
\end{theorem}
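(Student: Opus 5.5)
The plan is to apply the chain rule to the one-variable function $g(\eta):=f_{t,j}(\theta+\eta d_{t,j})$. We assume, as is implicit in the statement, that $f_{t,j}$ is differentiable at $\theta$. This holds whenever the logits $z_\theta$ are differentiable in $\theta$ and the sampled pipeline has positive probability. Positivity follows because every selected operator lies in its feasible set $\mathcal{M}_{t,j,i}$, so each factor in Eq.~\eqref{eq:pipeline_policy_factorization} is a strictly positive softmax value and the logarithm is finite.

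The first step is to note that $C_{t,j}=\operatorname{sg}(\widetilde{R}_{t,j})$ is a constant with respect to $\theta$, by Eq.~\eqref{eq:stop_gradient_pipeline_credit}. Hence $d_{t,j}$ is a fixed vector once it has been evaluated at the current $\theta$, and it does not move with $\eta$. This is the only conceptual point needing care: the direction is computed at $\theta$ and then frozen, so no second-order terms from differentiating $d_{t,j}$ arise.

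The second step applies the multivariate chain rule:
\[
g'(0)=\nabla_\theta f_{t,j}(\theta)^{\top} d_{t,j}
= C_{t,j}\,\nabla_\theta f_{t,j}(\theta)^{\top}\nabla_\theta f_{t,j}(\theta)
= C_{t,j}\,\|\nabla_\theta f_{t,j}(\theta)\|_2^2 .
\]
This gives Eq.~\eqref{eq:pipeline_directional_derivative}.

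The third step is the sign analysis. If $\nabla_\theta f_{t,j}(\theta)\neq 0$, then the squared norm is strictly positive, so $g'(0)$ has the same sign as $C_{t,j}$. By the definition of the derivative, $g'(0)>0$ implies $g(\eta)>g(0)$ for all sufficiently small $\eta>0$, and $g'(0)<0$ implies $g(\eta)<g(0)$ for all sufficiently small $\eta>0$. This is the claimed local increase or decrease of the pipeline's log-probability. Because $\log$ is monotone, the same conclusion holds for the probability $\pi_\theta^{\mathrm{pipe}}(P_{t,j}\mid D_t)$ itself.

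As an optional remark, Eq.~\eqref{eq:pipeline_log_probability_factorization} shows that $\nabla_\theta f_{t,j}$ is the sum of the per-action score functions. The result therefore concerns the aggregate pipeline log-probability, not each individual action. No step is a genuine obstacle; the main thing to state carefully is the differentiability and positive-probability assumption.
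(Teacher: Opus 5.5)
Your proposal is correct and takes the same route as the paper's proof. The chain rule gives $\nabla_\theta f_{t,j}(\theta)^{\top} d_{t,j} = C_{t,j}\|\nabla_\theta f_{t,j}(\theta)\|_2^2$, and the sign of $C_{t,j}$ then fixes the direction. Your remarks on differentiability, on the strict positivity of the masked-softmax factors, and on the direction being frozen at $\theta$ state assumptions the paper leaves implicit; they do not change the argument.
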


\begin{proof}
By the chain rule and the definition of $d_{t,j}$,
\begin{align}
    \left.
    \frac{\mathrm{d}}{\mathrm{d}\eta}
    f_{t,j}
    \left(
        \theta+\eta d_{t,j}
    \right)
    \right|_{\eta=0}
    &=
    \nabla_{\theta}
    f_{t,j}(\theta)^{\top}
    d_{t,j}
    \\
    &=
    C_{t,j}
    \nabla_{\theta}
    f_{t,j}(\theta)^{\top}
    \nabla_{\theta}
    f_{t,j}(\theta)
    \\
    &=
    C_{t,j}
    \left\|
        \nabla_{\theta}
        f_{t,j}(\theta)
    \right\|_2^2.
\end{align}
Since the squared gradient norm is nonnegative, and is strictly
positive whenever the gradient is nonzero, the sign of the
directional derivative is determined by the sign of $C_{t,j}$.
\end{proof}
\paragraph{Action-level effect of pipeline credit.}
Theorem~\ref{thm:local_pipeline_credit} characterizes the
first-order effect of an individual pipeline credit on the
log-probability of the complete sampled pipeline.
The following corollary further describes how the same credit affects
the operator selected at each decoding position.

\begin{corollary}[Action-level effect of pipeline credit]
\label{cor:action_level_credit}
Fix a sampled pipeline $P_{t,j}$ and a decoding position
$i\in\{1,\ldots,L_{t,j}+1\}$. For notational convenience, define
\begin{equation}
    \pi_{t,j,i}(O)
    :=
    \pi_\theta
    \left(
        O
        \mid
        h_{t,j,i},D_t
    \right),
    \qquad
    z_{t,j,i}(O)
    :=
    z_\theta
    \left(
        O
        \mid
        h_{t,j,i},D_t
    \right),
    \label{eq:local_policy_notation}
\end{equation}
for every feasible operator
$O\in\mathcal{M}_{t,j,i}$.
Let
\begin{equation}
    O_{t,j,i}^{\star}
    :=
    O_{t,j,i}
    \label{eq:selected_operator}
\end{equation}
denote the operator selected at position $i$.
The corresponding action-level loss term is
\begin{equation}
    \ell_{t,j,i}
    =
    -C_{t,j}
    \log
    \pi_{t,j,i}
    \left(
        O_{t,j,i}^{\star}
    \right).
    \label{eq:action_level_loss}
\end{equation}
Then, for every feasible operator
$O\in\mathcal{M}_{t,j,i}$,
\begin{equation}
    \frac{
        \partial \ell_{t,j,i}
    }{
        \partial z_{t,j,i}(O)
    }
    =
    -C_{t,j}
    \left[
        \mathbf{1}
        \left\{
            O=O_{t,j,i}^{\star}
        \right\}
        -
        \pi_{t,j,i}(O)
    \right].
    \label{eq:action_level_logit_gradient}
\end{equation}

Consequently, under a local gradient-descent step of size
$\eta>0$ in logit space, the selected operator satisfies
\begin{equation}
    z_{t,j,i}^{+}
    \left(
        O_{t,j,i}^{\star}
    \right)
    =
    z_{t,j,i}
    \left(
        O_{t,j,i}^{\star}
    \right)
    +
    \eta C_{t,j}
    \left[
        1-
        \pi_{t,j,i}
        \left(
            O_{t,j,i}^{\star}
        \right)
    \right],
    \label{eq:selected_operator_logit_update}
\end{equation}
whereas every unselected feasible operator
$O\neq O_{t,j,i}^{\star}$ satisfies
\begin{equation}
    z_{t,j,i}^{+}(O)
    =
    z_{t,j,i}(O)
    -
    \eta C_{t,j}
    \pi_{t,j,i}(O).
    \label{eq:unselected_operator_logit_update}
\end{equation}

Therefore, when $C_{t,j}>0$, the update increases the logit of
the selected operator and decreases the logits of all competing
feasible operators. When $C_{t,j}<0$, both update directions are
reversed.
\end{corollary}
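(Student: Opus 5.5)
The plan is to reduce the corollary to the standard softmax log-derivative identity, using the masked distribution in Eq.~\eqref{eq:masked_pipeline_policy} together with the action decomposition of the loss in Eq.~\eqref{eq:action_decomposed_pipeline_loss}.

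\textbf{Step 1: isolate the action-level term.} By Eq.~\eqref{eq:action_decomposed_pipeline_loss}, the action-level term is $\ell_{t,j,i}=-C_{t,j}\log\pi_{t,j,i}(O^\star_{t,j,i})$. We treat $C_{t,j}$ as a constant, which is justified by the stop-gradient in Eq.~\eqref{eq:stop_gradient_pipeline_credit}.

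\textbf{Step 2: compute the logit gradient.} Restricted to the feasible set $\mathcal{M}_{t,j,i}$, the masked softmax gives
\[
\log\pi_{t,j,i}(O^\star_{t,j,i}) = z_{t,j,i}(O^\star_{t,j,i}) - \log\sum_{O'\in\mathcal{M}_{t,j,i}}\exp\bigl(z_{t,j,i}(O')\bigr).
\]
Differentiating with respect to $z_{t,j,i}(O)$ for $O\in\mathcal{M}_{t,j,i}$:
\begin{itemize}
\item the first term contributes $\mathbf{1}\{O=O^\star_{t,j,i}\}$;
\item the log-sum-exp term contributes $-\exp(z_{t,j,i}(O))/\sum_{O'}\exp(z_{t,j,i}(O'))=-\pi_{t,j,i}(O)$.
\end{itemize}
Multiplying by $-C_{t,j}$ yields Eq.~\eqref{eq:action_level_logit_gradient}.

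\textbf{Step 3: apply a descent step in logit space.} A gradient-descent step of size $\eta>0$ gives $z^+=z-\eta\,\partial\ell/\partial z$.
\begin{itemize}
\item For the selected operator, the indicator equals $1$, so $z^+=z+\eta C_{t,j}(1-\pi)$, which is Eq.~\eqref{eq:selected_operator_logit_update}.
\item For every unselected feasible operator, the indicator equals $0$, so $z^+=z-\eta C_{t,j}\pi(O)$, which is Eq.~\eqref{eq:unselected_operator_logit_update}.
\end{itemize}

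\textbf{Step 4: sign analysis.} Every feasible operator has $\pi_{t,j,i}(O)\in(0,1]$.
\begin{itemize}
\item If $C_{t,j}>0$, the selected logit weakly increases, with strict increase unless $\pi=1$. In that case $\mathcal{M}_{t,j,i}$ is a singleton and there are no competitors. All competing logits strictly decrease.
\item If $C_{t,j}<0$, all of these signs flip.
\end{itemize}

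\textbf{Main obstacle.} The one point needing care is interpretive rather than computational. We must be precise that the update is taken with respect to the operator-level logits $z_{t,j,i}$ as free variables, and not with respect to $\theta$. The mask excludes infeasible operators, so they receive no gradient. The degenerate case $\pi=1$ also has to be handled explicitly so that the word ``increases'' is stated correctly. The calculus itself is routine.
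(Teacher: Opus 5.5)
Your proposal is correct and follows the paper's proof essentially step for step: you write $\log\pi_{t,j,i}(O^\star_{t,j,i})$ as the selected logit minus the log-sum-exp over $\mathcal{M}_{t,j,i}$, differentiate to get $\mathbf{1}\{O=O^\star_{t,j,i}\}-\pi_{t,j,i}(O)$, multiply by $-C_{t,j}$, and substitute into $z^{+}=z-\eta\,\partial\ell_{t,j,i}/\partial z$. Your sign analysis is slightly sharper than the paper's, which uses only $1-\pi\ge 0$ and $\pi\ge 0$: you use strict positivity of the masked softmax to make the competitor updates strict, and you single out the singleton feasible set as the only case in which the selected logit is left unchanged.
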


\begin{proof}
Because the policy is a masked softmax over the feasible action set
$\mathcal{M}_{t,j,i}$, we have
\begin{equation}
    \pi_{t,j,i}(O)
    =
    \frac{
        \exp\left(
            z_{t,j,i}(O)
        \right)
    }{
        \displaystyle
        \sum_{O'\in\mathcal{M}_{t,j,i}}
        \exp\left(
            z_{t,j,i}(O')
        \right)
    },
    \qquad
    O\in\mathcal{M}_{t,j,i}.
    \label{eq:local_masked_softmax}
\end{equation}
Hence, the log-probability of the selected operator can be written as
\begin{equation}
    \log
    \pi_{t,j,i}
    \left(
        O_{t,j,i}^{\star}
    \right)
    =
    z_{t,j,i}
    \left(
        O_{t,j,i}^{\star}
    \right)
    -
    \log
    \sum_{O'\in\mathcal{M}_{t,j,i}}
    \exp\left(
        z_{t,j,i}(O')
    \right).
    \label{eq:selected_operator_log_probability}
\end{equation}
Differentiating Eq.~\eqref{eq:selected_operator_log_probability}
with respect to the logit of an arbitrary feasible operator $O$ gives
\begin{equation}
    \frac{
        \partial
        \log
        \pi_{t,j,i}
        \left(
            O_{t,j,i}^{\star}
        \right)
    }{
        \partial z_{t,j,i}(O)
    }
    =
    \mathbf{1}
    \left\{
        O=O_{t,j,i}^{\star}
    \right\}
    -
    \pi_{t,j,i}(O).
    \label{eq:softmax_log_gradient}
\end{equation}
Multiplying Eq.~\eqref{eq:softmax_log_gradient} by
$-C_{t,j}$ yields
Eq.~\eqref{eq:action_level_logit_gradient}.

For the selected operator
$O=O_{t,j,i}^{\star}$,
Eq.~\eqref{eq:action_level_logit_gradient} becomes
\begin{equation}
    \frac{
        \partial \ell_{t,j,i}
    }{
        \partial
        z_{t,j,i}
        \left(
            O_{t,j,i}^{\star}
        \right)
    }
    =
    -C_{t,j}
    \left[
        1-
        \pi_{t,j,i}
        \left(
            O_{t,j,i}^{\star}
        \right)
    \right].
    \label{eq:selected_operator_gradient}
\end{equation}
Applying the gradient-descent update
\begin{equation}
    z_{t,j,i}^{+}(O)
    =
    z_{t,j,i}(O)
    -
    \eta
    \frac{
        \partial \ell_{t,j,i}
    }{
        \partial z_{t,j,i}(O)
    }
    \label{eq:logit_gradient_descent}
\end{equation}
to Eq.~\eqref{eq:selected_operator_gradient} gives
Eq.~\eqref{eq:selected_operator_logit_update}.

For an unselected feasible operator
$O\neq O_{t,j,i}^{\star}$,
Eq.~\eqref{eq:action_level_logit_gradient} reduces to
\begin{equation}
    \frac{
        \partial \ell_{t,j,i}
    }{
        \partial z_{t,j,i}(O)
    }
    =
    C_{t,j}
    \pi_{t,j,i}(O).
    \label{eq:unselected_operator_gradient}
\end{equation}
Substituting Eq.~\eqref{eq:unselected_operator_gradient}
into Eq.~\eqref{eq:logit_gradient_descent} gives
Eq.~\eqref{eq:unselected_operator_logit_update}.

Finally, since
\begin{equation}
    1-
    \pi_{t,j,i}
    \left(
        O_{t,j,i}^{\star}
    \right)
    \geq 0,
    \qquad
    \pi_{t,j,i}(O)\geq 0,
\end{equation}
the directions of both updates are determined by the sign of
$C_{t,j}$. A positive credit increases the selected logit and
decreases every competing feasible logit, while a negative credit
reverses these directions.
\end{proof}

\begin{remark}[Scope of the action-level result]
\label{rem:action_level_scope}
Corollary~\ref{cor:action_level_credit} describes a local
gradient-descent update in the logits associated with one fixed
decoding history. In the implemented policy, the logits are produced
by shared parameters $\theta$, and gradients from multiple decoding
positions and sampled pipelines are aggregated in a single update.
Consequently, the corollary does not imply that the probability of
every selected operator changes monotonically after an arbitrary
finite parameter update. It instead characterizes the intended
first-order contribution of an individual action-level loss term.
\end{remark}

\subsection{Statistical Analysis of Contextual Credit Estimation}
\label{sec:first_order_finite_sample}

This section analyzes how the context order used for implementation-level
credit assignment affects finite-sample conditional-mean estimation. We
compare zero-order, first-order, higher-order, and full-path contextual
estimators under a fixed finite sample of activated implementation
positions.

\paragraph{Context hierarchy.}
Consider \(n\) activated implementation positions indexed by
\(s\in\{1,\ldots,n\}\). Let \(\mathcal H\) denote the space of possible
implementation histories, and let \(H_s\in\mathcal H\) denote the complete
implementation history up to and including position \(s\). Let \(Y_s\)
denote the standardized terminal credit assigned to that position.

For each context order \(r\in\{0,1,\ldots,R\}\), let
\(\mathcal X_r\) denote the ambient space of possible order-\(r\) contexts,
and let
\begin{equation}
    \phi_r:\mathcal H\to\mathcal X_r
    \label{eq:d2-context-map}
\end{equation}
be a deterministic context-extraction map. Define
\begin{equation}
    X_s^{(r)}
    :=
    \phi_r(H_s)
    \in
    \mathcal X_r.
    \label{eq:d2-context}
\end{equation}
In particular,
\begin{equation}
    X_s^{(0)}=C_s,
    \qquad
    X_s^{(1)}=(C_s^{-},C_s),
    \label{eq:d2-zero-first-contexts}
\end{equation}
where \(C_s^{-}\) and \(C_s\) denote the predecessor and current
implementations, respectively. For \(r\ge2\), \(X_s^{(r)}\) contains a
longer implementation suffix, while \(X_s^{(R)}\) may represent the complete
implementation history.

We assume that the context representations form nested refinements. For every
\(r\in\{0,\ldots,R-1\}\), there exists a deterministic projection
\begin{equation}
    \rho_{r+1\to r}:
    \mathcal X_{r+1}\to\mathcal X_r
    \label{eq:d2-adjacent-projection}
\end{equation}
such that
\begin{equation}
    X_s^{(r)}
    =
    \rho_{r+1\to r}\!\left(X_s^{(r+1)}\right)
    \qquad
    \text{for every }s.
    \label{eq:d2-nested-refinement}
\end{equation}
For \(r\ge1\), let
\(\rho_{r\to1}:\mathcal X_r\to\mathcal X_1\) denote the corresponding
composition of adjacent projections, with
\(\rho_{1\to1}\) equal to the identity. Hence,
\begin{equation}
    X_s^{(1)}
    =
    \rho_{r\to1}\!\left(X_s^{(r)}\right),
    \qquad
    r\ge1.
    \label{eq:d2-first-order-projection}
\end{equation}

The set of order-\(r\) contexts observed in the realized sample is
\begin{equation}
    \mathcal X_r^{(n)}
    :=
    \left\{
        X_s^{(r)}:
        s=1,\ldots,n
    \right\}
    \subseteq
    \mathcal X_r.
    \label{eq:d2-active-context-set}
\end{equation}
For each \(x\in\mathcal X_r^{(n)}\), define
\begin{equation}
    \mathcal I_r(x)
    :=
    \left\{
        s:
        X_s^{(r)}=x
    \right\},
    \qquad
    n_r(x)
    :=
    |\mathcal I_r(x)|,
    \label{eq:d2-cell-index-count}
\end{equation}
and let
\begin{equation}
    d_r
    :=
    |\mathcal X_r^{(n)}|
    \label{eq:d2-active-cell-count}
\end{equation}
denote the number of active order-\(r\) context cells. By construction,
\(n_r(x)\ge1\) for every \(x\in\mathcal X_r^{(n)}\).

\paragraph{First-order sufficiency and fixed-design observation model.}
We assume that the first-order context is sufficient for the conditional
expected terminal credit:
\begin{equation}
    \mathbb E[Y_s\mid H_s]
    =
    \mu\!\left(X_s^{(1)}\right),
    \label{eq:d2-first-order-sufficiency}
\end{equation}
where
\begin{equation}
    \mu(x)
    :=
    \mathbb E[Y_s\mid X_s^{(1)}=x],
    \qquad
    x\in\mathcal X_1,
    \label{eq:d2-mu-definition}
\end{equation}
is assumed not to depend on the position index \(s\).

For the exact finite-sample calculation below, we additionally condition on
the realized context design
\[
    \mathbf X
    :=
    \left\{
        X_s^{(r)}:
        s=1,\ldots,n,\;
        r=0,\ldots,R
    \right\}
\]
and assume the homoskedastic, conditionally uncorrelated noise model
\begin{equation}
    Y_s
    =
    \mu\!\left(X_s^{(1)}\right)
    +
    \epsilon_s,
    \label{eq:d2-observation-model}
\end{equation}
with
\begin{equation}
    \mathbb E[\epsilon_s\mid\mathbf X]
    =
    0,
    \qquad
    \operatorname{Cov}
    \!\left(
        \epsilon_s,\epsilon_t
        \mid
        \mathbf X
    \right)
    =
    \sigma^2\mathbf 1\{s=t\}.
    \label{eq:d2-noise-assumption}
\end{equation}
All expectations and variances in the remainder of this subsection are
conditional on \(\mathbf X\); this conditioning is suppressed for
notational simplicity.

For each context order \(r\), define the empirical cell-mean estimator
\begin{equation}
    \widehat q_r(x)
    :=
    \frac{1}{n_r(x)}
    \sum_{s\in\mathcal I_r(x)}
    Y_s,
    \qquad
    x\in\mathcal X_r^{(n)}.
    \label{eq:d2-cell-mean-estimator}
\end{equation}

\paragraph{Design-weighted conditional-mean estimation risk.}
We evaluate each estimator at the observed design points:
\begin{equation}
    \mathcal R_r
    :=
    \frac{1}{n}
    \sum_{s=1}^{n}
    \mathbb E
    \left[
        \left(
            \widehat q_r\!\left(X_s^{(r)}\right)
            -
            \mu\!\left(X_s^{(1)}\right)
        \right)^2
    \right].
    \label{eq:d2-risk}
\end{equation}

For a zero-order context \(u\in\mathcal X_0^{(n)}\), define its
design-weighted first-order mean as
\begin{equation}
    \nu(u)
    :=
    \frac{1}{n_0(u)}
    \sum_{s\in\mathcal I_0(u)}
    \mu\!\left(X_s^{(1)}\right).
    \label{eq:d2-zero-order-mean}
\end{equation}
The representation-induced squared approximation bias of the zero-order estimator is
\begin{equation}
    B_0
    :=
    \frac{1}{n}
    \sum_{s=1}^{n}
    \left[
        \mu\!\left(X_s^{(1)}\right)
        -
        \nu\!\left(X_s^{(0)}\right)
    \right]^2.
    \label{eq:d2-zero-order-bias}
\end{equation}

\begin{theorem}[Finite-sample bias--variance comparison under first-order sufficiency]
\label{thm:d2-context-risk}
Under
Eqs.~\eqref{eq:d2-nested-refinement},
\eqref{eq:d2-first-order-sufficiency},
\eqref{eq:d2-observation-model}, and
\eqref{eq:d2-noise-assumption}, the risks of the empirical cell-mean
estimators satisfy
\begin{equation}
    \mathcal R_0
    =
    B_0
    +
    \frac{\sigma^2d_0}{n},
    \label{eq:d2-zero-order-risk}
\end{equation}
and, for every \(r\ge1\),
\begin{equation}
    \mathcal R_r
    =
    \frac{\sigma^2d_r}{n}.
    \label{eq:d2-positive-order-risk}
\end{equation}

The nested refinement property implies
\begin{equation}
    d_0
    \le
    d_1
    \le
    \cdots
    \le
    d_R.
    \label{eq:d2-cell-count-monotonicity}
\end{equation}
Consequently,
\begin{equation}
    \mathcal R_1
    \le
    \mathcal R_r,
    \qquad
    r\ge1.
    \label{eq:d2-first-versus-higher}
\end{equation}
Moreover,
\begin{equation}
    \mathcal R_1
    \le
    \mathcal R_0
    \quad\Longleftrightarrow\quad
    B_0
    \ge
    \frac{\sigma^2(d_1-d_0)}{n}.
    \label{eq:d2-first-versus-zero}
\end{equation}
If the inequality in Eq.~\eqref{eq:d2-first-versus-zero} is strict and
\begin{equation}
    d_r>d_1,
    \qquad
    r\ge2,
    \label{eq:d2-strict-refinement}
\end{equation}
then the first-order estimator is the unique risk minimizer among the
empirical cell-mean estimators considered here:
\begin{equation}
    \mathcal R_1
    <
    \mathcal R_r,
    \qquad
    r\neq1.
    \label{eq:d2-unique-minimizer}
\end{equation}
\end{theorem}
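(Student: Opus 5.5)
The plan is a per-cell bias--variance decomposition, carried out conditionally on the design $\mathbf X$. Fix an order $r$ and a cell $x\in\mathcal X_r^{(n)}$. Substituting the observation model Eq.~\eqref{eq:d2-observation-model} into Eq.~\eqref{eq:d2-cell-mean-estimator} gives
\[
\widehat q_r(x)=\frac{1}{n_r(x)}\sum_{s\in\mathcal I_r(x)}\mu\bigl(X_s^{(1)}\bigr)+\frac{1}{n_r(x)}\sum_{s\in\mathcal I_r(x)}\epsilon_s .
\]
The first term is deterministic given $\mathbf X$. By Eq.~\eqref{eq:d2-noise-assumption}, the second term has mean zero and variance $\sigma^2/n_r(x)$.

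\textbf{Case $r\ge1$.} By Eq.~\eqref{eq:d2-first-order-projection}, $X_s^{(1)}=\rho_{r\to1}(X_s^{(r)})$. Hence all $s$ in one cell $\mathcal I_r(x)$ share the same first-order context, and $\mu(X_s^{(1)})$ is constant on the cell. The deterministic part therefore equals $\mu(X_s^{(1)})$ exactly for every $s\in\mathcal I_r(x)$, so there is no bias. The error at design point $s$ then has second moment $\sigma^2/n_r(X_s^{(r)})$. Averaging over $s$ and grouping by cells gives
\[
\mathcal R_r=\frac1n\sum_{x}n_r(x)\cdot\frac{\sigma^2}{n_r(x)}=\frac{\sigma^2 d_r}{n},
\]
which is Eq.~\eqref{eq:d2-positive-order-risk}.

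\textbf{Case $r=0$.} The deterministic part is $\nu(X_s^{(0)})$ from Eq.~\eqref{eq:d2-zero-order-mean}. The error at $s$ splits as $\bigl[\nu(X_s^{(0)})-\mu(X_s^{(1)})\bigr]$ plus noise. The cross term has zero expectation because the noise mean is zero. Summing the squared bias gives $B_0$, and the variance term gives $\sigma^2 d_0/n$ by the same cell-grouping identity. This yields Eq.~\eqref{eq:d2-zero-order-risk}.

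\textbf{Monotonicity of the cell counts.} By Eq.~\eqref{eq:d2-nested-refinement}, $\rho_{r+1\to r}$ maps the observed set $\mathcal X_{r+1}^{(n)}$ onto $\mathcal X_r^{(n)}$: every observed order-$r$ context is the image of the order-$(r+1)$ context of the same sample. A surjection from a finite set cannot enlarge cardinality, so $d_r\le d_{r+1}$. Combined with $\mathcal R_r=\sigma^2 d_r/n$, this gives Eq.~\eqref{eq:d2-first-versus-higher}.

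\textbf{Comparison with order zero.} Eq.~\eqref{eq:d2-first-versus-zero} is direct algebra: $\sigma^2d_1/n\le B_0+\sigma^2 d_0/n$ rearranges to $B_0\ge\sigma^2(d_1-d_0)/n$.

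\textbf{Uniqueness.} Under the strict version of that inequality, $\mathcal R_1<\mathcal R_0$. The assumption $d_r>d_1$ for $r\ge2$ gives $\mathcal R_1<\mathcal R_r$ for those $r$. Together these give Eq.~\eqref{eq:d2-unique-minimizer}.

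\textbf{Main obstacle.} The only delicate step is the zero-bias claim for $r\ge1$, namely that $\mu\circ\rho_{r\to1}$ is constant on order-$r$ cells. It must be derived carefully from the composed projections and the $s$-independence of $\mu$. Everything else is routine bookkeeping.
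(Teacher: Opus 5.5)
Your proposal is correct and follows essentially the same route as the paper's proof: you split $\widehat q_r(x)$ into a deterministic cell mean plus averaged noise, use $X_s^{(1)}=\rho_{r\to1}(X_s^{(r)})$ to get zero bias for $r\ge1$, group the risk by active cells to obtain $\sigma^2 d_r/n$ (and $B_0+\sigma^2 d_0/n$ for $r=0$), deduce $d_r\le d_{r+1}$ from surjectivity of $\rho_{r+1\to r}$ on the active sets, and finish with direct algebra. One small caveat, which the paper shares: deriving the strict inequality $\mathcal R_1<\mathcal R_r$ for $r\ge2$ from $d_r>d_1$ tacitly requires $\sigma^2>0$.
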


\begin{proof}
Fix \(r\ge1\) and an active context
\(x\in\mathcal X_r^{(n)}\). By
Eq.~\eqref{eq:d2-first-order-projection}, every
\(s\in\mathcal I_r(x)\) has the same first-order context
\(\rho_{r\to1}(x)\). Therefore,
\begin{align}
    \widehat q_r(x)
    &=
    \frac{1}{n_r(x)}
    \sum_{s\in\mathcal I_r(x)}
    \left[
        \mu\!\left(X_s^{(1)}\right)
        +
        \epsilon_s
    \right]
    \notag\\
    &=
    \mu\!\left(\rho_{r\to1}(x)\right)
    +
    \frac{1}{n_r(x)}
    \sum_{s\in\mathcal I_r(x)}
    \epsilon_s.
    \label{eq:d2-positive-order-expansion}
\end{align}
Hence,
\begin{equation}
    \mathbb E[\widehat q_r(x)]
    =
    \mu\!\left(\rho_{r\to1}(x)\right),
    \label{eq:d2-positive-order-unbiased}
\end{equation}
and Eq.~\eqref{eq:d2-noise-assumption} gives
\begin{equation}
    \operatorname{Var}[\widehat q_r(x)]
    =
    \frac{\sigma^2}{n_r(x)}.
    \label{eq:d2-positive-order-variance}
\end{equation}
Grouping the risk in Eq.~\eqref{eq:d2-risk} by active context cells yields
\begin{align}
    \mathcal R_r
    &=
    \frac{1}{n}
    \sum_{x\in\mathcal X_r^{(n)}}
    \sum_{s\in\mathcal I_r(x)}
    \mathbb E
    \left[
        \left(
            \widehat q_r(x)
            -
            \mu\!\left(X_s^{(1)}\right)
        \right)^2
    \right]
    \notag\\
    &=
    \frac{1}{n}
    \sum_{x\in\mathcal X_r^{(n)}}
    n_r(x)
    \operatorname{Var}[\widehat q_r(x)]
    \notag\\
    &=
    \frac{1}{n}
    \sum_{x\in\mathcal X_r^{(n)}}
    n_r(x)
    \frac{\sigma^2}{n_r(x)}
    =
    \frac{\sigma^2d_r}{n}.
    \label{eq:d2-positive-order-risk-proof}
\end{align}
This proves Eq.~\eqref{eq:d2-positive-order-risk}.

We next consider the zero-order estimator. For any
\(u\in\mathcal X_0^{(n)}\),
\begin{align}
    \widehat q_0(u)
    &=
    \frac{1}{n_0(u)}
    \sum_{s\in\mathcal I_0(u)}
    \left[
        \mu\!\left(X_s^{(1)}\right)
        +
        \epsilon_s
    \right]
    \notag\\
    &=
    \nu(u)
    +
    \overline\epsilon_u,
    \label{eq:d2-zero-order-expansion}
\end{align}
where
\begin{equation}
    \overline\epsilon_u
    :=
    \frac{1}{n_0(u)}
    \sum_{s\in\mathcal I_0(u)}
    \epsilon_s.
    \label{eq:d2-zero-order-average-noise}
\end{equation}
By Eq.~\eqref{eq:d2-noise-assumption},
\begin{equation}
    \mathbb E[\overline\epsilon_u]
    =
    0,
    \qquad
    \operatorname{Var}[\overline\epsilon_u]
    =
    \frac{\sigma^2}{n_0(u)}.
    \label{eq:d2-zero-order-noise-moments}
\end{equation}
For every \(s\in\mathcal I_0(u)\),
\begin{equation}
    \widehat q_0(u)
    -
    \mu\!\left(X_s^{(1)}\right)
    =
    \nu(u)
    -
    \mu\!\left(X_s^{(1)}\right)
    +
    \overline\epsilon_u.
    \label{eq:d2-zero-order-error}
\end{equation}
Squaring, taking expectations, and using
\(\mathbb E[\overline\epsilon_u]=0\) give
\begin{align}
    \mathcal R_0
    &=
    \frac{1}{n}
    \sum_{u\in\mathcal X_0^{(n)}}
    \sum_{s\in\mathcal I_0(u)}
    \left[
        \nu(u)
        -
        \mu\!\left(X_s^{(1)}\right)
    \right]^2
    \notag\\
    &\quad+
    \frac{1}{n}
    \sum_{u\in\mathcal X_0^{(n)}}
    n_0(u)
    \frac{\sigma^2}{n_0(u)}
    \notag\\
    &=
    B_0
    +
    \frac{\sigma^2d_0}{n},
    \label{eq:d2-zero-order-risk-proof}
\end{align}
which proves Eq.~\eqref{eq:d2-zero-order-risk}.

It remains to compare the active cell counts. For every
\(r\in\{0,\ldots,R-1\}\), the restriction of
\(\rho_{r+1\to r}\) to \(\mathcal X_{r+1}^{(n)}\) is surjective onto
\(\mathcal X_r^{(n)}\): every active order-\(r\) context is realized by at
least one sample position, whose order-\((r+1)\) context projects to it.
Therefore,
\begin{equation}
    d_r
    \le
    d_{r+1},
    \label{eq:d2-cell-count-step}
\end{equation}
which proves Eq.~\eqref{eq:d2-cell-count-monotonicity}. Combining this
monotonicity with Eq.~\eqref{eq:d2-positive-order-risk} proves
Eq.~\eqref{eq:d2-first-versus-higher}.

Finally, using
Eqs.~\eqref{eq:d2-zero-order-risk} and
\eqref{eq:d2-positive-order-risk},
\begin{align}
    \mathcal R_1
    \le
    \mathcal R_0
    &\quad\Longleftrightarrow\quad
    \frac{\sigma^2d_1}{n}
    \le
    B_0+\frac{\sigma^2d_0}{n}
    \notag\\
    &\quad\Longleftrightarrow\quad
    B_0
    \ge
    \frac{\sigma^2(d_1-d_0)}{n}.
    \label{eq:d2-zero-first-algebra}
\end{align}
This proves Eq.~\eqref{eq:d2-first-versus-zero}. If the inequality is
strict, then \(\mathcal R_1<\mathcal R_0\). If additionally
Eq.~\eqref{eq:d2-strict-refinement} holds, then
Eq.~\eqref{eq:d2-positive-order-risk} gives
\(\mathcal R_1<\mathcal R_r\) for every \(r\ge2\), proving
Eq.~\eqref{eq:d2-unique-minimizer}.
\end{proof}

\paragraph{Interpretation.}
Theorem~\ref{thm:d2-context-risk} separates the effect of context order into
representation-induced approximation bias squared and estimation variance.
Zero-order credit pools positions that share the current implementation but
have different predecessors, which produces the bias term \(B_0\).
Under first-order sufficiency, every context order \(r\ge1\) has zero
approximation bias. Higher-order and full-path contexts induce partitions
that are no coarser than the first-order partition and therefore cannot
reduce the number of active credit cells. Under the homoskedastic,
conditionally uncorrelated model, their finite-sample variance is consequently
no smaller than that of the first-order estimator, and is strictly larger
whenever the refinement increases the number of active cells.

\paragraph{Relation to the online credit update.}
Suppose that the \(k\)-th activation of a context \(x\) yields credit
\(Y_k(x)\). The recursion
\begin{equation}
    Q_k(x)
    =
    \left(1-\frac{1}{k}\right)Q_{k-1}(x)
    +
    \frac{1}{k}Y_k(x)
    \label{eq:d2-online-sample-mean}
\end{equation}
satisfies
\begin{equation}
    Q_k(x)
    =
    \frac{1}{k}
    \sum_{m=1}^{k}Y_m(x),
    \qquad
    k\ge1,
    \label{eq:d2-online-sample-mean-identity}
\end{equation}
regardless of the initialization \(Q_0(x)\). Hence, the estimator in
Eq.~\eqref{eq:d2-cell-mean-estimator} is obtained exactly when the
context-specific learning rate is the reciprocal activation count. With a
constant learning rate, the update becomes an exponentially weighted
estimator; in that case, Theorem~\ref{thm:d2-context-risk} characterizes the
context-partition bias--variance tradeoff but not the exact finite-sample risk
of the implemented constant-step estimator.

\paragraph{Scope of the noise model.}
The conditionally uncorrelated position-level noise model in
Eq.~\eqref{eq:d2-noise-assumption} is an idealized assumption used to isolate
the statistical effect of context-partition granularity. In the implemented
algorithm, multiple transitions, including repeated occurrences of the same
transition, may receive the same standardized terminal credit from one
candidate. Generation-level standardization may also induce dependence
across observations. Under a general correlated-noise model,
\begin{equation}
    \operatorname{Var}[\widehat q_r(x)]
    =
    \frac{1}{n_r(x)^2}
    \sum_{s,t\in\mathcal I_r(x)}
    \operatorname{Cov}(\epsilon_s,\epsilon_t\mid\mathbf X),
    \label{eq:d2-correlated-variance}
\end{equation}
so additional covariance terms enter the finite-sample risk and the simplified
forms in
Eqs.~\eqref{eq:d2-zero-order-risk}--\eqref{eq:d2-positive-order-risk}
need not hold.

\paragraph{Scope of the first-order assumption.}
The result does not claim that first-order contextual credit is universally
optimal. Its comparison relies on
Eq.~\eqref{eq:d2-first-order-sufficiency}. If longer histories contain
additional information about the conditional expected terminal credit, then
higher-order estimators may reduce approximation bias, and that benefit must
be compared against their larger finite-sample estimation variance.
\subsection{Spectral Expressivity of Directed-Graph Algorithm Spaces}
\label{app:spectral-expressivity}

This section characterizes the structural expressivity of the directed-graph
representation through the number and asymptotic growth of graph-valid
operator pipelines. Throughout this section, a pipeline refers only to an
operator-level directed walk in the current graph. Different code
implementations of the same operator and execution-level validity are not
distinguished. Thus, the analysis concerns the structural pipeline space
rather than the number of fully instantiated executable algorithms.

We first express the number of fixed-length pipelines exactly using powers of
the graph adjacency matrix. We then contrast the finite-depth behavior of
directed acyclic graphs with the unbounded iterative behavior enabled by
directed cycles. Finally, we show that the exponential limsup growth rate of
the pipeline space is determined by the spectral radius of the dominant
input--output-relevant strongly connected components.

\paragraph{Input--output-relevant graph.}
Fix a generation \(t\) and write the current directed operator graph as
\begin{equation}
    D=(V,E).
    \label{eq:d3-graph}
\end{equation}
Let
\begin{equation}
    s:=O_{\mathrm{in}},
    \qquad
    z:=O_{\mathrm{out}}
    \label{eq:d3-sentinels}
\end{equation}
denote the input and output sentinel nodes. We assume that \(s\neq z\) and
that \(D\) contains at least one directed path from \(s\) to \(z\), as
guaranteed by the graph-validity checks.

A vertex is called input--output relevant if it is reachable from \(s\) and
can reach \(z\). Define
\begin{equation}
    V_{\mathrm{io}}
    :=
    \left\{
        v\in V:
        s\rightsquigarrow v
        \ \text{and}\
        v\rightsquigarrow z
    \right\},
    \label{eq:d3-vio}
\end{equation}
and let
\begin{equation}
    D_{\mathrm{io}}
    :=
    D[V_{\mathrm{io}}]
    \label{eq:d3-dio}
\end{equation}
be the subgraph induced by these vertices. Vertices outside
\(V_{\mathrm{io}}\) cannot occur in any directed walk from \(s\) to \(z\)
and are therefore irrelevant to the following analysis.

Let
\begin{equation}
    A\in\{0,1\}^{|V_{\mathrm{io}}|\times |V_{\mathrm{io}}|}
    \label{eq:d3-adjacency-space}
\end{equation}
be the adjacency matrix of \(D_{\mathrm{io}}\), with entries
\begin{equation}
    A_{uv}
    :=
    \begin{cases}
        1, & (u,v)\in E,\\
        0, & (u,v)\notin E.
    \end{cases}
    \label{eq:d3-adjacency}
\end{equation}
For \(v\in V_{\mathrm{io}}\), let \(\mathbf e_v\) denote the corresponding
standard basis vector.

For every integer \(\ell\ge 0\), let \(W_\ell\) denote the number of directed
walks from \(s\) to \(z\) containing exactly \(\ell\) edges. A pipeline with
\(L\) functional operators contains \(L+1\) graph edges and is therefore
counted by \(W_{L+1}\).

\begin{lemma}[Exact counting of fixed-length pipelines]
\label{lem:d3-walk-count}
For every integer \(\ell\ge 0\),
\begin{equation}
    W_\ell
    =
    \mathbf e_s^\top A^\ell \mathbf e_z.
    \label{eq:d3-fixed-count}
\end{equation}
Consequently, when candidate pipelines are required to contain at least one
functional operator, the number of pipelines containing at most
\(L_{\max}\) functional operators is
\begin{equation}
    W_{\le L_{\max}}
    =
    \sum_{\ell=2}^{L_{\max}+1}
    \mathbf e_s^\top A^\ell \mathbf e_z.
    \label{eq:d3-bounded-count}
\end{equation}
\end{lemma}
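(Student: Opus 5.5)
The plan is to prove the identity \(W_\ell=\mathbf e_s^\top A^\ell\mathbf e_z\) by induction on \(\ell\), using the standard path-counting interpretation of adjacency-matrix powers. Then the bounded count follows by summing over the admissible lengths. Before starting, I will fix one convention: the sentinels \(s\) and \(z\) lie in \(V_{\mathrm{io}}\). Since \(D\) contains at least one \(s\)-to-\(z\) path, \(s\) reaches \(z\) and trivially reaches itself, so both are input--output relevant. The vectors \(\mathbf e_s\) and \(\mathbf e_z\) are therefore well defined.

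The first step is a reduction. I will show that every directed walk from \(s\) to \(z\) in \(D\) stays inside \(V_{\mathrm{io}}\). Each vertex on such a walk is reached from \(s\) along the walk prefix and reaches \(z\) along the walk suffix. Since \(D_{\mathrm{io}}\) is the induced subgraph, every edge of the walk is an edge of \(D_{\mathrm{io}}\), and so it is recorded in \(A\). Hence counting walks in \(D\) and counting walks in \(D_{\mathrm{io}}\) give the same \(W_\ell\).

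The second step is the core claim: for all \(u,v\in V_{\mathrm{io}}\), \((A^\ell)_{uv}\) equals the number of \(\ell\)-edge walks from \(u\) to \(v\) in \(D_{\mathrm{io}}\). I will prove this by induction on \(\ell\).
\begin{itemize}
\item Base case: \(A^0=I\), which counts the single empty walk from \(u\) to \(u\).
\item Inductive step: write \((A^{\ell+1})_{uv}=\sum_w (A^\ell)_{uw}A_{wv}\). Each \((\ell+1)\)-edge walk decomposes uniquely into an \(\ell\)-edge walk \(u\rightsquigarrow w\) followed by one edge \((w,v)\), so the sum counts exactly these walks.
\end{itemize}
Taking \(u=s\) and \(v=z\) gives \(W_\ell=\mathbf e_s^\top A^\ell\mathbf e_z\).

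The third step handles the bounded count. A pipeline with \(L\) functional operators is a walk with \(\ell=L+1\) edges. This correspondence is bijective: the walk is exactly the operator sequence in Eq.~\eqref{eq:pipeline}, and repeated operators are allowed, just as vertices may repeat in walks. The constraint \(1\le L\le L_{\max}\) translates to \(2\le \ell\le L_{\max}+1\), and the length classes are disjoint. Summing \(W_\ell\) over these \(\ell\) therefore yields Eq.~\eqref{eq:d3-bounded-count}.

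There is no serious obstacle here. The only delicate point is the convention issue: the sentinels are not functional operators but are vertices of \(D\). This means the length \(\ell=1\), a direct \(s\to z\) edge, must be excluded, which explains why the sum starts at \(\ell=2\). It also means that walks through irrelevant vertices must be ruled out, which is exactly what the first-step reduction does.
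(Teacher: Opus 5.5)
Your proposal is correct and follows essentially the same route as the paper: induction on $\ell$ shows that $(A^\ell)_{uv}$ counts $\ell$-edge walks, using $(A^{\ell+1})_{uv}=\sum_w (A^\ell)_{uw}A_{wv}$; this is specialized to $u=s$, $v=z$ and then summed over $2\le\ell\le L_{\max}+1$. Your extra reduction showing that every $s$-to-$z$ walk stays inside $V_{\mathrm{io}}$, and your explicit pipeline--walk bijection, only make explicit what the paper states in its setup or leaves implicit; the bijection, like the paper, tacitly relies on $O_{\mathrm{in}}$ having no incoming edges and $O_{\mathrm{out}}$ no outgoing edges, so that sentinels never appear inside a walk.
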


\begin{proof}
We use the standard identity
\begin{equation}
    (A^\ell)_{uv}
    =
    \text{the number of length-\(\ell\) directed walks from \(u\) to \(v\)}.
    \label{eq:d3-standard-walk-identity}
\end{equation}
For \(\ell=0\), \(A^0=I\), which counts the unique zero-edge walk from each
vertex to itself. For \(\ell=1\), the identity follows directly from the
definition of the adjacency matrix.

Assume that the identity holds for some \(\ell\ge 1\). Then
\begin{equation}
    (A^{\ell+1})_{uv}
    =
    \sum_{w\in V_{\mathrm{io}}}
    (A^\ell)_{uw}A_{wv}.
    \label{eq:d3-induction}
\end{equation}
By the induction hypothesis, \((A^\ell)_{uw}\) counts all length-\(\ell\)
walks from \(u\) to \(w\). Multiplication by \(A_{wv}\) retains exactly those
walks that can be extended by the edge \((w,v)\). Summing over \(w\) counts
all length-\((\ell+1)\) walks from \(u\) to \(v\).

Setting \(u=s\) and \(v=z\) proves
Eq.~\eqref{eq:d3-fixed-count}. Summing the exact counts over all permitted
lengths proves Eq.~\eqref{eq:d3-bounded-count}.
\end{proof}

\begin{proposition}[Finite depth of DAG representations]
\label{prop:d3-dag-depth}
Suppose that \(D_{\mathrm{io}}\) is a directed acyclic graph and let
\begin{equation}
    n_{\mathrm{io}}
    :=
    |V_{\mathrm{io}}|.
    \label{eq:d3-nio}
\end{equation}
Then
\begin{equation}
    A^{n_{\mathrm{io}}}=0,
    \label{eq:d3-nilpotent}
\end{equation}
and hence
\begin{equation}
    W_\ell=0,
    \qquad
    \ell\ge n_{\mathrm{io}}.
    \label{eq:d3-dag-zero}
\end{equation}
Equivalently, a graph-valid pipeline in a fixed-size DAG contains at most
\(n_{\mathrm{io}}-2\) functional operators.

By contrast, if \(D_{\mathrm{io}}\) contains a directed cycle, then it admits
input-to-output walks of unbounded length.
\end{proposition}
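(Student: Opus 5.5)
The proof rests on Lemma~\ref{lem:d3-walk-count}, which identifies $(A^\ell)_{uv}$ with the number of length-$\ell$ directed walks from $u$ to $v$ in $D_{\mathrm{io}}$. For nilpotency, I would argue combinatorially rather than through eigenvalues. In a DAG on $n_{\mathrm{io}}$ vertices, a directed walk can never repeat a vertex: a repetition $v_a=v_b$ with $a<b$ would make the segment between the two visits a closed walk, and any closed walk of positive length contains a directed cycle. A walk with $\ell$ edges visits $\ell+1$ vertices, so every walk has $\ell\le n_{\mathrm{io}}-1$ edges. Therefore every entry of $A^\ell$ vanishes for $\ell\ge n_{\mathrm{io}}$, which gives $A^{n_{\mathrm{io}}}=0$. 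As a cross-check, one can topologically order $V_{\mathrm{io}}$; then $A$ becomes strictly upper triangular and nilpotent of index at most $n_{\mathrm{io}}$.

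The identity $W_\ell=\mathbf e_s^\top A^\ell\mathbf e_z=0$ for $\ell\ge n_{\mathrm{io}}$ then follows at once from Eq.~\eqref{eq:d3-fixed-count}. For the "at most $n_{\mathrm{io}}-2$ functional operators" claim, a pipeline with $L$ functional operators uses $L+1$ edges, so $L+1\le n_{\mathrm{io}}-1$. Equivalently, the pipeline's vertices are distinct, and two of them are the sentinels $s$ and $z$, which both lie in $V_{\mathrm{io}}$. Here I must be careful that $s,z\in V_{\mathrm{io}}$: they trivially reach themselves, and the assumed $s$-to-$z$ path gives the remaining reachability.

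For the cyclic case, let $C$ be a directed cycle in $D_{\mathrm{io}}$ with length $c\ge1$, and pick a vertex $v$ on it. Because $v\in V_{\mathrm{io}}$, there is a walk $s\rightsquigarrow v$ and a walk $v\rightsquigarrow z$. I must also check that these walks stay inside $D_{\mathrm{io}}$. This holds because every vertex on such a walk is reachable from $s$ and can reach $z$, hence belongs to $V_{\mathrm{io}}$. Concatenating the first walk, $m$ traversals of $C$, and the second walk yields an $s$-to-$z$ walk with $a+mc+b$ edges for every $m\ge0$. So walk lengths are unbounded and $W_\ell>0$ for infinitely many $\ell$.

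The main obstacle is bookkeeping rather than depth. One point is the inclusion of the sentinels in $V_{\mathrm{io}}$ and the count conversion to $n_{\mathrm{io}}-2$. The other is noting that the unbounded-length statement ignores the $L_{\max}$ cap, which is imposed by the sampler and not by the graph.
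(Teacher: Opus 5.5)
Your proposal is correct and follows essentially the same route as the paper. A DAG walk cannot repeat a vertex, so walk lengths are bounded by $n_{\mathrm{io}}-1$; via the walk-counting lemma this gives $A^{n_{\mathrm{io}}}=0$ and the $n_{\mathrm{io}}-2$ operator bound, and the cyclic case is handled by concatenating an entrance path, repeated cycle traversals, and an exit path. Your extra bookkeeping (checking that $s,z\in V_{\mathrm{io}}$ and that the entrance and exit walks stay inside $D_{\mathrm{io}}$) makes explicit details the paper leaves implicit.
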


\begin{proof}
A directed walk in a DAG cannot visit the same vertex twice, because a
repeated vertex would induce a directed cycle. Therefore, every directed walk
in an \(n_{\mathrm{io}}\)-vertex DAG contains at most
\(n_{\mathrm{io}}-1\) edges. It follows that every entry of
\(A^{n_{\mathrm{io}}}\) is zero, proving
Eq.~\eqref{eq:d3-nilpotent}. Lemma~\ref{lem:d3-walk-count} then gives
Eq.~\eqref{eq:d3-dag-zero}. Since a pipeline with \(L\) functional operators
contains \(L+1\) edges, its number of functional operators is at most
\(n_{\mathrm{io}}-2\).

Now suppose that \(D_{\mathrm{io}}\) contains a directed cycle. Every vertex
on that cycle is reachable from \(s\) and can reach \(z\). Concatenating a
fixed entrance path, an arbitrary number of traversals of the cycle, and a
fixed exit path produces input-to-output walks of unbounded length.
\end{proof}

\paragraph{Spectral growth rate.}
Let
\begin{equation}
    \mathfrak S_{\mathrm{io}}
    \label{eq:d3-scc-collection}
\end{equation}
denote the collection of strongly connected components of
\(D_{\mathrm{io}}\). For each \(S\in\mathfrak S_{\mathrm{io}}\), let
\begin{equation}
    A_S
    \label{eq:d3-scc-adjacency}
\end{equation}
be the adjacency matrix of the subgraph induced by \(S\). Define the
input--output-relevant spectral radius as
\begin{equation}
    \rho^\star
    :=
    \max_{S\in\mathfrak S_{\mathrm{io}}}
    \rho(A_S),
    \label{eq:d3-rho-star}
\end{equation}
where \(\rho(\cdot)\) denotes the matrix spectral radius.

\begin{theorem}[Spectral growth of the structural pipeline space]
\label{thm:d3-spectral-growth}
Suppose that \(D_{\mathrm{io}}\) admits input-to-output walks of unbounded
length. Then
\begin{equation}
    \limsup_{\ell\to\infty}
    W_\ell^{1/\ell}
    =
    \rho^\star.
    \label{eq:d3-spectral-growth}
\end{equation}
Thus, \(\rho^\star\) is the exponential limsup growth rate of the number of
fixed-length structural pipelines.
\end{theorem}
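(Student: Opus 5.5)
The plan is to prove the two inequalities $\limsup_\ell W_\ell^{1/\ell}\le\rho^\star$ and $\limsup_\ell W_\ell^{1/\ell}\ge\rho^\star$ separately. Both rest on Lemma~\ref{lem:d3-walk-count}, which gives $W_\ell=\mathbf e_s^\top A^\ell\mathbf e_z$, and on Gelfand's formula $\|M^\ell\|^{1/\ell}\to\rho(M)$ for any fixed matrix norm.

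For the upper bound, order the vertices of $V_{\mathrm{io}}$ according to a topological order of the condensation of $D_{\mathrm{io}}$. In this order $A$ is block upper triangular, and its diagonal blocks are the matrices $A_S$ for $S\in\mathfrak S_{\mathrm{io}}$. A singleton component without a self-loop, such as the sentinels, contributes the $1\times1$ zero block, whose spectral radius is $0$. The spectrum of a block triangular matrix is the union of the spectra of its diagonal blocks, so $\rho(A)=\rho^\star$. Since $A$ is entrywise nonnegative, $W_\ell\le\max_{u,v}(A^\ell)_{uv}\le\|A^\ell\|_\infty$. Gelfand's formula then gives $\limsup_\ell W_\ell^{1/\ell}\le\rho(A)=\rho^\star$.

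For the lower bound, I first note that $\rho^\star\ge1>0$. Unbounded walk lengths force a repeated vertex, hence a directed cycle inside some component $S$, and a nonzero irreducible $0/1$ matrix has spectral radius at least $1$. Fix a component $S$ with $\rho(A_S)=\rho^\star$; it is nontrivial by the previous observation. Pick any $u\in S$. Because $u\in V_{\mathrm{io}}$, there is a walk from $s$ to $u$ of some length $a$ and a walk from $u$ to $z$ of some length $b$. Concatenating these with closed walks at $u$ inside $S$ gives
\[
W_{a+m+b}\ \ge\ (A^m)_{uu}\ \ge\ (A_S^m)_{uu}.
\]
It therefore suffices to show $\limsup_m (A_S^m)_{uu}^{1/m}\ge\rho(A_S)$, because the fixed shift $a+b$ does not change an exponential limsup rate.

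The main obstacle is passing from a norm bound to a single diagonal entry, since $A_S$ may be periodic. To handle it, I use strong connectivity. For every pair $v,w\in S$ there are paths $u\to v$ and $w\to u$ of lengths $d_{uv},d_{wu}\le|S|-1$. Hence
\[
(A_S^{m+d_{uv}+d_{wu}})_{uu}\ \ge\ (A_S^m)_{vw}.
\]
Consequently $\max_{v,w}(A_S^m)_{vw}\le\max_{0\le c\le 2|S|-2}(A_S^{m+c})_{uu}$. The left-hand side is at least $\|A_S^m\|_\infty/|S|$, and its $m$-th root tends to $\rho(A_S)$ by Gelfand's formula. Along a subsequence on which a single value of $c$ attains the maximum, the diagonal entries therefore have $m$-th roots whose limsup is at least $\rho(A_S)$. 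Together with the displayed inequality, this yields $\limsup_\ell W_\ell^{1/\ell}\ge\rho^\star$, which completes the proof.
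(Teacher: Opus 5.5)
Your proof is correct and has the same structure as the paper's. The upper bound uses the block-triangular condensation form and Gelfand's formula, and the lower bound concatenates entrance and exit walks with walks inside a strongly connected component of maximal spectral radius. The one difference is that the paper cites Perron--Frobenius for $\limsup_m [(B^m)_{uv}]^{1/m}=\rho(B)$, whereas you derive the needed bound on a diagonal entry directly from Gelfand's formula and strong connectivity. Your bounded shift $c\le 2|S|-2$ takes care of periodicity, which makes your argument self-contained and avoids Perron--Frobenius altogether.
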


\begin{proof}
We first establish the upper bound. Arrange the strongly connected components
of \(D_{\mathrm{io}}\) in a topological order of the condensation graph. The
adjacency matrix \(A\) then has Frobenius block upper-triangular form, with
diagonal blocks \(A_S\). Therefore,
\begin{equation}
    \rho(A)
    =
    \max_{S\in\mathfrak S_{\mathrm{io}}}
    \rho(A_S)
    =
    \rho^\star.
    \label{eq:d3-rho-A}
\end{equation}
Using the induced matrix \(1\)-norm and the nonnegativity of \(A^\ell\),
\begin{equation}
    W_\ell
    =
    (A^\ell)_{sz}
    \le
    \|A^\ell\|_1.
    \label{eq:d3-upper-norm}
\end{equation}
By Gelfand's formula,
\begin{equation}
    \lim_{\ell\to\infty}
    \|A^\ell\|_1^{1/\ell}
    =
    \rho(A)
    =
    \rho^\star.
    \label{eq:d3-gelfand}
\end{equation}
Combining Eqs.~\eqref{eq:d3-upper-norm} and \eqref{eq:d3-gelfand} yields
\begin{equation}
    \limsup_{\ell\to\infty}
    W_\ell^{1/\ell}
    \le
    \rho^\star.
    \label{eq:d3-upper-bound}
\end{equation}

For the lower bound, choose a strongly connected component
\(S^\star\in\mathfrak S_{\mathrm{io}}\) satisfying
\begin{equation}
    \rho(A_{S^\star})=\rho^\star,
    \label{eq:d3-dominant-scc}
\end{equation}
and write
\begin{equation}
    B:=A_{S^\star}.
    \label{eq:d3-B}
\end{equation}
Choose arbitrary vertices \(u,v\in S^\star\). Since \(S^\star\) is
input--output relevant, there exists a path from \(s\) to \(u\) of some fixed
length \(a\), and a path from \(v\) to \(z\) of some fixed length \(b\).

The matrix \(B\) is irreducible because \(S^\star\) is strongly connected. A
standard consequence of the Perron--Frobenius theorem for irreducible
nonnegative matrices is
\begin{equation}
    \limsup_{m\to\infty}
    \left[(B^m)_{uv}\right]^{1/m}
    =
    \rho(B)
    =
    \rho^\star.
    \label{eq:d3-pf-limsup}
\end{equation}
Every length-\(m\) walk from \(u\) to \(v\) inside \(S^\star\) can be
concatenated with the fixed entrance and exit paths. Consequently,
\begin{equation}
    W_{a+m+b}
    \ge
    (B^m)_{uv}.
    \label{eq:d3-lower-concatenation}
\end{equation}
Taking \((a+m+b)\)-th roots and then the limit superior gives
\begin{align}
    \limsup_{\ell\to\infty} W_\ell^{1/\ell}
    &\ge
    \limsup_{m\to\infty}
    \left[(B^m)_{uv}\right]^{1/(a+m+b)}
    \notag\\
    &=
    \rho^\star.
    \label{eq:d3-lower-bound}
\end{align}
Together with Eq.~\eqref{eq:d3-upper-bound}, this proves
Eq.~\eqref{eq:d3-spectral-growth}.
\end{proof}

\paragraph{Primitive dominant-core case.}
The limit superior in Theorem~\ref{thm:d3-spectral-growth} is necessary in
general because a strongly connected component may be periodic. Under a
primitive dominant-core assumption, the pipeline count admits a sharper
asymptotic form.

\begin{corollary}[Primitive-core asymptotic pipeline count]
\label{cor:d3-primitive}
Suppose that \(D_{\mathrm{io}}\) contains a unique cyclic strongly connected
component \(S^\star\), and that
\begin{equation}
    B:=A_{S^\star}
    \label{eq:d3-primitive-B}
\end{equation}
is primitive. Let
\begin{equation}
    \rho:=\rho(B).
    \label{eq:d3-primitive-rho}
\end{equation}
Then there exists a constant \(\kappa>0\) such that
\begin{equation}
    W_\ell
    =
    \kappa\rho^\ell\bigl(1+o(1)\bigr)
    \qquad
    \text{as }\ell\to\infty.
    \label{eq:d3-primitive-asymptotic}
\end{equation}
Equivalently,
\begin{equation}
    \lim_{\ell\to\infty}
    \frac{W_\ell}{\rho^\ell}
    =
    \kappa.
    \label{eq:d3-primitive-limit}
\end{equation}
\end{corollary}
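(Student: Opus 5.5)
The proof will decompose each input-to-output walk according to where it enters and leaves the unique cyclic component \(S^\star\), and then apply the Perron--Frobenius asymptotics for primitive matrices.

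Since \(S^\star\) is the only cyclic strongly connected component of \(D_{\mathrm{io}}\), every other component is a single vertex without a self-loop. Consider a walk from \(s\) to \(z\) with \(\ell\) edges.

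\textbf{Step 1: decomposition of walks.} The vertices outside \(S^\star\) form an acyclic part, so they can be visited at most once each. The condensation graph is acyclic, so once a walk leaves \(S^\star\) it cannot return. Hence the walk splits uniquely into three pieces:
\begin{enumerate}
\item an entrance path from \(s\) to some \(u\in S^\star\), of length \(a\), whose vertices other than \(u\) lie outside \(S^\star\);
\item a walk inside \(S^\star\) from \(u\) to some \(v\in S^\star\), of length \(m\);
\item an exit path from \(v\) to \(z\), of length \(b\), whose vertices other than \(v\) lie outside \(S^\star\).
\end{enumerate}
This requires that every sufficiently long walk actually enters \(S^\star\). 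That holds because, by Proposition~\ref{prop:d3-dag-depth}, walks avoiding \(S^\star\) have length below \(n_{\mathrm{io}}\).

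Let \(\alpha_{a}(u)\) denote the number of entrance paths of length \(a\) ending at \(u\), and \(\beta_{b}(v)\) the number of exit paths of length \(b\) starting at \(v\). Both are nonzero only for \(a,b<n_{\mathrm{io}}\). For \(\ell\ge 2n_{\mathrm{io}}\) this gives the finite sum
\begin{equation*}
W_\ell=\sum_{a,b<n_{\mathrm{io}}}\ \sum_{u,v\in S^\star}\alpha_a(u)\,(B^{\ell-a-b})_{uv}\,\beta_b(v).
\end{equation*}

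\textbf{Step 2: primitive asymptotics.} Since \(B\) is primitive, the Perron--Frobenius theorem gives \(B^m=\rho^m\,\mathbf x\mathbf y^\top(1+o(1))\) entrywise. Here \(\mathbf x,\mathbf y>0\) are the right and left Perron vectors, normalized so that \(\mathbf y^\top\mathbf x=1\). Primitivity excludes \(\rho=0\), since \(S^\star\) is cyclic and hence \(\rho\ge1\).

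Substituting into the finite sum, each term is asymptotic to \(\rho^{\ell}\rho^{-a-b}\alpha_a(u)x_u y_v\beta_b(v)\). Therefore
\begin{equation*}
\kappa=\sum_{a,b}\rho^{-a-b}\Bigl(\sum_u\alpha_a(u)x_u\Bigr)\Bigl(\sum_v y_v\beta_b(v)\Bigr).
\end{equation*}
Because there are only finitely many terms, the \(o(1)\) errors combine uniformly.

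\textbf{Step 3: positivity of \(\kappa\).} Every vertex of \(S^\star\) is input--output relevant, so some \(\alpha_a(u)>0\). Take a shortest path from \(s\) to \(S^\star\). By acyclicity of the condensation it lies outside \(S^\star\) before \(u\), so it is a valid entrance path. Symmetrically some \(\beta_b(v)>0\). Combined with the strict positivity of \(\mathbf x\) and \(\mathbf y\), this gives \(\kappa>0\).

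The main obstacle is Step 1: making the decomposition rigorous and unique. The point to nail down is that the acyclicity outside \(S^\star\) together with the DAG structure of the condensation force the entrance and exit segments to have bounded length. Without that bound the sum would be infinite and the uniform \(o(1)\) argument would fail.
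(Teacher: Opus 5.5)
Your proposal is correct and follows the paper's proof essentially step for step. It uses the unique entrance/internal/exit decomposition of sufficiently long walks, the primitive Perron--Frobenius asymptotic $(B^m)_{uv}\sim\rho^m x_u y_v$ summed over finitely many terms, and positivity of $\kappa$ from nonempty entrance/exit sets together with strictly positive Perron vectors. Grouping paths by length and endpoint via $\alpha_a(u),\beta_b(v)$ only repackages the paper's sum over $\mathcal P_{\mathrm{in}}\times\mathcal P_{\mathrm{out}}$.

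One wording fix: a shortest path from $s$ to $S^\star$ avoids $S^\star$ before its endpoint by minimality, not by acyclicity of the condensation.
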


\begin{proof}
Because \(S^\star\) is the unique cyclic strongly connected component, the
subgraph induced by \(V_{\mathrm{io}}\setminus S^\star\) is acyclic.
Consequently, there are only finitely many entrance paths from \(s\) to
\(S^\star\) and finitely many exit paths from \(S^\star\) to \(z\).

Let \(\mathcal P_{\mathrm{in}}\) be the set of directed paths
\[
    p=(p_0,\ldots,p_a)
\]
such that \(p_0=s\), \(p_a\in S^\star\), and
\(p_i\notin S^\star\) for every \(i<a\). Thus, \(p\) first enters
\(S^\star\) at its final vertex. Let
\begin{equation}
    u(p):=p_a\in S^\star,
    \label{eq:d3-entry-vertex}
\end{equation}
and let \(|p|:=a\) denote its number of edges.

Similarly, let \(\mathcal P_{\mathrm{out}}\) be the set of directed paths
\[
    q=(q_0,\ldots,q_b)
\]
such that \(q_0\in S^\star\), \(q_b=z\), and
\(q_i\notin S^\star\) for every \(i>0\). Thus, after its initial vertex,
\(q\) lies entirely outside \(S^\star\). Let
\begin{equation}
    v(q):=q_0\in S^\star,
    \label{eq:d3-exit-vertex}
\end{equation}
and let \(|q|:=b\) denote its number of edges.

Since all walks that avoid \(S^\star\) lie in an acyclic subgraph, their
lengths are uniformly bounded. Hence, for all sufficiently large \(\ell\),
every length-\(\ell\) input-to-output walk passes through \(S^\star\).
Moreover, because the condensation graph is acyclic, once such a walk leaves
\(S^\star\), it cannot return. Therefore, every sufficiently long walk has a
unique decomposition into an entrance path, an internal walk in
\(S^\star\), and an exit path. It follows that
\begin{equation}
    W_\ell
    =
    \sum_{p\in\mathcal P_{\mathrm{in}}}
    \sum_{q\in\mathcal P_{\mathrm{out}}}
    \left(
        B^{\,\ell-|p|-|q|}
    \right)_{u(p),v(q)}
    \label{eq:d3-decomposition}
\end{equation}
for all sufficiently large \(\ell\), with terms having negative exponents
omitted.

Since \(B\) is primitive, the Perron--Frobenius theorem gives strictly
positive right and left eigenvectors \(\mathbf r\) and \(\mathbf h\)
satisfying
\begin{equation}
    B\mathbf r=\rho\mathbf r,
    \qquad
    \mathbf h^\top B=\rho\mathbf h^\top,
    \qquad
    \mathbf h^\top\mathbf r=1,
    \label{eq:d3-perron-vectors}
\end{equation}
and
\begin{equation}
    B^m
    =
    \rho^m\mathbf r\mathbf h^\top
    +
    o(\rho^m).
    \label{eq:d3-pf-asymptotic}
\end{equation}
Hence, for every fixed pair \(u,v\in S^\star\),
\begin{equation}
    (B^m)_{uv}
    =
    \rho^m r_u h_v
    +
    o(\rho^m).
    \label{eq:d3-entry-asymptotic}
\end{equation}
Substituting Eq.~\eqref{eq:d3-entry-asymptotic} into
Eq.~\eqref{eq:d3-decomposition}, and using the finiteness of
\(\mathcal P_{\mathrm{in}}\) and \(\mathcal P_{\mathrm{out}}\), gives
\begin{equation}
    W_\ell
    =
    \kappa\rho^\ell
    +
    o(\rho^\ell),
    \label{eq:d3-kappa-asymptotic}
\end{equation}
where
\begin{equation}
    \kappa
    :=
    \sum_{p\in\mathcal P_{\mathrm{in}}}
    \sum_{q\in\mathcal P_{\mathrm{out}}}
    \rho^{-(|p|+|q|)}
    r_{u(p)}h_{v(q)}.
    \label{eq:d3-kappa}
\end{equation}
The entrance and exit path sets are nonempty, \(\rho>0\), and the Perron
vectors are strictly positive. Therefore, \(\kappa>0\), proving
Eq.~\eqref{eq:d3-primitive-asymptotic}.
\end{proof}

\paragraph{Interpretation.}
Lemma~\ref{lem:d3-walk-count} gives an exact matrix characterization of the
bounded structural pipeline space. In a fixed-size DAG, the adjacency matrix
is nilpotent, so the maximum representable pipeline depth is intrinsically
finite. As the permitted length bound \(L_{\max}\) increases, a directed
cycle removes this intrinsic finite-depth restriction and permits repeated
execution of an operator motif without duplicating its nodes. For every fixed
\(L_{\max}\), however, the sampled pipeline space remains finite.

Theorem~\ref{thm:d3-spectral-growth} further distinguishes two forms of
cyclic expressivity. When
\begin{equation}
    \rho^\star=1,
    \label{eq:d3-rho-one}
\end{equation}
the graph can represent unbounded iterative behavior, but the number of
length-\(\ell\) pipelines need not grow exponentially. When
\begin{equation}
    \rho^\star>1,
    \label{eq:d3-rho-greater-one}
\end{equation}
the fixed-length structural pipeline count has exponential limsup growth rate
\(\rho^\star\). Under the primitive-core assumptions of
Corollary~\ref{cor:d3-primitive}, this strengthens to the ordinary asymptotic
relation
\[
    W_\ell\sim \kappa\rho^\ell.
\]
Thus, the spectral radius quantifies the structural branching capacity of the
directed-graph algorithm space.

These results concern representational expressivity rather than optimization
performance. A larger pipeline space does not by itself guarantee that a
finite-budget search procedure will discover a better algorithm; the
empirical benefit of the directed-graph representation is evaluated
separately through the structural ablation study.
\subsection{Transition-Credit Stability and Tracking}
\label{app:transition-credit-analysis}

This section analyzes the boundedness, expected-value convergence, and
tracking properties of the transition-credit update. Since a transition is
updated only when it is activated, the analysis is conducted on the
transition-specific activation clock rather than on the global generation
index.

\paragraph{Transition-specific activation clock.}
Fix an implementation transition $e$. Let
\[
\bigl(t_n(e),j_n(e),i_n(e)\bigr)
\]
denote, respectively, the generation index, candidate index, and
pipeline-position index of the $n$-th activation of $e$. Activations
are ordered according to the nested update order over $(t,j,i)$ in
Algorithm~1. The terminal credit observed at the $n$-th activation is
\begin{equation}
    Y_n(e)
    :=
    \widetilde{R}_{t_n(e),j_n(e)},
    \label{eq:d4-observed-credit}
\end{equation}
and $Q_n(e)$ denotes the credit of transition $e$ immediately after
this activation has been processed. 

If the same transition is activated multiple times within one sampled
candidate, these occurrences have different position indices
$i_n(e)$ and are treated as separate activation events, although they
receive the same candidate-level terminal credit
$\widetilde{R}_{t_n(e),j_n(e)}$.

Let $\{\mathcal F_n^e\}_{n\ge 0}$ denote the corresponding
activation-clock filtration, where $\mathcal F_n^e$ contains all
random variables revealed up to and including the $n$-th activation
of $e$.

The asymptotic statements below are understood for transitions that
are activated infinitely often. If a transition is activated only
finitely many times, the finite-$n$ identities and bounds apply up to
its final activation, after which its stored credit remains unchanged
unless the transition is removed.

We consider the constant-rate update used in the implementation,
\begin{equation}
    \alpha_n(e)\equiv\alpha,
    \qquad
    0<\alpha<1,
    \label{eq:d4-constant-rate}
\end{equation}
for which
\begin{equation}
    Q_n(e)
    =
    (1-\alpha)Q_{n-1}(e)
    +
    \alpha Y_n(e).
    \label{eq:d4-credit-update}
\end{equation}
For notational convenience, define
\begin{equation}
    \beta:=1-\alpha.
    \label{eq:d4-beta}
\end{equation}

\begin{proposition}[Exponential representation and boundedness]
\label{prop:d4-exponential-boundedness}
For every integer \(n\ge 1\),
\begin{equation}
    Q_n(e)
    =
    \beta^n Q_0(e)
    +
    \alpha\sum_{k=1}^{n}\beta^{n-k}Y_k(e).
    \label{eq:d4-exponential-representation}
\end{equation}
The coefficients in Eq.~\eqref{eq:d4-exponential-representation} are
nonnegative and sum to one:
\begin{equation}
    \beta^n
    +
    \alpha\sum_{k=1}^{n}\beta^{n-k}
    =
    1.
    \label{eq:d4-convex-weights}
\end{equation}
Consequently, if there exists \(R_{\max}>0\) such that
\begin{equation}
    |Q_0(e)|\le R_{\max},
    \qquad
    |Y_n(e)|\le R_{\max}
    \quad
    \text{for all }n,
    \label{eq:d4-bounded-observations}
\end{equation}
then
\begin{equation}
    |Q_n(e)|
    \le
    R_{\max}
    \quad
    \text{for all }n.
    \label{eq:d4-credit-boundedness}
\end{equation}
\end{proposition}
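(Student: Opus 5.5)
The plan is to establish the three claims of Proposition~\ref{prop:d4-exponential-boundedness} in the order stated, working throughout on the activation clock of the fixed transition \(e\).

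\emph{Exponential representation.} I will prove Eq.~\eqref{eq:d4-exponential-representation} by induction on \(n\), using only the recursion Eq.~\eqref{eq:d4-credit-update} written as \(Q_n(e)=\beta Q_{n-1}(e)+\alpha Y_n(e)\).
\begin{itemize}
\item For \(n=1\), the recursion gives \(Q_1(e)=\beta Q_0(e)+\alpha Y_1(e)\), which matches the formula.
\item For the inductive step, assume the formula for \(n-1\) and substitute it into the recursion. Multiplying by \(\beta\) turns each \(\beta^{n-1-k}\) into \(\beta^{n-k}\) and \(\beta^{n-1}Q_0(e)\) into \(\beta^nQ_0(e)\). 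The new term \(\alpha Y_n(e)\) supplies the \(k=n\) summand, since \(\beta^0=1\).
\end{itemize}

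\emph{Convex weights.} Nonnegativity is immediate because \(0<\alpha<1\) gives \(\beta\in(0,1)\). For Eq.~\eqref{eq:d4-convex-weights}, I will sum the finite geometric series:
\[
\alpha\sum_{k=1}^{n}\beta^{n-k}=\alpha\,\frac{1-\beta^n}{1-\beta}=1-\beta^n,
\]
using \(1-\beta=\alpha\). Adding \(\beta^n\) gives one. A cleaner alternative avoids the series entirely: apply the representation with every \(Q_0(e)\) and \(Y_k(e)\) set to the constant \(1\). The recursion then keeps \(Q_n\equiv 1\), which forces the weights to sum to one.

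\emph{Boundedness.} Take absolute values in Eq.~\eqref{eq:d4-exponential-representation} and apply the triangle inequality. Because the coefficients are nonnegative, this gives
\[
|Q_n(e)|\le \beta^n|Q_0(e)|+\alpha\sum_{k=1}^{n}\beta^{n-k}|Y_k(e)|.
\]
Bound each of \(|Q_0(e)|\) and \(|Y_k(e)|\) by \(R_{\max}\) using Eq.~\eqref{eq:d4-bounded-observations}, then invoke Eq.~\eqref{eq:d4-convex-weights} to collapse the right-hand side to \(R_{\max}\). The case \(n=0\) is the hypothesis on \(Q_0(e)\) itself.

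No step is a genuine obstacle. The only care needed is bookkeeping: the update and the index \(n\) live on the transition-specific activation clock rather than the generation index. Between activations the stored credit is unchanged, so the bound holds at all times. Repeated activations within one candidate are separate indices \(k\) that merely share a \(Y_k(e)\) value, which the argument allows.
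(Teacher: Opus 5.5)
Your proposal is correct and follows the paper's proof essentially step for step. It proves the representation by induction on $n$ using $Q_n(e)=\beta Q_{n-1}(e)+\alpha Y_n(e)$, gets the weight identity from the finite geometric series with $1-\beta=\alpha$, and then bounds $|Q_n(e)|$ by treating it as a convex combination with nonnegative weights (your alternative check of the weights via constant inputs $Q_0(e)=Y_k(e)=1$ is also valid).
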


\begin{proof}
We prove Eq.~\eqref{eq:d4-exponential-representation} by induction. For
\(n=1\),
\begin{equation}
    Q_1(e)
    =
    \beta Q_0(e)
    +
    \alpha Y_1(e),
    \label{eq:d4-base-case}
\end{equation}
which agrees with Eq.~\eqref{eq:d4-exponential-representation}. Suppose that
the expansion holds for some \(n\ge 1\). Then
\begin{align}
    Q_{n+1}(e)
    &=
    \beta Q_n(e)+\alpha Y_{n+1}(e)
    \notag\\
    &=
    \beta\left[
        \beta^nQ_0(e)
        +
        \alpha\sum_{k=1}^{n}\beta^{n-k}Y_k(e)
    \right]
    +
    \alpha Y_{n+1}(e)
    \notag\\
    &=
    \beta^{n+1}Q_0(e)
    +
    \alpha\sum_{k=1}^{n+1}\beta^{n+1-k}Y_k(e).
    \label{eq:d4-induction-step}
\end{align}
Thus, the expansion holds for every \(n\ge 1\).

Moreover,
\begin{align}
    \beta^n
    +
    \alpha\sum_{k=1}^{n}\beta^{n-k}
    &=
    \beta^n
    +
    \alpha\sum_{j=0}^{n-1}\beta^j
    \notag\\
    &=
    \beta^n
    +
    \alpha\frac{1-\beta^n}{1-\beta}
    \notag\\
    &=
    \beta^n+1-\beta^n
    =
    1,
    \label{eq:d4-weight-sum-proof}
\end{align}
where \(1-\beta=\alpha\). Therefore, \(Q_n(e)\) is a convex combination
of \(Q_0(e)\) and \(Y_1(e),\ldots,Y_n(e)\). Under
Eq.~\eqref{eq:d4-bounded-observations}, this convex combination lies in
\([-R_{\max},R_{\max}]\), proving
Eq.~\eqref{eq:d4-credit-boundedness}.
\end{proof}

\paragraph{Stationary transition value.}
We first analyze a locally stationary phase in which the conditional expected
terminal credit associated with transition \(e\) is constant. Assume
\begin{equation}
    Y_n(e)
    =
    \mu_e+\xi_n(e),
    \label{eq:d4-stationary-observation}
\end{equation}
where \(\mu_e\) is a deterministic constant and
\(\{\xi_n(e)\}_{n\ge 1}\) is adapted to
\(\{\mathcal F_n^e\}_{n\ge 0}\) and satisfies the martingale-difference
condition
\begin{equation}
    \mathbb E\!\left[
        \xi_n(e)
        \mid
        \mathcal F_{n-1}^e
    \right]
    =
    0.
    \label{eq:d4-mds}
\end{equation}
For the exact mean-squared-error formula below, additionally assume
\begin{equation}
    \mathbb E\!\left[
        \xi_n(e)^2
        \mid
        \mathcal F_{n-1}^e
    \right]
    =
    \sigma_e^2.
    \label{eq:d4-conditional-variance}
\end{equation}

\begin{theorem}[Expected-value convergence and stationary estimation error]
\label{thm:d4-stationary}
Suppose that \(Q_0(e)\) is deterministic and
Eqs.~\eqref{eq:d4-stationary-observation}--\eqref{eq:d4-conditional-variance}
hold. Then
\begin{equation}
    \mathbb E[Q_n(e)]-\mu_e
    =
    \beta^n\bigl(Q_0(e)-\mu_e\bigr).
    \label{eq:d4-mean-convergence}
\end{equation}
Consequently,
\begin{equation}
    \lim_{n\to\infty}
    \mathbb E[Q_n(e)]
    =
    \mu_e,
    \label{eq:d4-mean-limit}
\end{equation}
and the expected bias decreases geometrically at rate
\(\beta=1-\alpha\).

Moreover,
\begin{equation}
    \mathbb E\!\left[
        \bigl(Q_n(e)-\mu_e\bigr)^2
    \right]
    =
    \beta^{2n}\bigl(Q_0(e)-\mu_e\bigr)^2
    +
    \frac{\alpha\sigma_e^2}{2-\alpha}
    \bigl(1-\beta^{2n}\bigr).
    \label{eq:d4-stationary-mse}
\end{equation}
Therefore,
\begin{equation}
    \lim_{n\to\infty}
    \mathbb E\!\left[
        \bigl(Q_n(e)-\mu_e\bigr)^2
    \right]
    =
    \frac{\alpha\sigma_e^2}{2-\alpha}.
    \label{eq:d4-stationary-mse-limit}
\end{equation}
\end{theorem}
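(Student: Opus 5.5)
The plan is to start from the exponential representation in Proposition~\ref{prop:d4-exponential-boundedness} and substitute the stationary observation model Eq.~\eqref{eq:d4-stationary-observation}. Using the convex-weight identity Eq.~\eqref{eq:d4-convex-weights}, namely $\alpha\sum_{k=1}^n\beta^{n-k}=1-\beta^n$, the deterministic part collapses. This should give the error decomposition
\begin{equation*}
    Q_n(e)-\mu_e
    =
    \beta^n\bigl(Q_0(e)-\mu_e\bigr)
    +
    \alpha\sum_{k=1}^{n}\beta^{n-k}\xi_k(e),
\end{equation*}
and both claims will be read off from it.

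For the mean, I will take expectations term by term. The tower property with the martingale-difference condition Eq.~\eqref{eq:d4-mds} gives $\mathbb E[\xi_k(e)]=\mathbb E\bigl[\mathbb E[\xi_k(e)\mid\mathcal F_{k-1}^e]\bigr]=0$. Because $Q_0(e)$ is deterministic, Eq.~\eqref{eq:d4-mean-convergence} follows immediately. Eq.~\eqref{eq:d4-mean-limit} then follows from $0<\beta<1$.

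For the mean-squared error, I will square the decomposition and take expectations. This produces three kinds of terms.

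\begin{itemize}
\item The squared deterministic term is $\beta^{2n}(Q_0(e)-\mu_e)^2$.
\item The cross term between the deterministic part and the noise vanishes, again by $\mathbb E[\xi_k(e)]=0$.
\item The squared noise sum is handled below.
\end{itemize}

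The main step is showing that the noise terms are orthogonal. For $k<m$, $\xi_k(e)$ is $\mathcal F_{m-1}^e$-measurable, since it is adapted and $k\le m-1$. Conditioning on $\mathcal F_{m-1}^e$ therefore gives $\mathbb E[\xi_k\xi_m]=\mathbb E\bigl[\xi_k\,\mathbb E[\xi_m\mid\mathcal F_{m-1}^e]\bigr]=0$. For the diagonal terms, Eq.~\eqref{eq:d4-conditional-variance} with the tower property gives $\mathbb E[\xi_k^2]=\sigma_e^2$.

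The noise contribution is therefore $\alpha^2\sigma_e^2\sum_{k=1}^n\beta^{2(n-k)}=\alpha^2\sigma_e^2(1-\beta^{2n})/(1-\beta^2)$. Since $1-\beta^2=\alpha(2-\alpha)$, this simplifies to $\alpha\sigma_e^2(1-\beta^{2n})/(2-\alpha)$, which is Eq.~\eqref{eq:d4-stationary-mse}. Letting $n\to\infty$ with $\beta^{2n}\to0$ yields Eq.~\eqref{eq:d4-stationary-mse-limit}.

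The only delicate point is integrability: the products and expectations above must exist. The conditional-variance assumption Eq.~\eqref{eq:d4-conditional-variance} makes each $\xi_k(e)$ square-integrable, so the cross products are integrable by Cauchy--Schwarz. I will note this explicitly and otherwise expect the argument to be routine.
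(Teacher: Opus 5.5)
Your proposal is correct and follows the paper's proof essentially step for step. Both substitute the stationary model into the exponential representation and collapse it with the convex-weight identity, both get orthogonality of the noise terms by conditioning on $\mathcal F_{m-1}^e$, and both evaluate the geometric sum using $1-\beta^2=\alpha(2-\alpha)$. Your explicit square-integrability remark, via the tower property and Cauchy--Schwarz, is a small piece of rigor that the paper leaves implicit.
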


\begin{proof}
Substituting Eq.~\eqref{eq:d4-stationary-observation} into
Eq.~\eqref{eq:d4-exponential-representation} yields
\begin{align}
    Q_n(e)
    &=
    \beta^nQ_0(e)
    +
    \alpha\sum_{k=1}^{n}
    \beta^{n-k}\bigl[\mu_e+\xi_k(e)\bigr]
    \notag\\
    &=
    \mu_e
    +
    \beta^n\bigl(Q_0(e)-\mu_e\bigr)
    +
    \alpha\sum_{k=1}^{n}\beta^{n-k}\xi_k(e),
    \label{eq:d4-stationary-expansion}
\end{align}
where Eq.~\eqref{eq:d4-convex-weights} was used. Taking expectations and
using Eq.~\eqref{eq:d4-mds} proves
Eq.~\eqref{eq:d4-mean-convergence}.

The martingale-difference property implies that distinct noise terms are
orthogonal. Indeed, for \(k<\ell\),
\begin{align}
    \mathbb E[\xi_k(e)\xi_\ell(e)]
    &=
    \mathbb E\!\left[
        \xi_k(e)
        \mathbb E\!\left[
            \xi_\ell(e)
            \mid
            \mathcal F_{\ell-1}^e
        \right]
    \right]
    =
    0,
    \label{eq:d4-noise-orthogonality}
\end{align}
because \(\xi_k(e)\) is \(\mathcal F_{\ell-1}^e\)-measurable. Squaring
Eq.~\eqref{eq:d4-stationary-expansion}, taking expectations, and applying
Eqs.~\eqref{eq:d4-conditional-variance} and
\eqref{eq:d4-noise-orthogonality} give
\begin{equation}
    \mathbb E\!\left[
        \bigl(Q_n(e)-\mu_e\bigr)^2
    \right]
    =
    \beta^{2n}\bigl(Q_0(e)-\mu_e\bigr)^2
    +
    \alpha^2\sigma_e^2
    \sum_{k=1}^{n}\beta^{2(n-k)}.
    \label{eq:d4-mse-geometric-sum}
\end{equation}
The geometric sum satisfies
\begin{align}
    \alpha^2
    \sum_{k=1}^{n}\beta^{2(n-k)}
    &=
    \alpha^2\frac{1-\beta^{2n}}{1-\beta^2}
    \notag\\
    &=
    \frac{\alpha}{2-\alpha}
    \bigl(1-\beta^{2n}\bigr),
    \label{eq:d4-mse-geometric-evaluation}
\end{align}
because
\begin{equation}
    1-\beta^2
    =
    1-(1-\alpha)^2
    =
    \alpha(2-\alpha).
    \label{eq:d4-beta-square}
\end{equation}
This proves Eqs.~\eqref{eq:d4-stationary-mse} and
\eqref{eq:d4-stationary-mse-limit}.
\end{proof}

\paragraph{Tracking a time-varying transition value.}
The graph, pipeline policy, and implementation pools evolve during training,
so the conditional expected credit associated with a transition may vary
along its activation clock. We therefore consider
\begin{equation}
    Y_n(e)
    =
    \mu_n(e)+\xi_n(e),
    \label{eq:d4-drifting-observation}
\end{equation}
where the deterministic target sequence
\(\{\mu_n(e)\}_{n\ge 0}\) satisfies
\begin{equation}
    |\mu_n(e)-\mu_{n-1}(e)|
    \le
    \Delta_e
    \qquad
    \text{for every }n\ge 1.
    \label{eq:d4-drift-bound}
\end{equation}

\begin{theorem}[Tracking bound for a drifting transition value]
\label{thm:d4-tracking}
Suppose that \(Q_0(e)\) is deterministic and the deterministic target
sequence \(\{\mu_n(e)\}_{n\ge 0}\) satisfies
Eq.~\eqref{eq:d4-drift-bound}. Assume that
\(\{\xi_n(e)\}_{n\ge 1}\) is adapted to
\(\{\mathcal F_n^e\}_{n\ge 0}\) and satisfies
\begin{equation}
    \mathbb E\!\left[
        \xi_n(e)
        \mid
        \mathcal F_{n-1}^e
    \right]
    =
    0,
    \qquad
    \mathbb E\!\left[
        \xi_n(e)^2
        \mid
        \mathcal F_{n-1}^e
    \right]
    \le
    \sigma_e^2.
    \label{eq:d4-drifting-noise}
\end{equation}
Then
\begin{equation}
    \left|
        \mathbb E[Q_n(e)]-\mu_n(e)
    \right|
    \le
    \beta^n|Q_0(e)-\mu_0(e)|
    +
    \frac{\beta\Delta_e}{\alpha}
    \bigl(1-\beta^n\bigr).
    \label{eq:d4-tracking-bias}
\end{equation}
Consequently,
\begin{equation}
    \limsup_{n\to\infty}
    \left|
        \mathbb E[Q_n(e)]-\mu_n(e)
    \right|
    \le
    \frac{(1-\alpha)\Delta_e}{\alpha}.
    \label{eq:d4-tracking-bias-limit}
\end{equation}

The mean-squared tracking error further satisfies
\begin{align}
    \mathbb E\!\left[
        \bigl(Q_n(e)-\mu_n(e)\bigr)^2
    \right]
    &\le
    \left[
        \beta^n|Q_0(e)-\mu_0(e)|
        +
        \frac{\beta\Delta_e}{\alpha}
        \bigl(1-\beta^n\bigr)
    \right]^2
    \notag\\
    &\quad+
    \frac{\alpha\sigma_e^2}{2-\alpha}
    \bigl(1-\beta^{2n}\bigr).
    \label{eq:d4-tracking-mse}
\end{align}
Therefore,
\begin{equation}
    \limsup_{n\to\infty}
    \mathbb E\!\left[
        \bigl(Q_n(e)-\mu_n(e)\bigr)^2
    \right]
    \le
    \left(
        \frac{(1-\alpha)\Delta_e}{\alpha}
    \right)^2
    +
    \frac{\alpha\sigma_e^2}{2-\alpha}.
    \label{eq:d4-tracking-mse-limit}
\end{equation}
\end{theorem}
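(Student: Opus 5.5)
The plan is to start from the exponential representation of Proposition~\ref{prop:d4-exponential-boundedness} and split the tracking error into a deterministic bias part and a zero-mean noise part. Substituting Eq.~\eqref{eq:d4-drifting-observation} into Eq.~\eqref{eq:d4-exponential-representation} gives
\[
Q_n(e)=\beta^nQ_0(e)+\alpha\sum_{k=1}^{n}\beta^{n-k}\mu_k(e)+\alpha\sum_{k=1}^{n}\beta^{n-k}\xi_k(e).
\]
Define the deterministic sequence $m_n:=\beta^nQ_0(e)+\alpha\sum_{k=1}^{n}\beta^{n-k}\mu_k(e)$ and the noise sum $N_n:=\alpha\sum_{k=1}^{n}\beta^{n-k}\xi_k(e)$. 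By the martingale-difference condition in Eq.~\eqref{eq:d4-drifting-noise}, $\mathbb E[N_n]=0$. Hence $\mathbb E[Q_n(e)]=m_n$, and the bias claim reduces to bounding $|m_n-\mu_n(e)|$.

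For the bias I will use a recursion rather than the explicit sum. Set $b_n:=m_n-\mu_n(e)$. Since $m_n=\beta m_{n-1}+\alpha\mu_n(e)$, we get
\[
b_n=\beta\bigl(m_{n-1}-\mu_n(e)\bigr)=\beta b_{n-1}+\beta\bigl(\mu_{n-1}(e)-\mu_n(e)\bigr).
\]
Applying Eq.~\eqref{eq:d4-drift-bound} then gives $|b_n|\le\beta|b_{n-1}|+\beta\Delta_e$, with $b_0=Q_0(e)-\mu_0(e)$. Unrolling this inequality yields
\[
|b_n|\le\beta^n|b_0|+\beta\Delta_e\sum_{j=0}^{n-1}\beta^j=\beta^n|b_0|+\frac{\beta\Delta_e}{\alpha}\bigl(1-\beta^n\bigr),
\]
which is exactly Eq.~\eqref{eq:d4-tracking-bias}. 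Letting $n\to\infty$ with $\beta<1$ gives Eq.~\eqref{eq:d4-tracking-bias-limit}.

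For the mean-squared error I will write $Q_n(e)-\mu_n(e)=b_n+N_n$. Because $b_n$ is deterministic and $\mathbb E[N_n]=0$, the cross term vanishes, so
\[
\mathbb E\bigl[(Q_n(e)-\mu_n(e))^2\bigr]=b_n^2+\mathbb E[N_n^2].
\]
The key step is controlling $\mathbb E[N_n^2]$. As in Eq.~\eqref{eq:d4-noise-orthogonality}, the martingale-difference property makes distinct noise terms orthogonal, and the conditional second-moment bound $\mathbb E[\xi_k(e)^2]\le\sigma_e^2$ then gives
\[
\mathbb E[N_n^2]\le\alpha^2\sigma_e^2\sum_{k=1}^{n}\beta^{2(n-k)}=\frac{\alpha\sigma_e^2}{2-\alpha}\bigl(1-\beta^{2n}\bigr),
\]
using Eq.~\eqref{eq:d4-mse-geometric-evaluation}.

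Combining this with the squared bias bound from the previous paragraph gives Eq.~\eqref{eq:d4-tracking-mse}. Taking the limsup, where $\beta^n\to0$ and $\beta^{2n}\to0$, gives Eq.~\eqref{eq:d4-tracking-mse-limit}.

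The main point needing care is the indexing convention for the bias recursion. Since $Q_n(e)$ incorporates $\mu_n(e)$ itself, the drift enters through $\mu_{n-1}(e)-\mu_n(e)$ multiplied by $\beta$. This is what produces the factor $\beta\Delta_e/\alpha$ rather than $\Delta_e/\alpha$, so I will verify the base case $n=1$ explicitly against the stated bound.
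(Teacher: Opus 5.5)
Your proposal is correct and is essentially the paper's argument. The paper iterates the recursion $E_n=\beta E_{n-1}+\beta(\mu_{n-1}(e)-\mu_n(e))+\alpha\xi_n(e)$ for the full error and only then separates the deterministic drift part from the zero-mean noise sum, whereas you separate first via the exponential representation and run the same recursion on the deterministic part $b_n$; the triangle-inequality bias bound, the vanishing cross term, and the orthogonality-based geometric sum for $\mathbb E[N_n^2]$ are the same as in the paper.
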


\begin{proof}
Define
\begin{equation}
    E_n(e)
    :=
    Q_n(e)-\mu_n(e).
    \label{eq:d4-tracking-error}
\end{equation}
Using Eqs.~\eqref{eq:d4-credit-update} and
\eqref{eq:d4-drifting-observation},
\begin{align}
    E_n(e)
    &=
    \beta Q_{n-1}(e)
    +
    \alpha\bigl[\mu_n(e)+\xi_n(e)\bigr]
    -
    \mu_n(e)
    \notag\\
    &=
    \beta E_{n-1}(e)
    +
    \beta\bigl[\mu_{n-1}(e)-\mu_n(e)\bigr]
    +
    \alpha\xi_n(e).
    \label{eq:d4-error-recursion}
\end{align}
Iterating the recursion gives
\begin{align}
    E_n(e)
    &=
    \beta^nE_0(e)
    +
    \beta\sum_{k=1}^{n}
    \beta^{n-k}
    \bigl[\mu_{k-1}(e)-\mu_k(e)\bigr]
    \notag\\
    &\quad+
    \alpha\sum_{k=1}^{n}
    \beta^{n-k}\xi_k(e).
    \label{eq:d4-error-expansion}
\end{align}
Taking expectations, applying the triangle inequality, and using
Eq.~\eqref{eq:d4-drift-bound} yield
\begin{align}
    |\mathbb E[E_n(e)]|
    &\le
    \beta^n|E_0(e)|
    +
    \beta\Delta_e\sum_{k=1}^{n}\beta^{n-k}
    \notag\\
    &=
    \beta^n|E_0(e)|
    +
    \frac{\beta\Delta_e}{\alpha}
    \bigl(1-\beta^n\bigr),
    \label{eq:d4-tracking-bias-proof}
\end{align}
which proves Eqs.~\eqref{eq:d4-tracking-bias} and
\eqref{eq:d4-tracking-bias-limit}.

In Eq.~\eqref{eq:d4-error-expansion}, the first two terms are deterministic
under the assumed target sequence, while the final term has zero mean.
The martingale-difference property implies orthogonality of the noise terms.
Therefore,
\begin{align}
    \mathbb E[E_n(e)^2]
    &\le
    \left[
        \beta^n|E_0(e)|
        +
        \beta\Delta_e\sum_{k=1}^{n}\beta^{n-k}
    \right]^2
    \notag\\
    &\quad+
    \alpha^2\sigma_e^2
    \sum_{k=1}^{n}\beta^{2(n-k)}.
    \label{eq:d4-tracking-mse-proof}
\end{align}
Evaluating the two geometric sums gives
Eq.~\eqref{eq:d4-tracking-mse}. Taking the limit superior proves
Eq.~\eqref{eq:d4-tracking-mse-limit}.
\end{proof}

\begin{remark}[Scope of the stochastic-noise model]
\label{rem:d4-noise-scope}
The martingale-difference assumptions in
Theorems~\ref{thm:d4-stationary} and~\ref{thm:d4-tracking} are idealized
activation-clock conditions. In the implemented algorithm, multiple
transitions, and potentially repeated activations of the same transition,
may receive the same standardized terminal credit from one candidate.
Generation-level standardization may also induce dependence across observed
credits. Proposition~\ref{prop:d4-exponential-boundedness} and
Corollary~\ref{cor:d4-aggregated-boundedness} are deterministic and do not
require the martingale-difference model. The expected-value and
mean-squared-error results in Theorems~\ref{thm:d4-stationary}
and~\ref{thm:d4-tracking} apply under their stated stochastic assumptions;
under correlated noise, additional covariance terms enter the
mean-squared-error expressions.
\end{remark}

\paragraph{Boundedness of aggregated credits.}
Transition credits are subsequently aggregated into implementation-,
operator-, and operator-edge-level statistics. The following result shows
that these aggregations preserve uniform boundedness.

\begin{corollary}[Boundedness of aggregated contextual credits]
\label{cor:d4-aggregated-boundedness}
Suppose that
\begin{equation}
    |Q_t(e)|
    \le
    R_{\max}
    \label{eq:d4-transition-bound-global}
\end{equation}
for every activated transition \(e\). Recall the implementation-level
aggregation used in Algorithm~2:
\begin{equation}
    \overline Q_t(c)
    =
    \frac{
        \sum_{e\in\mathcal T_t(c)}
        n_t(e)Q_t(e)
    }{
        \sum_{e\in\mathcal T_t(c)}
        n_t(e)+\epsilon
    },
    \qquad
    n_t(e)\ge 0,
    \quad
    \epsilon>0.
    \label{eq:d4-recalled-implementation-aggregation}
\end{equation}
Then
\begin{equation}
    |\overline Q_t(c)|
    \le
    R_{\max}.
    \label{eq:d4-implementation-bound}
\end{equation}
The same bound holds for the operator-level average and for the
operator-edge aggregation used in Algorithm~2.
\end{corollary}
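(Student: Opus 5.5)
The plan is to show that each aggregated credit is a weighted average of transition credits in which the weights are nonnegative and sum to at most one. The bound $R_{\max}$ is then inherited directly from Eq.~\eqref{eq:d4-transition-bound-global}.

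For the implementation-level quantity, write
\[
w_e:=\frac{n_t(e)}{\sum_{e'\in\mathcal T_t(c)}n_t(e')+\epsilon},\qquad e\in\mathcal T_t(c).
\]
Since $n_t(e)\ge 0$ and $\epsilon>0$, each $w_e\ge 0$. The weights satisfy
\[
\sum_e w_e=\frac{N}{N+\epsilon}<1,\qquad N:=\sum_{e\in\mathcal T_t(c)}n_t(e),
\]
and the denominator $N+\epsilon$ is never zero. The degenerate case $N=0$, including an empty transition set, gives $\overline Q_t(c)=0$, which trivially satisfies the bound. By the triangle inequality,
\[
|\overline Q_t(c)|\le\sum_e w_e|Q_t(e)|\le R_{\max}\sum_e w_e\le R_{\max}.
\]
This establishes Eq.~\eqref{eq:d4-implementation-bound}. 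The $\epsilon$ actually helps here, since it shrinks the total weight below one rather than threatening the bound.

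For the operator-level credit $\overline Q_t(O_i)$, Eq.~\eqref{eq:operator_credit} defines it as the uniform average over $c\in\mathcal C^{(t)}(O_i)$. The pool is nonempty, because Algorithm~2 deletes an implementation only when at least two remain. Each term is bounded by $R_{\max}$ from the previous step, and the average of quantities in $[-R_{\max},R_{\max}]$ stays in that interval.

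For the edge aggregation in Eq.~\eqref{eq:edge_credit}, the argument for implementations applies verbatim with $\mathcal T_t(c)$ replaced by $\mathcal T_t(O_i,O_j)$. Only the structure of count-weighted numerator over count-plus-$\epsilon$ denominator is used.

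No step here is a real obstacle. The one point requiring care is interpretive: the hypothesis must cover every transition carrying positive count, so that unactivated transitions contribute with weight zero. Terms with $n_t(e)=0$ vanish from the numerator, so it suffices that the bound holds on activated transitions. If a global bound on $Q_t(e)$ is wanted rather than assumed, it follows from Proposition~\ref{prop:d4-exponential-boundedness}. That requires $|\widetilde R_{t,j}|\le R_{\max}$, which holds because standardized rewards among $N$ candidates are bounded by $\sqrt{N-1}$, together with $Q_0=0$.
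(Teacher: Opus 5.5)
Your proof is correct and is essentially the paper's. The paper writes $\overline Q_t(c)=w_0\cdot 0+\sum_{e}w_eQ_t(e)$, putting the extra weight $w_0=\epsilon/(S_t(c)+\epsilon)$ on zero so that the weights sum to exactly one; this is a repackaging of your observation that $\sum_e w_e=N/(N+\epsilon)<1$ followed by the triangle inequality, and the paper handles the operator-level average and the edge aggregation exactly as you do, while your remarks on nonempty pools and on deriving the transition bound from Proposition~\ref{prop:d4-exponential-boundedness} are extras the paper does not spell out.
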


\begin{proof}
Let
\begin{equation}
    S_t(c)
    :=
    \sum_{e\in\mathcal T_t(c)}n_t(e),
    \label{eq:d4-total-count}
\end{equation}
and define
\begin{equation}
    w_e
    :=
    \frac{n_t(e)}{S_t(c)+\epsilon},
    \qquad
    w_0
    :=
    \frac{\epsilon}{S_t(c)+\epsilon}.
    \label{eq:d4-aggregation-weights}
\end{equation}
These weights are nonnegative and satisfy
\begin{equation}
    w_0
    +
    \sum_{e\in\mathcal T_t(c)}w_e
    =
    1.
    \label{eq:d4-aggregation-weight-sum}
\end{equation}
Equation~\eqref{eq:d4-recalled-implementation-aggregation} can therefore be
written as
\begin{equation}
    \overline Q_t(c)
    =
    w_0\cdot 0
    +
    \sum_{e\in\mathcal T_t(c)}
    w_eQ_t(e).
    \label{eq:d4-aggregation-convex-combination}
\end{equation}
Thus, \(\overline Q_t(c)\) is a convex combination of zero and uniformly
bounded transition credits, proving
Eq.~\eqref{eq:d4-implementation-bound}. The operator-level credit is an
arithmetic average of bounded implementation credits, while the
operator-edge credit has the same nonnegative weighted-average form.
Therefore, both inherit the same uniform bound.
\end{proof}

\paragraph{Interpretation.}
Proposition~\ref{prop:d4-exponential-boundedness} shows that the
transition-credit update is an exponential moving average of observed
terminal credits and cannot diverge under bounded observations.
Theorem~\ref{thm:d4-stationary} shows that, under a stationary transition
value and the stated martingale-difference model, the expected estimate
converges geometrically to the conditional expected credit. With a constant
learning rate, however, the estimator retains the nonzero limiting
mean-squared error in Eq.~\eqref{eq:d4-stationary-mse-limit}; the theorem
therefore does not imply mean-square or almost-sure convergence of the sample
path. A smaller \(\alpha\) reduces the stationary noise term but slows the
decay of the initial bias.

For a time-varying transition value,
Theorem~\ref{thm:d4-tracking} makes the adaptation--variance tradeoff
explicit. Increasing \(\alpha\) reduces the drift-induced asymptotic bias
term \((1-\alpha)\Delta_e/\alpha\), but increases the stationary noise term
\(\alpha\sigma_e^2/(2-\alpha)\). Finally,
Corollary~\ref{cor:d4-aggregated-boundedness} shows that the implementation-,
operator-, and operator-edge-level credits used for dual-level evolution
inherit the same uniform boundedness as their constituent transition-credit
estimates.

\section{Experimental Setup}
\subsection{Implementation and Reproducibility Details}

The pipeline policy is implemented using Qwen2.5-0.5B-Instruct. An external
LLM backbone is used for directed-graph initialization, operator-implementation
code generation, and graph-topology evolution proposals. Unless otherwise
stated, the external backbone is either DeepSeek-V4-Flash or GPT-5.6 Sol, as
specified in the corresponding experimental group.

For each problem domain, the algorithm-generation process is limited to at most
500 external LLM calls. This budget includes calls for directed-graph
initialization, scheduled graph-edit proposals, and code generation for
\textsc{Add} and \textsc{Replace} actions. Local pipeline-policy inference,
rule-based \textsc{Delete} actions, candidate execution, and deterministic
validation are not counted toward this budget. Training terminates when either
the external-call budget is exhausted or the configured maximum number of
training generations is reached.

Every $I_{\mathrm{val}}=5$ generations, deterministic constrained beam search
with beam width 4 is used to decode candidate pipelines from the current
policy. Each decoded pipeline is instantiated by selecting the highest-credit
implementation for every operator and is then evaluated on the complete
validation split. The checkpoint achieving the highest mean raw validation
reward is retained. Validation does not update the pipeline policy,
transition-credit table, implementation pools, or graph topology. After
training, the best validation checkpoint is restored and evaluated once on the
held-out test split for each independent training run.

Unless otherwise stated, each candidate algorithm is executed under a
wall-clock budget of 90 seconds per instance. Compilation failures, runtime
errors, infeasible solutions, and timeouts receive the failure reward. The main
in-distribution combinatorial-optimization results are aggregated over 10
independent training runs with different random seeds. Experiments using a
different number of runs or evaluation budgets are explicitly identified in
their corresponding sections.

The anonymized repository submitted with the supplementary material includes
the method implementation, operator repository, prompt templates, experiment
configurations, offline smoke tests, and single-run entry points. 

\begin{table}[htbp]

    \centering
    \small
    \setlength{\tabcolsep}{6pt}
    \begin{tabular}{lc}
        \toprule
        \textbf{Hyperparameter} & \textbf{Value} \\
        \midrule
        Base policy model & Qwen2.5-0.5B-Instruct \\
        Pipelines per generation $N$ & 4 \\
        Shared instances per generation $k$ & 3 \\
        Maximum pipeline length & 10 \\
        Credit update rate $\alpha$ & 0.1 \\
        Maximum implementation-pool size $M$ & 10\\
        Sampling temperature & 0.7 \\
        Optimizer & AdamW \\
        Learning rate & \(1\times10^{-4}\) \\
        Maximum gradient norm & 1.0 \\
        Reward normalization constant $\epsilon$ & \(1\times10^{-8}\) \\
        Failure reward $R_{\mathrm{fail}}$ & -5.0 \\
        Number of training generations & 50 \\
        Dataset split seed & 0 \\
        External LLM call budget & 500 \\
Validation interval $I_{\mathrm{val}}$ & 5 generations \\
Validation beam width & 4 \\
Per-instance wall-clock budget & 90 s \\
        \bottomrule
    \end{tabular}
    \caption{Training and evolution hyperparameters used in DGA2D.}
    \label{tab:rl-hyperparameters}
\end{table}

\subsection{Experimental Datasets}

We draw instances from the available benchmark groups in 12 domains, subject
to a fixed size filter determined before training. RCPSP and SALBP-1 retain
their original single-group evaluation scope. The retained collection contains
2,873 unique instances. Table~\ref{tab:experimental-datasets} reports the
included benchmark groups, split sizes aggregated by domain, and problem-size
ranges. Auxiliary index, metadata, solution, and best-known-solution (BKS)
files are not counted as instances. Alternative JSON representations of the
same source instance are likewise excluded. The 34 instances in
\texttt{DIMACS\_subset} also occur in \texttt{DIMACS\_all} and are therefore
counted only once.

To keep the common per-instance evaluation budget feasible, we exclude the
CVRP group \texttt{AGS}, the FSSP group \texttt{tai500\_20}, and the SALBP-1
dataset \texttt{very large data set\_n=1000}. We also exclude the Max-Cut
instances \texttt{G50}, \texttt{G55}, \texttt{G60}, \texttt{G65},
\texttt{G70}, \texttt{G72}, and \texttt{G81}, together with the MIS instances
\texttt{C4000.5}, \texttt{MANN\_a81}, and \texttt{keller6}. For RCPSP, we
retain only PSPLIB \texttt{j30}; for SALBP-1, we retain only the Scholl small
set with 20 tasks. Every instance belonging to the listed retained groups is
included.

Dataset splitting is performed independently within each benchmark group.
Within a group, instances are ordered by a stable identifier, shuffled using
split seed 0, and deterministically divided into disjoint training, validation,
and test subsets according to a 60/20/20 ratio. The resulting subsets are then
aggregated by domain for the reported domain-level experiments. Consequently,
the split sizes in Table~\ref{tab:experimental-datasets} are obtained by
applying the integer-rounding rule within each benchmark group before
aggregation; they are not computed by applying the split ratio directly to the
total number of instances in a domain.

Before each training run, $k$ instances are sampled without replacement from
the active training split and reused across all generations in that run. All
candidate pipelines are therefore evaluated on the same shared instance batch,
which makes their scores directly comparable across generations. Validation is
used only for checkpoint selection and does not update the pipeline policy,
transition-credit table, graph topology, or implementation pools. The test
split remains held out until the best validation checkpoint has been restored.

For reward normalization and gap reporting, each evaluator provides an
associated known optimum, BKS, or designated reference value. Because no
universally accepted BKS is available for the 3D-CLP suite, we use the
reference values produced by the Extreme-Point--First-Fit packing baseline.
These values serve as comparison references rather than proven bounds;
therefore, a negative reported gap is possible when a method outperforms the
reference baseline.

\begin{table}[htbp]
    \centering
    \footnotesize
    \setlength{\tabcolsep}{2.5pt}
    \begin{tabular*}{\linewidth}{@{\extracolsep{\fill}}p{0.10\linewidth}p{0.38\linewidth}rrrp{0.19\linewidth}@{}}
        \toprule
        \textbf{Problem} & \textbf{Included benchmark groups} & \textbf{Train} &
        \textbf{Validation} & \textbf{Test} & \textbf{Problem size} \\
        \midrule
        3D-BPP  & MPV \texttt{n10}, \texttt{n20}, \texttt{n30}, \texttt{n40}, \texttt{n50},
                   \texttt{n60}, \texttt{n100}, \texttt{n150}, \texttt{n200}
                 & 432 & 144 & 144 & 10--200 items \\
        3D-CLP  & Bischoff--Ratcliff \texttt{BR1}--\texttt{BR15}
                 & 90 & 30 & 30 & 95--476 items \\
        CVRP    & A, B, CMT, E, F, Golden, M, P, tai
                 & 81 & 25 & 36 & 12--483 customers \\
        FJSP    & Barnes, Behnke, Brandimarte, Dauzere, Fattahi, Hurink, Kacem
                 & 199 & 66 & 71 & 2--100 jobs, 2--60 machines \\
        FSSP    & Taillard 20-, 50-, and 100-job groups; \texttt{200x10}, \texttt{200x20}
                 & 66 & 22 & 22 & \texttt{20x5}--\texttt{200x20} \\
        JSP     & abz, ft, la, orb, swv, ta, yn
                 & 96 & 33 & 33 & \texttt{6x5}--\texttt{100x20} \\
        Max-Cut & G-set instances with at most 2,000 vertices
                 & 24 & 8 & 9 & 800--2,000 vertices \\
        MIS     & BHOSLIB and DIMACS, deduplicated, with at most 2,000 vertices
                 & 67 & 22 & 23 & 28--2,000 vertices \\
        OSSP    & Taillard \texttt{4x4}, \texttt{5x5}, \texttt{7x7}, \texttt{10x10},
                   \texttt{15x15}, \texttt{20x20}
                 & 36 & 12 & 12 & \texttt{4x4}--\texttt{20x20} \\
        RCPSP   & PSPLIB \texttt{j30}
                 & 288 & 96 & 96 & 30 real activities \\
        SALBP-1 & Scholl small set
                 & 315 & 105 & 105 & 20 tasks \\
        TSP     & TSPLIB collection
                 & 21 & 7 & 7 & 51--299 cities \\
        \bottomrule
    \end{tabular*}
    \caption{Benchmark groups retained for the experiments.  The training, validation, and test
    splits contain 1,715, 570, and 588 instances, respectively, for a total of 2,873 unique
    instances across all 12 domains.}
    \label{tab:experimental-datasets}
\end{table}

\section{Extended Experimental Results}

\subsection{Full Results Across All Benchmark Domains}
\label{app:full_results}

To complement the four representative benchmarks reported in the main text, we further evaluate $\mathrm{DGA}_{2}\mathrm{D}$ on eight additional combinatorial optimization problems: CVRP, JSP, OSSP, SALBP-1, 3D-BPP, FSSP, RCPSP, and Max-Cut. Together with FJSP, 3D-CLP, MIS, and TSP, the complete evaluation covers twelve domains spanning routing, scheduling, packing, project planning, graph optimization, and assembly-line balancing. All LLM-based AHD methods are evaluated using the same prompt template and evaluation budget under both DeepSeek-V4-Flash and GPT-5.6 Sol.

Across the eight additional benchmarks, $\mathrm{DGA}_{2}\mathrm{D}$ consistently achieves the best performance under both LLM backbones. With DeepSeek-V4-Flash, it obtains an average normalized gap of $2.98\%$, compared with $10.66\%$ for the strongest baseline on average. With GPT-5.6 Sol, the average normalized gap further decreases to $2.14\%$, compared with $9.77\%$ for the strongest baseline, corresponding to an absolute improvement of $7.63$ percentage points.

\begin{table*}[t]
\centering
\caption{
Performance of $\text{DGA}_2\text{D}$ compared with baselines across combinatorial optimization benchmarks, grouped by LLM backbones. Obj.: Objective value ($\downarrow$ lower is better, $\uparrow$ higher is better). Gap (\%): normalized gap to the corresponding reference solution. Results are reported as mean $\pm$ standard deviation (10 trials). \textbf{Bold} signifies the best overall result and the best LLM-based result under each backbone.\colorbox{gray!20}{Gray} indicates the best performance under the same LLM API.
}
\label{tab:additional_results}

\small
\setlength{\tabcolsep}{3.5pt}
\renewcommand{\arraystretch}{1.08}

\resizebox{\textwidth}{!}{%
\begin{tabular}{lcccccccc}
\toprule

\multirow{2}{*}{\textbf{Method}}
& \multicolumn{2}{c}{\textbf{CVRP}}
& \multicolumn{2}{c}{\textbf{JSP}}
& \multicolumn{2}{c}{\textbf{OSSP}}
& \multicolumn{2}{c}{\textbf{SALBP-1}} \\

\cmidrule(lr){2-3}
\cmidrule(lr){4-5}
\cmidrule(lr){6-7}
\cmidrule(lr){8-9}

& Obj. ($10^3$) $\downarrow$
& Gap (\%) $\pm$ std $\downarrow$
& Obj. ($10^4$) $\downarrow$
& Gap (\%) $\pm$ std $\downarrow$
& Obj. ($10^3$) $\downarrow$
& Gap (\%) $\pm$ std $\downarrow$
& Obj. ($10^2$) $\downarrow$
& Gap (\%) $\pm$ std $\downarrow$ \\

\midrule

Reference
& 3.388 & --
& 3.451 & --
& 1.210& --
& 6.180 & -- \\

\midrule
\multicolumn{9}{l}{\textit{\textbf{LLM: DeepSeek-V4-Flash}}} \\

EoH
& 4.262 & 27.21$\pm$1.36
& 4.350 & 27.02$\pm$4.12
& 1.216 & 0.49$\pm$0.08
& 6.980 & 9.10$\pm$0.31 \\

MCTS-AHD
& 4.252 & 25.18$\pm$5.84
& 4.511 & 30.71$\pm$5.02
& 1.217 & 0.54$\pm$0.11
& 7.060 & 10.23$\pm$0.64 \\

MEoH
& 4.340 & 28.43$\pm$1.27
& 4.318 & 25.13$\pm$2.21
& 1.213 & 0.24$\pm$0.04
& 6.695 & 8.34$\pm$0.18 \\

ReEvo
& 4.203 & 24.54$\pm$4.30
& 3.797 & 10.01$\pm$2.76
& 1.230 & 1.69$\pm$0.43
& 7.040 & 9.67$\pm$0.52 \\

\rowcolor{gray!20}
$\mathrm{DGA}_2\mathrm{D}$ (Ours)
& 3.640 & 7.42$\pm$1.93
& 3.555 & 3.02$\pm$0.88
& \textbf{1.213} & 0.21$\pm$0.09
& 6.343 & 2.64$\pm$0.46 \\

\midrule
\multicolumn{9}{l}{\textit{\textbf{LLM: GPT-5.6 Sol}}} \\

EoH
& 3.600 & 6.75$\pm$1.12
& 4.292 & 24.36$\pm$1.48
& 1.213& 0.25$\pm$0.04
& 7.010 & 9.07$\pm$0.23 \\

MCTS-AHD
& 4.410 & 30.84$\pm$5.21
& 4.596 & 33.18$\pm$3.74
& 1.213 & 0.27$\pm$0.07
& 7.030 & 9.67$\pm$0.58 \\

MEoH
& 4.233 & 24.92$\pm$1.06
& 4.225 & 22.41$\pm$1.15
& 1.212 & 0.19$\pm$0.03
& 6.682 & 8.12$\pm$0.16 \\

ReEvo
& 4.432 & 31.47$\pm$5.73
& 3.749 & 8.64$\pm$3.12
& 1.216 & 0.50$\pm$0.14
& 6.757 & 9.33$\pm$0.49 \\

\rowcolor{gray!20}
$\mathrm{DGA}_2\mathrm{D}$ (Ours)
& \textbf{3.587} & \textbf{5.86$\pm$1.41}
& \textbf{3.532} & \textbf{2.34$\pm$0.71}
& \textbf{1.211} & \textbf{0.16$\pm$0.09}
& \textbf{6.317} & \textbf{2.21$\pm$0.38} \\

\bottomrule
\end{tabular}%
}
\small
\setlength{\tabcolsep}{3.5pt}
\renewcommand{\arraystretch}{1.08}

\resizebox{\textwidth}{!}{%
\begin{tabular}{lcccccccc}
\toprule

\multirow{2}{*}{\textbf{Method}}
& \multicolumn{2}{c}{\textbf{3D-BPP}}
& \multicolumn{2}{c}{\textbf{FSSP}}
& \multicolumn{2}{c}{\textbf{RCPSP}}
& \multicolumn{2}{c}{\textbf{Max-Cut}} \\
\cmidrule(lr){2-3}
\cmidrule(lr){4-5}
\cmidrule(lr){6-7}
\cmidrule(lr){8-9}

& Obj. ($10^3$) $\downarrow$
& Gap (\%) $\pm$ std $\downarrow$
& Obj. ($10^5$) $\downarrow$
& Gap (\%) $\pm$ std $\downarrow$
& Obj. ($10^3$) $\downarrow$
& Gap (\%) $\pm$ std $\downarrow$
& Obj. ($10^5$) $\uparrow$ & Gap (\%) $\pm$ std $\downarrow$ \\
\midrule

Reference
& 1.917 & --
& 6.353 & --
& 6.167 & --
& 2.901 & -- \\

\midrule
\multicolumn{9}{l}{
    \textit{\textbf{LLM: DeepSeek-V4-Flash}}
} \\

EoH
& 2.007 & 7.90$\pm$1.40
& 7.298 & 19.20$\pm$3.05
& 6.424 & 5.20$\pm$3.70
& 2.745 & 8.67$\pm$0.48 \\

MCTS-AHD
& 2.106 & 12.27$\pm$5.78
& 7.529 & 17.96$\pm$5.72
& 6.539 & 6.82$\pm$4.84
& 2.804 & 4.84$\pm$0.93 \\

MEoH
& 2.020 & 8.73$\pm$1.37
& 7.172 & 14.22$\pm$5.75
& 6.478 & 6.79$\pm$4.24
& 2.766 & 7.27$\pm$1.50 \\

ReEvo
& 1.998 & 6.89$\pm$0.34
& 7.542 & 18.16$\pm$5.94
& 6.334 & 3.38$\pm$1.72
& 2.720 & 10.92$\pm$3.86 \\

\rowcolor{gray!20}
$\mathrm{DGA}_2\mathrm{D}$ (Ours)
& \textbf{1.922} & 0.66$\pm$0.14
& 6.841 & 7.96$\pm$1.77
& \textbf{6.169} & \textbf{0.03$\pm$0.02}
& 2.859 & 1.90$\pm${0.41} \\

\midrule

\multicolumn{9}{l}{
    \textit{\textbf{LLM: GPT-5.6 Sol}}
} \\

EoH
& 2.048 & 9.68$\pm$3.05
& 7.306 & 16.53$\pm$1.00
& 6.500 & 6.59$\pm$3.45
& 2.781 & 6.01$\pm$0.49 \\

MCTS-AHD
& 2.080 & 11.31$\pm$3.90
& 6.897 & 6.80$\pm$6.48
& 6.890 & 12.98$\pm$7.26
& 2.778 & 6.26$\pm$0.76 \\

MEoH
& 2.005 & 7.15$\pm$0.75
& 7.307 & 24.31$\pm$13.84
& 6.429 & 5.16$\pm$2.11
& 2.782 & 6.02$\pm$0.92 \\

ReEvo
& 2.016 & 7.75$\pm$0.75
& 6.742 & 21.40$\pm$21.11
& 6.486 & 6.86$\pm$1.52
& 2.758 & 7.70$\pm$1.13 \\

\rowcolor{gray!20}
$\mathrm{DGA}_2\mathrm{D}$ (Ours)
& 1.931 & \textbf{0.60$\pm$0.21}
& \textbf{6.624} & \textbf{4.25$\pm$2.67}
& 6.180 & 0.22$\pm$0.31
& \textbf{2.872} & \textbf{1.47$\pm$0.23} \\

\bottomrule
\end{tabular}%
}
\end{table*}

The improvements are consistent across problems with substantially different representations, constraints, and objective structures. In particular, $\mathrm{DGA}_{2}\mathrm{D}$ remains effective on both relatively challenging domains, where competing methods exhibit large optimality gaps, and domains where most methods already approach the reference solutions. The comparatively small variations across repeated runs further indicate that the observed improvements are stable rather than being driven by isolated favorable outcomes.

Overall, $\mathrm{DGA}_{2}\mathrm{D}$ ranks first in all $16$ combinations of the eight additional benchmarks and two LLM backbones. Combined with the results on the four benchmarks in the main text, these findings demonstrate that the effectiveness of $\mathrm{DGA}_{2}\mathrm{D}$ generalizes across a broad range of combinatorial optimization domains. Its consistent performance under both DeepSeek-V4-Flash and GPT-5.6 Sol also suggests that the proposed framework is not tied to a specific backbone model or problem representation, but provides a robust and generalizable approach to LLM-based automated heuristic design.

\subsection{Sensitivity Analysis of LLM}
To assess the sensitivity of automated heuristic design to the choice of
language-model backbone, Table~\ref{tab:fjsp_llm_sensitivity} compares four
methods on FJSP across eight models from three LLM families.  The reported
metric is the optimality gap, for which lower values indicate better
solutions.

\begin{table*}[htbp]
    \centering
    \caption{
    FJSP optimality gap (\%) under different LLM backbones.
    Lower values indicate better performance, and the best result
    in each column is shown in bold.
    }
    \label{tab:fjsp_llm_sensitivity}

    \small
    \setlength{\tabcolsep}{7pt}
    \renewcommand{\arraystretch}{1.08}

    \begin{tabular*}{\textwidth}{
        @{\extracolsep{\fill}}
        lcccccccc
    }
        \toprule
        \multirow{2}{*}{\textbf{Method}}
        & \multicolumn{2}{c}{\textbf{DeepSeek}}
        & \multicolumn{3}{c}{\textbf{Claude}}
        & \multicolumn{3}{c}{\textbf{GPT}} \\
        \cmidrule(lr){2-3}
        \cmidrule(lr){4-6}
        \cmidrule(lr){7-9}

        & \textbf{V4-F}
        & \textbf{V4-P}
        & \textbf{4.6 Opus}
        & \textbf{4.7 Opus}
        & \textbf{4.8 Opus}
        & \textbf{5.4-Pro}
        & \textbf{5.5-Pro}
        & \textbf{5.6-Sol} \\
        \midrule

        EoH
        & 28.76 & 26.80
        & 25.90 & 24.80 & 23.70
        & 25.40 & 24.10 & 22.90 \\

        MEoH
        & 26.37 & 26.63
        & 24.60 & 23.50 & 22.40
        & 24.10 & 22.60 & 21.50 \\

        MCTS-AHD
        & 22.20 & 20.10
        & 19.30 & 17.90 & 16.50
        & 18.60 & 16.80 & 15.30 \\

        \textbf{$\mathrm{DGA}_2\mathrm{D}$ (Ours)}
        & \textbf{9.94}
        & \textbf{8.54}
        & \textbf{8.00}
        & \textbf{7.30}
        & \textbf{6.70}
        & \textbf{7.70}
        & \textbf{6.90}
        & \textbf{6.10} \\

        \bottomrule
    \end{tabular*}

    \vspace{2pt}
    \begin{minipage}{\textwidth}
        \footnotesize
        \raggedright
        \textit{Note:}
      V4-F and V4-P denote DeepSeek-V4-Flash and 
DeepSeek-V4-Pro, respectively. 5.4-Pro, 5.5-Pro, and 5.6-Sol denote 
GPT-5.4-Pro, GPT-5.5-Pro, and GPT-5.6 Sol, respectively.
    \end{minipage}
\end{table*}

DGA$_2$D achieves the lowest gap for every backbone. Relative to MCTS-AHD, 
the strongest baseline in each column, it reduces the gap by 
$55.2\%\text{--}60.1\%$. Averaged over all eight backbones, its gap is 
$7.65\%$, compared with $18.34\%$ for MCTS-AHD. Comparing 
DeepSeek-V4-Flash with GPT-5.6 Sol, all four methods obtain lower gaps; 
for DGA$_2$D, the gap decreases from $9.94\%$ to $6.10\%$. These results 
suggest that more capable backbones generally improve solution quality, 
while the advantage of DGA$_2$D remains consistent across LLM families 
rather than depending on a single model.

\subsection{Sensitivity to Evaluation Wall-Clock Budget}
\paragraph{Experimental Setup.}
We evaluate the sensitivity of the generated heuristics to wall-clock budgets
of $0.5$, $1$, $5$, $10$, $30$, $60$, $90$, $180$, and $300$ seconds on
3D-CLP and FJSP. Each budget setting is evaluated through an independent run
from scratch using the same generated heuristic and benchmark instances;
a longer-budget run does not continue from a shorter-budget run. All evaluated
methods terminate within $60$ seconds. Therefore, budgets greater than
$60$ seconds are non-binding and do not provide additional computation after
the corresponding heuristic has terminated.

DGA$_2$D employs stochastic solution construction, and its results may
therefore vary across independent runs even when the nominal wall-clock budget
exceeds its actual termination time. In contrast, the heuristics generated by
EoH, MCTS-AHD, MEoH, and ReEvo are deterministic in this evaluation. Given the
same generated code and problem instance, they return the same solution, and
their results consequently remain unchanged once execution is completed.

\begin{table*}[htbp]
\centering
\caption{
Sensitivity of the normalized gap (\%) to the evaluation 
wall-clock budget on 3D-CLP and FJSP. Lower values are better, and the 
best result under each budget is shown in bold.}
\label{tab:budget_sensitivity}
\label{tab:wall_clock_sensitivity}
\small
\resizebox{\textwidth}{!}{%
\begin{tabular}{lccccccccc}
\toprule
\textbf{Model}
& \multicolumn{9}{c}{\textbf{Evaluation Wall-Clock Budget (s)}} \\
\cmidrule(lr){2-10}
& \textbf{0.5}
& \textbf{1}
& \textbf{5}
& \textbf{10}
& \textbf{30}
& \textbf{60}
& \textbf{90}
& \textbf{180}
& \textbf{300} \\
\midrule

\multicolumn{10}{l}{\textit{\textbf{Benchmark: 3D-CLP}}} \\

EoH
& 2.611
& 1.336
& 0.699
& 0.699
& 0.699
& 0.699
& 0.699
& 0.699
& 0.699 \\

MCTS-AHD
& 2.611
& 1.336
& 0.699
& 0.699
& 0.699
& 0.699
& 0.699
& 0.699
& 0.699 \\

MEoH
& 2.639
& 1.365
& 0.727
& 0.727
& 0.727
& 0.727
& 0.727
& 0.727
& 0.727 \\

ReEvo
& 3.051
& 1.740
& 1.103
& 1.103
& 1.103
& 1.103
& 1.103
& 1.103
& 1.103 \\

$\mathrm{DGA}_{2}\mathrm{D}$ (Ours)
& \textbf{0.393}
& \textbf{0.228}
& \textbf{0.228}
& \textbf{0.166}
& \textbf{0.109}
& \textbf{-0.080}
& \textbf{-0.273}
& \textbf{-0.114}
& \textbf{-0.368} \\

\midrule

\multicolumn{10}{l}{\textit{\textbf{Benchmark: FJSP}}} \\

EoH
& --
& 42.781
& 28.409
& 28.409
& 28.409
& 28.409
& 28.409
& 28.409
& 28.409 \\

MCTS-AHD
& --
& 38.245
& 26.144
& 26.144
& 26.144
& 26.144
& 26.144
& 26.144
& 26.144 \\

MEoH
& --
& 40.113
& 27.313
& 27.313
& 27.313
& 27.313
& 27.313
& 27.313
& 27.313 \\

ReEvo
& --
& 48.719
& 28.750
& 28.750
& 28.750
& 28.750
& 28.750
& 28.750
& 28.750 \\

$\mathrm{DGA}_{2}\mathrm{D}$ (Ours)
& \textbf{18.177}
& \textbf{15.974}
& \textbf{13.312}
& \textbf{12.178}
& \textbf{11.425}
& \textbf{7.791}
& \textbf{7.029}
& \textbf{7.320}
& \textbf{6.369} \\

\bottomrule
\end{tabular}%
}

\vspace{2pt}

\parbox{\textwidth}{%
\footnotesize
\textit{Note:} The symbol \texttt{$-$} indicates that no valid solution
is generated within the specified wall-clock budget.
}

\end{table*}

\paragraph{Sensitivity to the Evaluation Wall-Clock Budget.}
Table~\ref{tab:budget_sensitivity} examines the effect of the evaluation
wall-clock budget on the normalized gap achieved by different methods on
3D-CLP and FJSP. $\mathrm{DGA}_{2}\mathrm{D}$ consistently achieves the
best performance across all evaluated budget settings. Under restrictive
budgets, it produces valid and high-quality solutions substantially faster
than the comparison methods. On FJSP, $\mathrm{DGA}_{2}\mathrm{D}$ generates valid solutions within
both 0.5 and 1 seconds, whereas none of the baselines produces a valid
solution under these budgets.

As the available evaluation budget increases, the performance of
$\mathrm{DGA}_{2}\mathrm{D}$ generally improves up to 60 seconds, by
which point all evaluated methods have completed their computation.
Therefore, the results reported at 90, 180, and 300 seconds do not
represent additional search beyond the 60-second setting. The variations
in the results of $\mathrm{DGA}_{2}\mathrm{D}$ after 60 seconds arise
from stochastic differences across independent runs, whereas the
deterministic constructive baselines return identical results once their
executions have terminated.

Overall, these results demonstrate that $\mathrm{DGA}_{2}\mathrm{D}$
maintains a clear advantage under both restrictive and sufficiently large
evaluation budgets. It can rapidly generate valid and competitive
solutions, while completing its effective computation within 60 seconds
on both benchmarks.
\label{app:wall_clock_sensitivity}

\subsection{Out-of-Distribution Generalization}
\label{app:ood_generalization}
We evaluate the out-of-distribution generalization ability of DGA$_2$D on the Hurink FJSP benchmark using three subsets, namely \textit{edata}, \textit{rdata}, and \textit{vdata}. Each subset contains 66 instances and is characterized by a different flexibility ratio. DGA$_2$D is trained independently on each subset and subsequently evaluated on all three subsets, resulting in nine train--test configurations. The diagonal entries represent in-distribution performance, whereas the off-diagonal entries measure cross-domain generalization under changes in instance flexibility. Each configuration is independently repeated 10 times, and the results are reported as the mean $\pm$ standard deviation.

\begin{table}[htbp]
    \centering
    \caption{In-distribution and cross-domain performance of 
$\mathrm{DGA}_{2}\mathrm{D}$ on the Hurink benchmark. Results are reported 
as the mean $\pm$ standard deviation over 10 independent runs, with lower 
values indicating better performance.}
    \label{tab:id_ood_setting}
    \begin{tabular}{lccc}
        \toprule
        \textbf{Training Domain}
        & \textbf{Test-E}
        & \textbf{Test-R}
        & \textbf{Test-V} \\
        \midrule
        Train-E & 18.97$\pm$0.19 
        & 25.88$\pm$0.15 
        & 25.12$\pm$0.19 \\
        Train-R & 15.92$\pm$0.40 
        & 11.17$\pm$0.20
        & 7.45$\pm$0.35 \\
        Train-V & 18.59$\pm$0.36
        & 20.00$\pm$0.69
        & 19.04$\pm$0.26  \\
        \bottomrule
    \end{tabular}
\end{table}
The reported metric is the normalized optimality gap (\%), for which
lower values indicate better performance. Training on \textit{rdata}
yields the best overall cross-domain performance, achieving the lowest
average gap across the three test domains. The solver trained on
\textit{edata} exhibits a pronounced performance degradation on Test-R
and Test-V, whereas the \textit{vdata}-trained solver shows relatively
stable but less competitive performance across the three domains. These
results suggest that cross-domain generalization is sensitive to the
flexibility distribution of the source domain, with \textit{rdata}
providing the most transferable training distribution among the three
subsets. The strong transfer performance further suggests that
$\mathrm{DGA}_{2}\mathrm{D}$ can discover scheduling strategies that
remain effective under changes in instance flexibility, rather than
merely fitting distribution-specific statistics.
\subsection{Performance at Different Convergence Stages}
To further examine the search efficiency and long-horizon evolutionary behavior of DGA$_2$D, we analyze its convergence from two complementary perspectives. Figure~\ref{fig:convergence_stages} reports the evolution of the best-so-far optimality gap on CVRP across the early, middle, and late stages, while Figure~\ref{fig:budget_convergence} compares DGA$_2$D with representative AHD baselines on FJSP under two computational budgets: output tokens and the number of LLM calls.

\begin{figure}[htbp]
    \centering 
    \includegraphics[width=0.7\columnwidth]{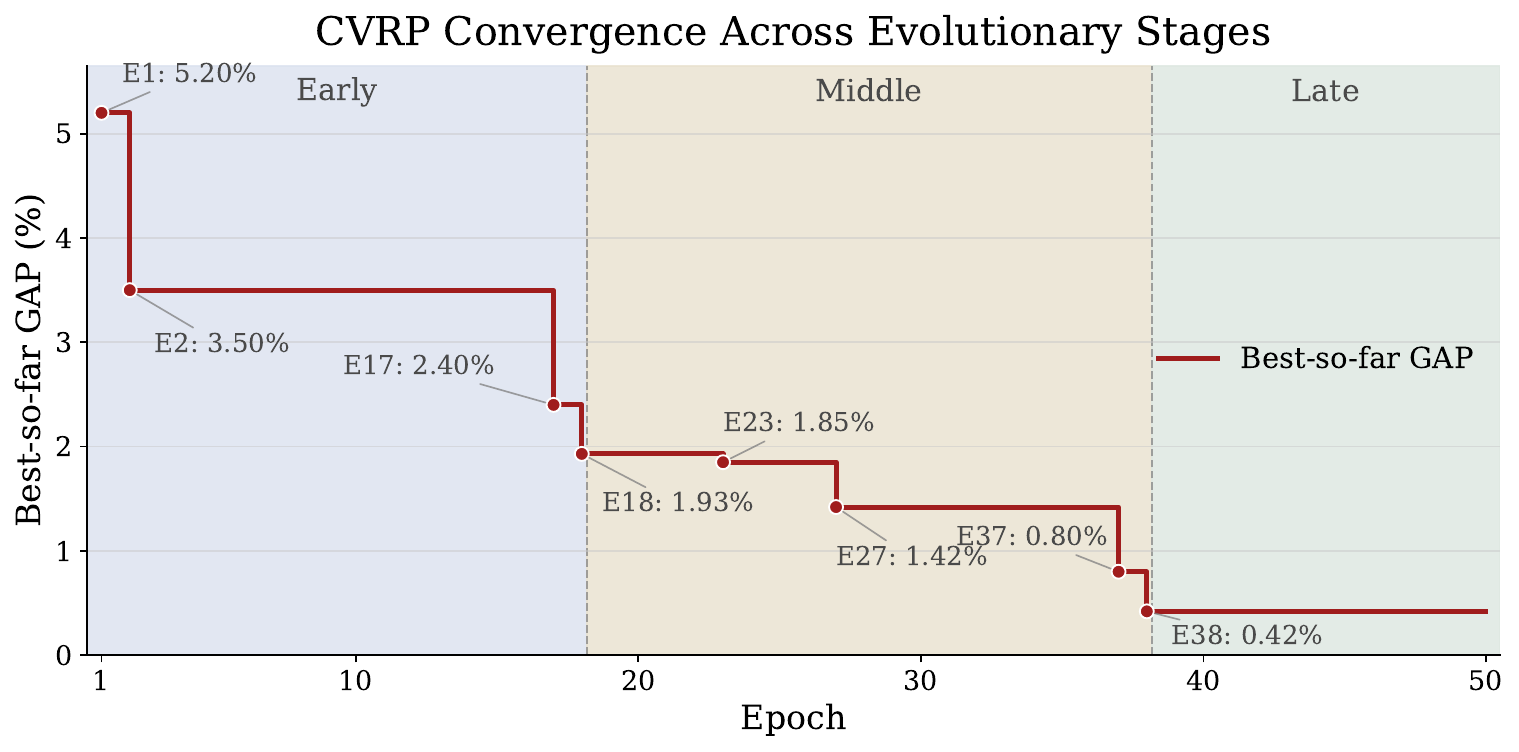}
    \caption{Stage-wise convergence of DGA$_2$D on CVRP, showing the best-so-far optimality gap across the early, middle, and late evolutionary stages.}
\label{fig:convergence_stages}
\end{figure}

As shown in Figure~\ref{fig:convergence_stages}, DGA$_2$D achieves a rapid reduction in the optimality gap during the early stage, decreasing from $5.20\%$ at epoch~1 to $1.93\%$ at epoch~18. This sharp initial improvement suggests that the search can quickly identify promising algorithmic structures. During the middle stage, the gap is progressively reduced from $1.93\%$ to $0.42\%$, with intermediate improvements to $1.85\%$, $1.42\%$, and $0.80\%$ at epochs~23, 27, and 37, respectively, followed by a final decrease to $0.42\%$ at epoch~38. The gap then remains unchanged throughout the late stage, indicating that the search has converged. Overall, the trajectory reflects rapid early-stage exploration, sustained middle-stage refinement, and stable late-stage convergence.
\begin{figure}[htbp]
    \centering 
    \includegraphics[width=0.8\columnwidth]{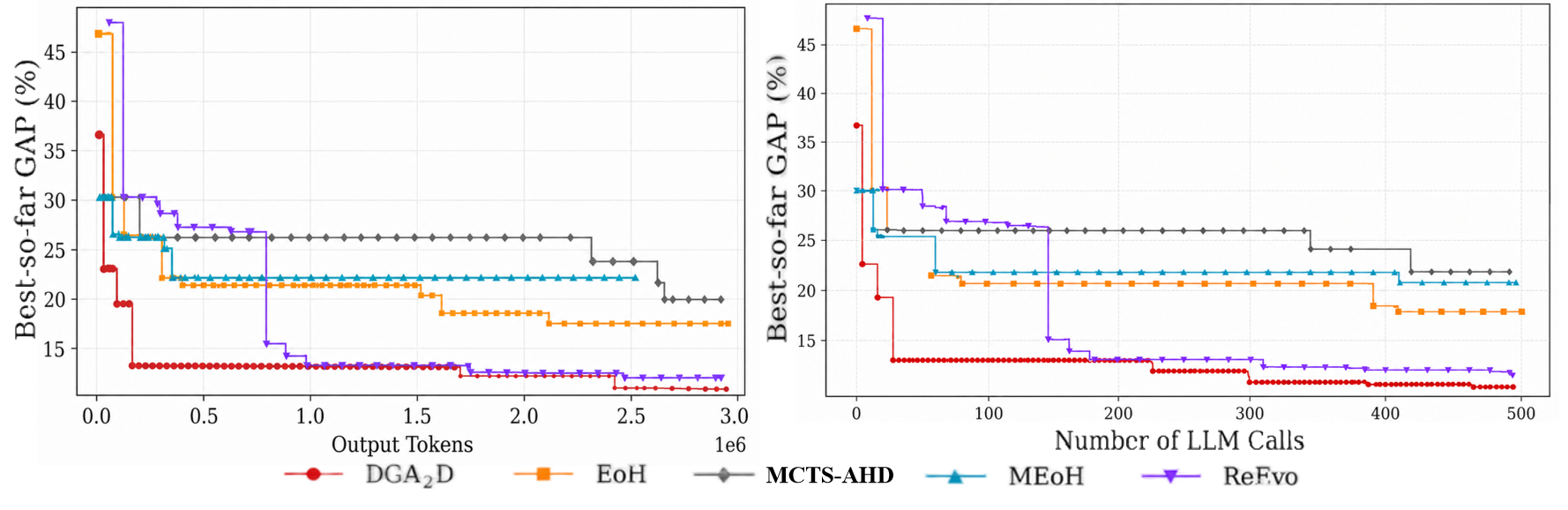}
    \caption{Convergence of the best-so-far optimality gap on FJSP with respect to output tokens (left) and the number of LLM calls (right). Lower values indicate better solution quality. DGA$_2$D exhibits faster convergence and achieves a lower final gap than EoH, MCTS-AHD, MEoH, and ReEvo under both budget measures.}
\label{fig:budget_convergence}
\end{figure}
To complement this stage-wise analysis, Figure~\ref{fig:budget_convergence} presents the full convergence trajectories of different methods with respect to output tokens and LLM calls. DGA$_2$D exhibits a sharp initial decrease under both budget measures and reaches a competitive gap substantially earlier than the baselines. Although ReEvo continues to improve with additional evaluations, it requires a considerably larger budget to approach a similar performance level and still converges to a higher final gap. In contrast, EoH, MCTS-AHD, and MEoH enter relatively high plateaus at earlier stages. These results indicate that DGA$_2$D combines efficient early-stage discovery with sustained long-horizon refinement, yielding both faster convergence and better final solution quality.
\subsection{Resource Consumption Analysis}
Table~\ref{tab:computational_resources} summarizes the wall-clock time and
total token usage required to design heuristics for TSP, FJSP, MIS, and
3D-CLP. Among the evaluated baselines, MCTS-AHD is the only method that
employs an explicit structured search mechanism broadly comparable to that
of DGA$_2$D, making it the most relevant architectural reference for resource
comparison.

\begin{table}[htbp]
\centering
\caption{\textbf{Analysis of computational resources.}
Wall-clock time in hours and total token usage in millions for
heuristic design across four optimization problems.}
\label{tab:computational_resources}
\small
\setlength{\tabcolsep}{8pt}
\renewcommand{\arraystretch}{1.05}
\begin{tabular}{llcccc}
\toprule
\textbf{Method} & \textbf{Metric}
& \textbf{TSP}
& \textbf{FJSP}
& \textbf{MIS}
& \textbf{3D-CLP} \\
\midrule

\multirow{2}{*}{EoH}
& Time (h)    & 0.35 & 0.65 & 0.63 & 0.32 \\
& Tokens (M)  & 1.04 & 2.76 & 1.16 & 0.82 \\
\midrule

\multirow{2}{*}{MCTS-AHD}
& Time (h)    & 2.08 & 3.88 & 4.65 & 3.24 \\
& Tokens (M)  & 1.26 & 1.13 & 1.10 & 1.20 \\
\midrule

\multirow{2}{*}{ReEvo}
& Time (h)    & 0.92 & 0.97 & 1.01 & 0.7 \\
& Tokens (M)  & 2.10 & 3.33 & 1.35 & 1.36 \\
\midrule

\multirow{2}{*}{MEoH}
& Time (h)    & 0.27 & 0.67 & 0.39 & 0.49 \\
& Tokens (M)  & 1.03 & 2.14 & 1.25 & 1.03 \\
\midrule

\multirow{2}{*}{\textbf{DGA2D (Ours)}}
& Time (h)    & 1.43 & 2.90 & 3.38 & 2.35 \\
& Tokens (M)  & 0.56 & 1.74 & 0.81 & 0.45 \\
\bottomrule
\end{tabular}
\end{table}

Compared with MCTS-AHD, DGA$_2$D reduces the total wall-clock time from
$13.85$ to $10.06$ hours and the total token consumption from $4.69$M to
$3.56$M, corresponding to reductions of approximately $27.4\%$ and $24.1\%$,
respectively. DGA$_2$D requires less time on all four problems and consumes
fewer tokens on TSP, MIS, and 3D-CLP, although MCTS-AHD uses fewer tokens on
FJSP.

The relatively low token consumption of DGA$_2$D can be attributed to its
localized optimization strategy. Rather than repeatedly regenerating or
revising an entire heuristic program, DGA$_2$D selectively updates individual
operator implementations and local structural transitions while preserving
the remaining components. In contrast, the other evolutionary AHD frameworks
generally perform search at the whole-heuristic or population level, which
requires broader program synthesis and revision and may consequently incur
more extensive LLM interactions. Localized refinement therefore allows
DGA$_2$D to reuse validated components and avoid unnecessary full-program
regeneration.

Compared with EoH, ReEvo, and MEoH, DGA$_2$D incurs a higher wall-clock cost
because of the additional computation required for structured policy search,
credit assignment, and local program maintenance. Nevertheless, it achieves
the lowest aggregate token consumption among all evaluated methods. Overall,
DGA$_2$D trades additional local computation for reduced LLM usage, providing
a favorable balance between structured search effectiveness and computational
resource consumption.
\section{Extension to Continuous Numerical Tasks}
\label{app:continuous_optimization}

To examine whether the proposed framework is applicable beyond
combinatorial optimization, we further extend
$\mathrm{DGA}_{2}\mathrm{D}$ to three classes of continuous numerical
tasks: ordinary differential equation integration, feedback control,
and numerical root-finding. These tasks differ substantially from the
combinatorial optimization benchmarks in their solution
representations, operator interfaces, objectives, and numerical
stability requirements.

For each benchmark problem within a task class,
$\mathrm{DGA}_{2}\mathrm{D}$ is instantiated with the corresponding task
interface and operator repository, and a separate algorithm is evolved
specifically for that problem. Therefore, these
experiments are intended to evaluate the cross-domain applicability
of the proposed automated design framework, rather than zero-shot
transfer of a single evolved algorithm across different domains.
Unless otherwise specified, higher scores indicate better
performance, while large negative values denote unsuccessful or
invalid solutions.

\subsection{Discovery of Numerical ODE Solvers}
\label{app:solving_ode}

To evaluate the capability of DGA$_2$D in automatically discovering numerical
algorithms for continuous dynamical systems, we construct a comprehensive
benchmark consisting of seven differential-equation systems with diverse
numerical properties, including non-stiff, stiff, chaotic, and
high-dimensional dynamics.

\paragraph{Experimental Setup.}
We evaluate the methods on seven differential-equation systems with diverse
numerical properties: Lotka--Volterra, Van der Pol, Robertson, Lorenz, a stiff
linear system, a two-dimensional heat equation, and a convection--diffusion
equation. The benchmark covers non-stiff, nonlinear, stiff, and chaotic
dynamics, as well as high-dimensional systems obtained from the spatial
discretization of partial differential equations. For each system, one
instance is used for algorithm design and four instances are used for testing.
The nominal step size is \(h=0.05\), with
integration intervals \([0,10]\) for the standard ODE systems and \([0,2]\)
for the PDE-derived systems. The stiff linear system has dimension \(15\),
while Heat2D and ConvDiff use \(10\times10\) and \(100\)-point spatial
discretizations, respectively. High-accuracy terminal states are used as
reference solutions. Following the same evaluation protocol, the performance
score is defined as
$S=-\log_{10}(\mathrm{Error})-\lambda\log_{10}(\mathrm{Cost})$, where
$\mathrm{Error}$ measures the numerical integration error and
$\mathrm{Cost}$ is the number of right-hand-side function evaluations. We set
$\lambda=0.1$ for the standard ODE systems and $\lambda=0.05$ for the two
high-dimensional PDE-derived systems. Higher scores indicate better
performance, and a score of $-20$ is assigned when a solver diverges or fails
to produce a valid solution. The average score is computed over all seven
systems, including failure penalties.
\paragraph{Baselines.}
We compare DGA$_2$D and EoH against five standard numerical integrators covering
different solver families. Euler and fourth-order Runge--Kutta (RK4) represent
classical explicit fixed-step methods. DOP853 is a high-order explicit adaptive
solver, while LSODA automatically switches between non-stiff and stiff
integration schemes. Radau is an implicit adaptive solver designed for stiff
differential equations. Together, these baselines provide comparisons against
both computationally inexpensive explicit methods and robust adaptive solvers
commonly used for challenging stiff systems.
\paragraph{Results Analysis.}
Table~\ref{tab:ode_results} shows that DGA$_2$D achieves the strongest overall
performance across the seven-system benchmark. It obtains the highest score on
five systems and ties with RK4 on Lotka--Volterra. On the remaining Heat2D
problem, DGA$_2$D achieves a score of $14.81$, only slightly below the best
score of $14.89$ obtained by EoH. Its average score reaches $13.79$, substantially
exceeding Radau's $7.56$, the best result among the conventional numerical
solvers. In contrast, the explicit fixed-step methods frequently receive
failure penalties on stiff, chaotic, and spatially discretized systems,
whereas DOP853, LSODA, and Radau exhibit greater robustness. EoH displays more
task-dependent performance: it performs particularly well on Heat2D and
remains competitive on the stiff linear and convection--diffusion systems, but
its performance decreases considerably on Van der Pol, Robertson, and Lorenz.
These results demonstrate DGA$_2$D's ability to discover effective integration
strategies across substantially different numerical regimes.

\begin{table}[htbp]
    \centering
    \caption{
    Performance on ordinary differential equation solving tasks.
    Higher values indicate better performance and $-20$ indicates divergence.
    Failure penalties are included when computing the average.
    }
    \label{tab:ode_results}
    \resizebox{\textwidth}{!}{
    \begin{tabular}{lcccccccc}
        \toprule
        \textbf{Method}
        & \textbf{Lotka--Volterra}
        & \textbf{Van der Pol}
        & \textbf{Robertson}
        & \textbf{Lorenz}
        & \textbf{Linear Stiff}
        & \textbf{Heat2D}
        & \textbf{ConvDiff}
        & \textbf{Average} \\
        \midrule
        Euler
        & 0.76 & -10.00 & -20.00 & -20.00
        & -20.00 & -20.00 & -20.00 & -15.61 \\

        RK4
        & \textbf{6.98} & 3.21 & -20.00 & -20.00
        & -20.00 & -20.00 & -20.00 & -12.83 \\

        DOP853
        & 5.31 & 4.85 & 7.03 & 2.33
        & 8.48 & 9.48 & 6.72 & 6.31 \\

        LSODA
        & 3.95 & 3.40 & 5.69 & 1.33
        & 7.78 & 10.33 & 6.01 & 5.50 \\

        Radau
        & 6.26 & 6.59 & 7.25 & 3.38
        & 9.91 & 12.18 & 7.36 & 7.56 \\

        \midrule
        DGA$_2$D
        & \textbf{6.98} & \textbf{14.74}
        & \textbf{14.79} & \textbf{14.72}
        & \textbf{15.65} & 14.81
        & \textbf{14.84} & \textbf{13.79} \\

        EoH
        & 5.49 & 3.21 & 1.38 & 0.75
        & 8.30 & \textbf{14.89}
        & 8.64 & 6.09 \\
        \bottomrule
    \end{tabular}
    }
\end{table}

\subsection{Discovering Feedback Control Policies}
\label{app:feedback_control}

To evaluate DGA$_2$D's capability in discovering feedback-control policies,
we constructed a benchmark covering dynamical systems with
distinct control characteristics and stability requirements.

\paragraph{Experimental Setup.}
We considered three representative feedback-control tasks: a double integrator,
a pendulum swing-up system, and a cart--pole system. These environments cover
linear motion control, nonlinear stabilization, and coupled underactuated
dynamics, respectively. For each system, one instance is used for algorithm
design and \(30\) instances are used for testing. Trajectories are simulated
with \(\Delta t=0.02\) over a horizon of \(T=100\). Control inputs are bounded
by \([-5,5]\), \([-8,8]\), and \([-20,20]\) for the double integrator,
pendulum, and cart--pole systems, respectively. The test instances vary the
initial states, with additional physical-parameter perturbations of up to
\(20\%\) for the pendulum and cart--pole systems. To jointly evaluate tracking
performance and control effort, we use the cumulative control objective
\[
J=
\sum_t
\left[
\lVert x_t-x^\ast\rVert_Q^2
+
0.01\lVert u_t\rVert_2^2
\right],
\qquad
S=-\log_{10}(1+J),
\]
where \(x_t\) and \(u_t\) denote the system state and control input at time step
\(t\), respectively, \(x^\ast\) is the target state, \(Q\) is the identity
weighting matrix, and the coefficient \(0.01\) balances tracking accuracy
against control effort. Higher scores indicate better performance, with values
closer to zero corresponding to lower cumulative control cost. Invalid or
divergent trajectories receive a fixed failure penalty.

\paragraph{Baselines.}
We compared DGA$_2$D and EoH against a diverse collection of conventional
feedback-control strategies. PID represents a widely used error-feedback
controller that does not require an explicit dynamics model. Linear quadratic
regulation (LQR) provides a model-based optimal controller for approximately
linear systems, while sliding mode control (SMC) is designed to offer robustness
against nonlinearities and disturbances. Bang--bang control represents a simple
switching strategy based on maximum-magnitude control actions. We additionally
included energy shaping, which stabilizes nonlinear mechanical systems by
modifying their effective energy landscape. Together, these baselines cover
classical linear, nonlinear, switching, and energy-based control paradigms.

\begin{table}[htbp]
    \centering
    \caption{
    Performance of feedback control policies.
    Higher values, i.e., values closer to zero, indicate better performance.
    The best result in each column is shown in bold.
    }
    \label{tab:control_results}
    \begin{tabular}{lcccc}
        \toprule
        \textbf{Method}
        & \textbf{Double Integrator}
        & \textbf{Pendulum}
        & \textbf{Cart-Pole}
        & \textbf{Average} \\
        \midrule
        PID
        & -2.37
        & -3.99
        & -11.59
        & -5.98 \\

        LQR
        & -2.15
        & -2.83
        & -11.66
        & -5.55 \\

        SMC
        & -3.16
        & -3.46
        & -4.73
        & -3.78 \\

        Bang--Bang
        & -3.21
        & -3.58
        & -10.27
        & -5.69 \\

        Energy Shaping
        & -3.77
        & -3.21
        & -4.71
        & -3.90 \\

        \midrule
        DGA$_2$D
        & \textbf{-2.13}
        & \textbf{-2.74}
        & \textbf{-4.69}
        & \textbf{-3.19} \\

        EoH
        & -2.25
        & -2.83
        & -4.88
        & -3.32 \\
        \bottomrule
    \end{tabular}
\end{table}

\paragraph{Results Analysis.}
The comparative results in Table~\ref{tab:control_results} demonstrate that
DGA$_2$D achieves consistently strong performance across all three control
systems. It obtains the best score on the double-integrator, pendulum, and
cart--pole tasks, with scores of $-2.13$, $-2.74$, and $-4.69$, respectively.
Its average score is $-3.19$, outperforming EoH at $-3.32$ and SMC at $-3.78$,
the strongest conventional baseline on average. The performance margin is
relatively small on the double integrator, where DGA$_2$D slightly exceeds LQR,
and on cart--pole, where it remains close to energy shaping and SMC. Nevertheless,
DGA$_2$D is the only method that ranks first on every individual task.
Conventional controllers exhibit more pronounced task-specific behavior: LQR
performs well on the double integrator and pendulum but degrades substantially
on cart--pole, whereas SMC and energy shaping are more competitive on cart--pole
but weaker on the other systems. EoH provides comparatively stable performance
and ranks second overall, although it does not achieve the best result on any
single task. These findings demonstrate that DGA$_2$D can discover competitive 
feedback-control policies for systems with substantially different dynamical characteristics.

\subsection{Discovery of Numerical Root-Finding Algorithms}
\label{app:root_finding}

To evaluate DGA$_2$D's capability in discovering numerical algorithms for
continuous nonlinear equations, we constructed a comprehensive benchmark
covering root-finding problems with diverse conditioning properties, function
geometries, and input dimensions.

\paragraph{Experimental Setup.}
The benchmark consists of four representative problem families: a
\emph{Smooth Polynomial} problem with a well-behaved differentiable landscape,
an \emph{Ill-Conditioned} problem characterized by weak or vanishing
derivatives, an \emph{Oscillatory} problem containing multiple local
variations and candidate roots, and a \emph{Two-Dimensional Nonlinear System}
requiring a vector-valued solution. The four families are defined as
\[
\begin{aligned}
f_{\mathrm{smooth}}(x)
&=(x-r)(a_0+a_1x+a_2x^2),\\
f_{\mathrm{ill}}(x)
&=(x-r)^m(a_0+a_1x+a_2x^2),\\
f_{\mathrm{osc}}(x)
&=\sin\!\bigl(\omega(x-r)\bigr)+\alpha(x-r),\\
F_{\mathrm{2D}}(x)
&=
\begin{bmatrix}
x_1^2+x_2-r_1\\
x_1+x_2^2-r_2
\end{bmatrix},
\qquad x=(x_1,x_2)^\top.
\end{aligned}
\]
For the smooth and ill-conditioned families, \(r\in[-5,5]\) and
\(a_j\in[-2,2]\), with \(m\in\{2,3,4\}\) for the latter. For the oscillatory
family, \(r\in[-3,3]\), \(\omega\in[2,10]\), and
\(\alpha\in[0.01,0.5]\); for the two-dimensional family,
\(r_1,r_2\in[-1.5,1.5]\). Initial guesses are generated by applying random
offsets to the known reference roots. For each family, one instance is used
for algorithm design and four instances are used for testing. All methods use
a residual tolerance of \(10^{-8}\). These tasks jointly evaluate convergence
reliability, numerical efficiency, robustness to poor conditioning, and
generalization from scalar equations to coupled nonlinear systems. We define
the following efficiency-aware score:
\[
S=1-0.05\ln(1+N_{\mathrm{fev}})-0.02\ln(1+N_{\mathrm{iter}}),
\]
where \(N_{\mathrm{fev}}\) and \(N_{\mathrm{iter}}\) denote the numbers of
function evaluations and solver iterations, respectively. Thus, the score
rewards methods that require fewer function evaluations and solver iterations.
Higher values indicate better performance.
Unsuccessful or inapplicable solutions are reported as ``--'' in
Table~\ref{tab:root_results}.
\paragraph{Baselines.}
We compare DGA$_2$D and EoH against a broad collection of classical numerical
root-finding methods. For scalar equations, we include robust bracketing
methods such as Bisection and Brent's method; derivative-based methods such as
Newton's method and Halley's method; and derivative-free or accelerated
iterations including the Secant and Steffensen methods. For the
two-dimensional nonlinear system, we consider Damped Newton, Broyden's method,
Anderson acceleration, and SciPy's hybrid solver. These baselines cover
bracketing, derivative-based, quasi-Newton, fixed-point acceleration, and
hybrid solution strategies, thereby providing a comprehensive comparison
across both scalar and multivariate root-finding settings.

\begin{table*}[htbp]
    \centering
    \caption{
    Results on numerical root-finding tasks. Higher is better. ``--'' indicates an unsuccessful or inapplicable solution. The best result in each column is highlighted in bold.
    }
    \label{tab:root_results}
    \begin{tabular}{lcccc}
        \toprule
        \textbf{Method}
        & \textbf{Smooth Polynomial}
        & \textbf{Ill-Conditioned}
        & \textbf{Oscillatory}
        & \textbf{Nonlinear System 2D} \\
        \midrule
        Bisection
        & 0.73
        & 0.04
        & 0.73
        & -- \\

        Brent
        & 0.80
        & 0.10
        & 0.80
        & -- \\

        Newton
        & 0.82
        & 0.77
        & 0.83
        & -- \\

        Secant
        & 0.83
        & 0.78
        & \textbf{0.86}
        & -- \\

        Steffensen
        & 0.81
        & 0.78
        & 0.78
        & -- \\

        Halley
        & 0.83
        & 0.78
        & 0.83
        & -- \\

        Newton (Damped)
        & --
        & --
        & --
        & 0.83 \\

        Broyden
        & --
        & --
        & --
        & 0.84 \\

        Anderson
        & --
        & --
        & --
        & 0.83 \\

        SciPy (hybr)
        & --
        & --
        & --
        & 0.82 \\

        \midrule
        DGA$_2$D
        & \textbf{0.86}
        & \textbf{0.85}
        & 0.84
        & \textbf{0.88} \\

        EoH
        & --
        & -0.58
        & -1.29
        & \textbf{0.88} \\
        \bottomrule
    \end{tabular}
\end{table*}

\paragraph{Results Analysis.}
Table~\ref{tab:root_results} demonstrates that DGA$_2$D achieves the most
consistent performance across all four root-finding settings. It obtains scores
of $0.86$, $0.85$, $0.84$, and $0.88$ on the Smooth Polynomial,
Ill-Conditioned, Oscillatory, and Two-Dimensional Nonlinear System tasks,
respectively. DGA$_2$D ranks first on the Smooth Polynomial and
Ill-Conditioned problems, ranks second only to the Secant method on the
Oscillatory problem, and ties with EoH for the best result on the
two-dimensional system.

The conventional solvers exhibit a clear separation between scalar and
multivariate applicability. Scalar methods perform strongly on the first three
problems, with derivative-based approaches generally offering high efficiency,
but they are inapplicable to the coupled two-dimensional system under the
unified evaluation setting. Conversely, Damped Newton, Broyden, Anderson
acceleration, and the hybrid solver successfully address the multivariate
problem but do not transfer to the scalar benchmark tasks. EoH also displays
strong task dependence: it achieves the best score on the two-dimensional
system but fails on the Smooth Polynomial problem and obtains negative scores
on the Ill-Conditioned and Oscillatory tasks. In contrast, DGA$_2$D consistently
discovers effective root-finding algorithms across variations in conditioning,
function geometry, and dimensionality.

Overall, these experiments demonstrate the cross-domain applicability of
DGA$_2$D as an automated algorithm-design framework. Although a separate
algorithm is evolved for each benchmark problem within each task class,
DGA$_2$D consistently discovers competitive task-specific methods across 
ODE integration, feedback control, and numerical root finding. These results 
therefore support framework-level transferability across continuous numerical 
domains, rather than zero-shot transfer of a single evolved algorithm.

\end{document}